\newtheorem{theorem}{Theorem}
\newtheorem{definition}{Definition}
\newtheorem{lemma}{Lemma}
\newtheorem{remark}{Remark}
\newtheorem{proposition}{Proposition}
\newtheorem*{theorem*}{Theorem}
\newtheorem*{example*}{Example} 
\newtheorem*{definition*}{Definition}
\newtheorem*{lemma*}{Lemma}
\newtheorem*{remark*}{Remark}
\newtheorem*{corollary*}{Corollary}
\newtheorem*{proposition*}{Proposition}
\newtheorem*{assumption*}{Assumption}
\newtheorem*{claim*}{Claim}
\newtheoremstyle{TheoremNum}
        {\topsep}{\topsep}              %%% space between body and thm
        {\itshape}                      %%% Thm body font
        {}                              %%% Indent amount (empty = no indent)
        {\bfseries}                     %%% Thm head font
        {.}                             %%% Punctuation after thm head
        { }                             %%% Space after thm head
        {\thmname{#1}\thmnote{ \bfseries #3}}%%% Thm head spec
\theoremstyle{TheoremNum}
\newtheoremstyle{LemmaNum}
        {\topsep}{\topsep}              %%% space between body and thm
        {\itshape}                      %%% Thm body font
        {}                              %%% Indent amount (empty = no indent)
        {\bfseries}                     %%% Thm head font
        {.}                             %%% Punctuation after thm head
        { }                             %%% Space after thm head
        {\thmname{#1}\thmnote{ \bfseries #3}}%%% Thm head spec
\theoremstyle{LemmaNum}
\newsavebox\mysavebox
\newcommand{\pgftextcircled}[1]{
    \setbox0=\hbox{#1}%
    \dimen0\wd0%
    \divide\dimen0 by 2%
    \begin{tikzpicture}[baseline=(a.base)]%
        \useasboundingbox (-\the\dimen0,0pt) rectangle (\the\dimen0,1pt);
        \node[circle,draw,outer sep=0pt,inner sep=0.1ex] (a) {#1};
    \end{tikzpicture}
}
\let\textcircled=\pgftextcircled
\newcommand{\e}{ \bm{ e } } 
\newcommand{\tr}{\mathsf{T}}
\renewcommand{\S}{ \mathbb{ S } }
\renewcommand{\H}{ \mathrm{ H } }
\renewcommand{\O}{\mathcal{O}}
\newcommand{\R}{\mathbb{R}}
\newcommand{\B}{\mathbb{B}}
\newcommand{\E}{\mathbb{E}}
\renewcommand{\[}{\left[ }
\renewcommand{\]}{\right] }
\newcommand{\<}{\left< }
\renewcommand{\>}{\right> }
\renewcommand{\(}{\left( }
\renewcommand{\)}{\right) }
\newcommand{\wt}{\widetilde }
\newcommand{\wh}{\widehat }
\begin{document} 

\title{Stochastic Zeroth Order Gradient and Hessian Estimators: Variance Reduction and Refined Bias Bounds} 

\author{Yasong Feng\footnote{ysfeng20@fudan.edu.cn} \quad and \quad Tianyu Wang\footnote{wangtianyu@fudan.edu.cn}}  

\date{} 

\maketitle

\begin{abstract}
    We study stochastic zeroth order gradient and Hessian estimators for real-valued functions in $\mathbb{R}^n$. We show that, via taking finite difference along random orthogonal directions, the variance of the stochastic finite difference estimators can be significantly reduced. 
    In particular, we design estimators for smooth functions such that, if one uses $ \Theta \left( k \right) $ random directions sampled from the Stiefel manifold $ \text{St} (n,k) $ and finite-difference granularity $\delta$, the variance of the gradient estimator is bounded by $ \mathcal{O} \left( \left( \frac{n}{k}  - 1 \right) + \left( \frac{n^2}{k}  - n \right) \delta^2 + \frac{ n^2 \delta^4 }{ k } \right) $, and the variance of the Hessian estimator is bounded by $\mathcal{O} \left( \left( \frac{n^2}{k^2} - 1 \right) + \left( \frac{n^4}{k^2} - n^2 \right) \delta^2 + \frac{n^4 \delta^4 }{k^2} \right) $. When $k = n$, the variances become negligibly small. 
    In addition, we provide improved bias bounds for the estimators. The bias of both gradient and Hessian estimators for smooth function $f$ is of order 
    $\mathcal{O} \left( \delta^2 \Gamma \right)$,
    where $\delta$ is the finite-difference granularity, and $ \Gamma $ depends on high order derivatives of $f$. 
    Our results are evidenced by empirical observations.
\end{abstract}

\section{Introduction} 

Since Newton's time, people have been using finite difference principles to estimate derivatives. This classic problem has recently revived, as tasks of stochastic derivative estimation in high dimension become prevalent. 

Various bias bounds have been derived for gradient and Hessian estimators \citep[e.g.,][]{flaxman2005online,nesterov2017random,balasubramanian2021zeroth,wang2022hess}. 
Yet the statistical convergence to these bias bounds is slow due to large variance, especially in high dimensional spaces. 
% As we will soon see, for the task of gradient and Hessian estimation, variance can be more important than bias. 
The bias of a gradient estimator $ \wh{\nabla} f (x) $ is 
\begin{align*} 
    \left\| \E \[ \wh{\nabla} f (x) \] - \nabla f (x) \right\|, 
\end{align*} 
and the bias of a Hessian estimator is similarly defined. 
% Despite several bias bounds of small order \citep[][]{flaxman2005online,wang2022hess}, 
% the statistical convergence to these bounds is slow due to large variance, especially when the dimension is large. 
In practice, the estimation error of $  \wh{\nabla} f (x) $ is measured by $ \left\| \wh{\nabla} f (x) - \nabla f (x) \right\| $ (similarly for the Hessian counterpart). Unless the estimator is highly concentrated around its expectation, the theoretical bias bound may not be aligned with the empirical observations. This discrepancy calls for careful study on the variance and variance reduction for the estimators. 
To this end, we introduce variance-reduced methods for stochastic zeroth order gradient and Hessian estimation, and provide performance guarantees for these methods. For estimating the gradient of a function $f$ in $ \R^n $, we propose to uniformly sample a matrix $ [ v_1, v_2, \cdots, v_k ] $ from the real Stiefel manifold $ \text{St} (n,k) := \{ X \in \R^{n\times k}: X^\top X = I \} $, and estimate the gradient of $f$ at $x$ by 
\begin{align} 
    \wh{\nabla}  f_k^\delta (x) := \frac{n}{ 2 \delta k} \sum_{ i=1 }^k \( f (x + \delta v_i) - f (x - \delta v_i) \) v_i , \label{eq:def-grad-est} 
\end{align} 
where $\delta$ is the finite difference granularity.
When $k=1$, sampling is over the unit sphere and (Eq. \ref{eq:def-grad-est}) reduces to the estimator introduced by \citet{flaxman2005online}.

We show that the variance of (Eq. \ref{eq:def-grad-est}) for a $ (3,L_3) $-smooth (See Definition \ref{def:smooth}) function $f$ satisfies 
\begin{align*} 
    &\; \E \[ \left\| \wh{\nabla} f_k^\delta (x) - \E \[ \wh{\nabla}  f_k^\delta (x) \] \right\|^2 \] \\
    \le& \;  
    \( \frac{n}{k}  - 1 \) \| \nabla f (x) \|^2 + \frac{L_3 \delta^2}{3} \( \frac{n^2}{k}  - n \) \| \nabla f (x) \| + \frac{L_3^2 n^2 \delta^4 }{36 k } , \quad \forall x \in \R^n, 
\end{align*} 
where $\| \cdot \|$ is the Euclidean norm. 
When $k = n$, the variance of the estimator becomes negligibly small. 

% We show that the variance of the gradient estimator for $(3,L_3)$-smooth functions (See Definition \ref{def:smooth}) satisfies  
% \begin{align*}
%     A vector 
% \end{align*} 

For estimating the Hessian of a function $f$ in $ \R^n $, we propose to independently uniformly sample two matrices $ [ v_1, v_2, \cdots, v_k ] $ and $ [ w_1, w_2, \cdots, w_k ] $ from the Stiefel manifold $ \text{St} (n,k) $ and estimate the Hessian of $f$ at $x$ by 
\begin{align} 
    \wh{\H}f_k^\delta (x) 
    :=&\; 
    \frac{n^2}{ 8 \delta^2 k^2 } \sum_{ i,j=1 }^k \( f (x + \delta v_i + \delta w_j ) - f (x - \delta v_i + \delta w_j) - f (x + \delta v_i - \delta w_j) + f (x - \delta v_i - \delta w_j) \) \nonumber \\ 
    &\qquad \qquad  \cdot ( v_i w_j^\top + w_j v_i^\top ) ,  \label{eq:def-hess-est} 
\end{align}
where $\delta$ is the finite difference step size. When $k=1$, the sampling is over the unit sphere and (Eq. \ref{eq:def-hess-est}) reduces to the one introduced by the second author \citep{wang2022hess}. 
% This Hessian estimator generalizes the . 

% Also, t
The variance of (Eq. \ref{eq:def-hess-est}) for a $ (4,L_4) $-smooth and $ (6,L_6) $-smooth (See Definition \ref{def:smooth}) function $f$ satisfies, for all $x \in \R^n$, 
\begin{align*} 
    % &\; \E \[ \left\| \wh{f}_k^\delta (x) - \E \[ \wh{f}_k^\delta (x) \] \right\|^2 \] \\
    % \le& \;  
    % \( \frac{n}{k}  - 1 \) \| \nabla f (x) \|^2 + \frac{L_3 \delta^2}{3} \( \frac{n^2}{k}  - n \) \| \nabla f (x) \| + \frac{L_3^2 n^2 \delta^4 }{36 k } , \quad \forall x \in \R^n.
    &\; \E \[ \left\| \wh{\H} f_k^\delta (x) - \E \[ \wh{\H} f_k^\delta (x) \] \right\|_F^2 \] \\
        \le& \; 
        \left\| \nabla^2 f (x) \right\|_F^2 \( \frac{n^2}{k^2} - 1 \) + 2 \delta^2 L_4 \left\| \nabla^2 f (x) \right\| \( \frac{n^4}{k^2} - n^2 \) + \mathcal{O} \( \( L_6 n^2 \| \nabla^2 f (0) \|  + \frac{n^4 L_4^2}{k^2} \) \delta^4 \) , 
\end{align*} 
where $ \| \cdot \|_F $ is the Frobenius norm, and $\| \cdot \|$ is the spectral norm. 
Similar to the gradient case, the variance of (Eq. \ref{eq:def-hess-est}) becomes negligibly small when $k = n$. 

In addition, the above estimators do not sacrifice any bias accuracy. The bias of (Eq. \ref{eq:def-grad-est}) for sufficiently smooth $f$ at $x$ is of order 
\begin{align*}
    \mathcal{O} \(  \delta^2 \Gamma_3 (x) \),
\end{align*} 
    where $ \Gamma_3 (x) $ depends on the third order derivatives of $f$ at $x$.
    % \begin{enumerate}[label=(\alph*)]
    %     \item If $ f $ is $(1, L_1)$-smooth, then for all $ x \in \R^n $, $ \left\| \E \[ \wh{\nabla}  f_k^\delta (x)  \] - \nabla f (x) \right\| \le \frac{L_1 n  \delta}{n+1} $.  
    %     \item If $f$ is $(3,L_3)$-smooth, then for all $x\in\mathbb{R}^n$, the bias of gradient estimator satisfies
    %         $$\left\|\E [\wh{\nabla}  f_k^\delta (x)]-\nabla f(x)\right\|\leq\frac{\delta^2}{2n}\sqrt{\sum_{i=1}^n\left(\sum_{j=1}^nF_{jji}\right)^2} + \frac{\delta^3L_3n}{24},$$
    %     where $F_{ijl}$ denotes the $(i,j,l)$-component of $\partial^3 f(x)$.
    % \end{enumerate} 
Similar results hold for the Hessian estimator. The bias of (Eq. \ref{eq:def-hess-est}) for sufficiently smooth $f$ at $ x $ is of order 
\begin{align*} 
    \mathcal{O} \( \delta^2 \Gamma_4 (x) \) ,
\end{align*}
where $ \Gamma_4 (x) $ depends on the fourth order derivatives of $f$ at $x$. These refined bias bounds improve best previous results on bias of the estimators \citep[][]{flaxman2005online,wang2022hess}. 

% \textcolor{red}{ 
\begin{remark} 
    % Comment on the tensor norm thing. 
    The bias bound of the Hessian estimator depends on the fourth-order total derivative of the function, which is a 4-linear form (or a $(4, 0)$-tensor). More specifically, this estimation bias depends on a special norm of the fourth-order total derivative. See more discussions after Theorem \ref{thm:hess-bias}. 
\end{remark} 
% } 

\noindent\textbf{Theory Meets Practice} 

% The variance plays an important role in bridging the empirical error and the theoretical bias. 
In practice, the observed errors are highly aligned with our theoretical bounds. 
% In Figure \ref{fig:intro}, we plot the errors of the estimators as $k$ increases from $ 1 $ to $n$. The curve of the observed errors is highly aligned with the curve of the variance bounds. 
The expected error of the gradient estimator $ \wh{\nabla}  f_k^\delta (x) $ can be bounded by 
\begin{align}
    \E \[ \underbrace{\left\| \wh{\nabla}  f_k^\delta (x) - \nabla f (x) \right\|}_{\text{``error''}} \] \nonumber
    \le& \;  
    \E \[ \left\| \wh{\nabla}  f_k^\delta (x) - \E \[ \wh{\nabla}  f_k^\delta (x) \] \right\| \] + \E \[ \left\| \E \[ \wh{\nabla}  f_k^\delta (x) \] - \nabla f (x) \right\| \] \\ 
    \le& \; 
    \sqrt{ \underbrace{ \E \[ \left\| \wh{\nabla}  f_k^\delta (x) - \E \[ \wh{\nabla}  f_k^\delta (x) \] \right\|^2 \]}_{\text{``variance''}}} + \underbrace{\left\| \E \[ \wh{\nabla}  f_k^\delta (x) \] - \nabla f (x) \right\|}_{\text{``bias''}} , \label{eq:error-decomp}
\end{align} 
which implies that the variance is critical in bridging the theoretical bias bounds and the practical performance. 
% Since the bias is small, one may naturally expect that the empirical error is dominated by the variance. 
In fact, the variance bound is highly aligned with the empirical error, as illustrated in Figure \ref{fig:intro}. 
% , we plot the errors of the estimators as $k$ increases from $ 1 $ to $n$. 
% The curve of the observed errors is highly aligned with the curve of the variance bounds. 

\begin{figure}[h!]
    \centering 
    \captionsetup{singlelinecheck=off}
    \includegraphics[scale = 0.8]{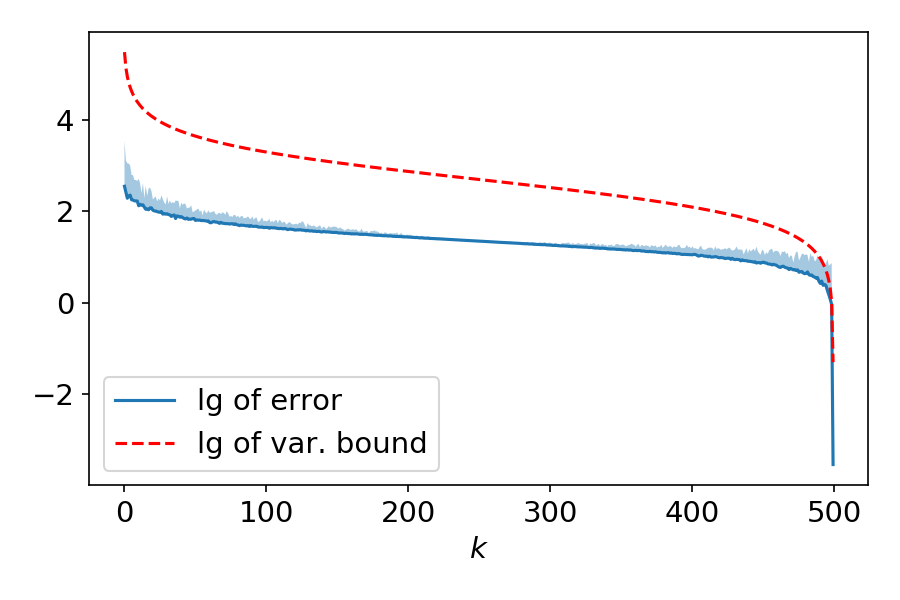} 
    \caption{ 
    Errors (Eq. \ref{eq:error-decomp}) of gradient estimators $ \wh{\nabla} f_k^\delta (x) $ with $k$ ranging from $1$ to $n$, in base-10 log-scale. Here $\delta = 0.1$ and $n = 500$. The underlying test function is $ f (x) = \exp ( (x_1 - 1) (x_2 + 2)) + \sum_{j=1}^{500} \sin (x_j)  $, where $x_j$ denotes the $j$-th component of vector $x$. The gradient is estimated at $x = 0$. The solid blue curve plots the  errors of gradient estimator in logarithmic scale, and is averaged over 10 runs. The shaded area above the solid curve shows 10 times standard deviation of the errors in logarithmic scale. The dashed red curve, as a function of $k$, is $ c (k) = \lg \( \| \nabla f (x) \|^2 \( \frac{n}{k} - 1 \) + \delta^2 \( \frac{n^2}{k} - n \) \| \nabla f (x) \| + \frac{ \delta^4 n^2 }{ k } \)$, which is the base-10 log of of variance bound for the gradient estimators (up to constants). The shapes of the two curves are highly aligned. More details are in Section \ref{sec:exp}. \label{fig:intro} 
    } 
\end{figure}

\noindent\textbf{``Better-Than-Definition'' Accuracy} 

In most numerical analysis textbooks, the default finite-difference gradient/Hessian estimator is the entry-wise estimator: We perform a 1-dimensional finite difference estimation for each entry of the gradient/Hessian, and gather all entries to output a gradient/Hessian estimator. Often times this method is considered the ``definition'' for the task of zeroth order gradient/Hessian estimation. 

As one would naturally expect, it is hard, if possible, to outperform this ``definition'' in an environment where $(i)$ one can sample as many zeroth-order function evaluations as she wants, and $(ii)$ all function evaluations are noise-free. Surprisingly, when $k=n$, our estimators (Eq. \ref{eq:def-grad-est}) and (Eq. \ref{eq:def-hess-est}) \emph{can outperform} the entry-wise estimators (the ``definition''). Some numerical comparisons between our estimators and the entry-wise estimators are in Table \ref{tab:intro}, and more details can be found in Section \ref{sec:exp}. 
% Surprisingly, one can achieve accuracy better than this definition estimator, by setting $k = n$ in (\ref{eq:def-grad-est}) and (\ref{eq:def-hess-est}). 

% In modern tasks, 

Note that this observation is not in conflict with previous works \citep{flaxman2005online,wang2021GW,nesterov2017random,balasubramanian2021zeroth,wang2022hess}, since they focus on scenarios where either $(i)$ one has to estimate the gradient/Hessian with number of samples much smaller than the dimensionality of the space \citep[][]{flaxman2005online,wang2021GW}, or $(ii)$ there is noise in the zeroth-order function evaluations \citep[e.g.,][]{balasubramanian2021zeroth,wang2022hess}.

% The above two settings rule out the ``definition'' for gradient/Hessian estimation: using textbook finite-difference method and estimate the gradient/Hessian entry by entry. If either of the above item is true, the entry-wise estimator either cannot gather enough partial derivatives along all directions, or has to use more function evaluations to de-noise (by concentration of measure). 

% \begin{table}[]
%     \centering
%     \caption{Caption}
%     \label{tab:intro}
%     \begin{tabular}{|c|| c c c c c c c|}
%         \hline 
%         $\delta$ & $2^{-3}$ & $2^{-4}$ & $2^{-5}$ & $2^{-6}$ & $2^{-7}$ & $2^{-8}$ & $2^{-9}$  \\ \hline
%         \makecell{Stiefel sampling \\ errors} & 4.5e-4$\pm$6.1e-6 & 1.1e-4$\pm$2.8e-6 & 2.8e-5$\pm$7.0e-7 & 7.0e-6$\pm$1.1e-7 & 1.8e-6$\pm$4.7e-8 & 4.3e-7$\pm$8.4e-9 & 1.1e-7$\pm$2.0e-9 \\ \hline
%         \makecell{Entry-wise \\ errors} & 5.8e-2 & 1.5e-2 & 3.64e-3 & 9.1e-4 & 2.3e-4 & 5.7e-5 & 1.4e-5 \\ \hline 
%     \end{tabular} 
% \end{table} 

\begin{table}
%     \centering
%   \sbox\mysavebox{
%         \begin{tabular}{|c|| c c c|}  
%             \hline 
%             $\delta$ & $0.1$ & $0.01$ & $0.001$  \\  \hline 
%             \makecell{Stiefel sampling \\ errors} & 2.8e-4$\pm$4.0e-6 & 2.8e-6$\pm$1.0e-07 & 2.9e-08$\pm$6.4e-10  \\ \hline 
%             \makecell{Entry-wise \\ errors} & 3.8e-2 & 3.7e-4 & 3.7e-6 \\ \hline 
%         \end{tabular}
%   }%
%   \usebox\mysavebox
%   \par 
% \makebox[0pt][c]{\parbox{1.\textwidth}{% 
    % \begin{minipage}
        \centering
        \caption{The errors of gradient estimators listed against finite-difference granularity $\delta$. The first row shows $\delta$; The second row shows errors of $ \wh{\nabla}  f_n^\delta (x) $ ($n = 500$ is the dimension); The third row shows errors of the entry-wise estimator (More details in Section \ref{sec:exp}). 
        The error of an estimator is its distance to the true gradient (in Euclidean norm). Errors of $ \wh{\nabla}  f_n^\delta (x) $ are in average $\pm$ standard derivation format, where each average and standard deviation gather information from 10 runs. The entry-wise estimator is not random and its error is computed in a single run. The underlying test function is the same as the test function in Figure \ref{fig:intro}, and $x = 0$ is used for all evaluations.  \label{tab:intro}} 
    % \end{minipage} 
        \centering 
    % \begin{minipage}[b]{0.9\hsize} 
        \begin{tabular}{|c|| c c c|}  
            \hline 
            $\delta$ & $0.1$ & $0.01$ & $0.001$  \\  \hline 
            \makecell{Stiefel sampling \\ errors} & 2.8e-4$\pm$4.0e-6 & 2.8e-6$\pm$1.0e-07 & 2.9e-08$\pm$6.4e-10  \\ \hline 
            \makecell{Entry-wise \\ errors} & 3.8e-2 & 3.7e-4 & 3.7e-6 \\ \hline 
        \end{tabular} 
    % \end{minipage} 
% }}
\end{table}

% \centering

% \noindent\begin{minipage}{1.0\textwidth}
% \centering
% \begin{table}[h] 
%     \caption{} 
%     \centering 
%     \begin{tabular}{|c|| c c c|} 
%         \hline 
%         $\delta$ & $0.1$ & $0.01$ & $0.001$  \\ \hline
%         \makecell{Stiefel sampling \\ errors} & 2.8e-4$\pm$4.0e-6 & 2.8e-6$\pm$1.0e-07 & 2.9e-08$\pm$6.4e-10  \\ \hline 
%         \makecell{Entry-wise \\ errors} & 3.8e-2 & 3.7e-4 & 3.7e-6 \\ \hline 
%     \end{tabular} 
% \end{table} 
% \end{minipage}%

\subsection*{Related Works} 

Zeroth order optimization is a central topic in many fields \citep[e.g.,][]{nelder1965simplex,goldberg1988genetic,conn2009introduction,nemirovski2009robust,shahriari2015taking}. Among many zeroth order optimization mechanisms, a classic and prosperous line of works focuses on estimating higher order derivatives using zeroth order information (See \citep{liu2020primer} for a recent survey). 

% Zeroth order gradient estimators make up a large portion of derivative estimation literature. 
Previously, \citet{flaxman2005online} studied the stochastic gradient estimator using a single-point function evaluation for the purpose of bandit learning. \citet{duchi2015optimal} studied stabilization of the stochastic gradient estimator via two-points (or multi-points) evaluations.  \citet{nesterov2017random,balasubramanian2021zeroth} studied gradient/Hessian estimators using Gaussian smoothing, and investigated downstream optimization methods using the estimated gradient. In particular, \citet{balasubramanian2021zeroth} studied the zeroth order Hessian estimators via the Stein's identity \citep{10.1214/aos/1176345632} and applied the estimator to cubic regularized Newton's method \citep{nesterov2006cubic}. 
% \textcolor{red}{
Also, zeroth order optimization via finite difference method along canonical coordinates have also been studied \citep{kiefer1952stochastic,spall1998overview}. More recently, zeroth order optimization algorithms using estimators via Rademacher random vectors are studied, especially when sparsity or compressibility conditions are imposed \citep{wang2018stochastic,doi:10.1137/21M1392966}. 
In addition to the above mentioned works, comparison-based gradient estimator has also been considered by \citet{CAI2022242}, which follows from a rich line of works in information theory \citep[e.g.,][]{raginsky2011information,jamieson2012query,plan2012robust,plan2014dimension}. 
% , following the  considered comparison-based gradient estimators. 

Perhaps the most relevant works are \citep{flaxman2005online} for gradient estimators, and \citep{wang2022hess} for Hessian estimators. As for gradient estimators, (Eq. \ref{eq:def-grad-est}) includes the estimator by \citet{flaxman2005online} as a special case when $k = 1$. We show that variance of the gradient estimator can be significantly reduced as we increase $k$. Also, we provide an $\mathcal{O} (\delta^2)$ bias bound for the gradient estimator, which improves the $\mathcal{O} (\delta) $ bias bound by \citep{flaxman2005online}. 
% The comparison with \citep{wang2022hess} is more subtle, since the estimation tasks are conducted in different spaces. While the primary goal of \citep{wang2022hess} is a curvature-dependent bias bound over Riemannian manifolds, the primary goals of the current paper is variance reduction and refined bias bounds in Euclidean spaces. 
% Similar to the gradient case, the newly introduced Hessian estimators include the estimator by \citet{wang2022hess} as a special case. 
We also provide improved bias bounds for the Hessian estimators. When the underlying space is Euclidean, the results in this paper are finer than those in \citep{wang2022hess}. 

\section{Preliminaries} 

We list here some preliminaries, assumptions, and notations before proceeding to subsequent sections. Throughout the paper, we restrict our attention to real-valued functions defined over $ \R^n $. The letter $n$ is reserved for the dimension of the space, unless otherwise specified. Also, $ \| \cdot \| $ is reserved for the Euclidean norm when applied to vectors, and reserved for the spectral norm when applied to matrices (or symmetric tensors). For a function that is $m$-times continuously differentiable, let $\partial^m f $ denote the bundle of the total derivatives of $f$. More specifically, for any $x \in \R^n$, $ \partial^m f (x) $ is an $m$-order symmetric multi-linear form (a tensor). In particular, $ \partial^1 f = \nabla f $ and $\partial^2 f = \nabla^2 f$. For this multi-linear form $\partial^m f (x)$, which maps $m$ vectors in $\R^n$ to $\R$, we write $ \partial^m f (x) [v] $ as a shorthand for $ \partial^m f (x) [\underbrace{v, v, \cdots, v}_{m \text{ times}}] $. 

For different tasks, we make different assumptions about the smoothness of the function, which is described by the following $ (p,L) $-smoothness terminology. 
% to describe. 
% the smoothness of the function. 

\begin{definition} 
    \label{def:smooth} 
    A function $f : \R^n \rightarrow \R$ is called $ (p,L) $-smooth ($p \in \mathbb{N}_+$, $L > 0$) if it is $p$-times continuously differentiable, and 
    % \textcolor{red}{
    \begin{align*} 
        \| \partial^{p-1} f (x) - \partial^{p-1} f (x') \| \le L \| x - x' \|, \quad \forall x,x' \in \R^n, 
    \end{align*}  
    where $\partial^{p-1} f (x) $ is the $(p-1)$-th total derivative of $f$ at $x$, and $ \| \cdot \| $ is the spectral norm when applied to symmetric multi-linear forms and the Euclidean norm when applied to vectors. The spectral norm of a symmetric multi-linear form $F$ is $ \| F \| : = \sup_{ v\in\R^n, \| v \| = 1 } F[v] $. 
    % }
    % \textcolor{red}{where $ \| F \| : = \sup_{ v\in\R^n, \| v \| = 1 } F[v] $ is the spectral norm for the symmetric multi-linear form $ F $. }
\end{definition} 

Throughout, $ \| \cdot \| $ is reserved for the spectral norm when applied to symmetric multi-linear forms (including symmetric matrices). 
Although the smoothness quantification in Definition \ref{def:smooth} is global, a local version can be similarly defined, and all subsequent results can be obtained, using similar arguments. All $(p,L)$-smooth functions satisfy the following proposition. 

\begin{proposition}
    \label{prop:smooth}
    If $f$ is $(p,L)$-smooth, then 
    \begin{align*}
        \| \partial^{p} f (x) \| \le  L , \quad \forall x \in \R^n. 
    \end{align*} 
\end{proposition}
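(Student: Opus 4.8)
The plan is to prove Proposition~\ref{prop:smooth} by showing that the Lipschitz constant $L$ controlling the $(p-1)$-th derivative is exactly an upper bound for the operator norm of the $p$-th derivative, which is its (symmetrized) differential. First I would fix an arbitrary $x \in \R^n$ and an arbitrary unit vector $v \in \R^n$, and express $\partial^p f(x)[v]$ as a directional derivative of the map $t \mapsto \partial^{p-1} f(x + tv)$ along $v$. Concretely, by the chain rule,
\[
    \partial^p f(x)[v] = \partial^p f(x)[\underbrace{v, \dots, v}_{p}] = \frac{\df}{\df t}\Big|_{t=0} \partial^{p-1} f(x + tv)[\underbrace{v, \dots, v}_{p-1}],
\]
so $\partial^p f(x)[v]$ arises as a limit of difference quotients of the $(p-1)$-st derivative evaluated at nearby points.

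Next I would bound the difference quotient. For $t \neq 0$, writing $F_t := \partial^{p-1} f(x + tv)$ and $F_0 := \partial^{p-1} f(x)$, we have
\[
    \left| \frac{F_t[v] - F_0[v]}{t} \right| \le \frac{\| F_t - F_0 \|}{|t|} \le \frac{L \, \| tv \|}{|t|} = L,
\]
where the first inequality uses that $\|v\| = 1$ together with the definition of the spectral (operator) norm of the symmetric $(p-1)$-linear form $F_t - F_0$, and the second inequality is precisely the $(p,L)$-smoothness hypothesis. Taking $t \to 0$ gives $|\partial^p f(x)[v]| \le L$. Since this holds for every unit vector $v$, taking the supremum over $\|v\| = 1$ yields $\| \partial^p f(x) \| = \sup_{\|v\|=1} \partial^p f(x)[v] \le L$ by the definition of the spectral norm of a symmetric multilinear form given in Definition~\ref{def:smooth}; as $x$ was arbitrary, this is the claim. (A minor point: the definition writes $\sup F[v]$ without absolute value, but $F[-v] = (-1)^p F[v]$, so the supremum of $F[v]$ over unit $v$ coincides with $\sup |F[v]|$ up to the handling of sign, and in any case our bound $|\partial^p f(x)[v]| \le L$ is the stronger two-sided statement.)

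The only real subtlety — and the step I would be most careful about — is justifying that the operator norm $\| F_t - F_0 \|$ of the difference of symmetric multilinear forms, when applied to the single repeated argument $v$, dominates $|F_t[v] - F_0[v]|$; this is immediate from $\| G \| = \sup_{\|u\|=1} |G[u]|$ (valid because a symmetric multilinear form is determined by, and its norm attained on, its diagonal restriction, up to sign issues that do not affect the bound), and from $\|v\| = 1$. One should also note that $\partial^p f$ exists and is continuous by the hypothesis that $f$ is $p$-times continuously differentiable in Definition~\ref{def:smooth}, so the directional derivative above is genuinely $\partial^p f(x)[v]$ and the limit exists. No compactness or covering argument is needed; the proposition is essentially the standard fact that a Lipschitz bound on a derivative is a pointwise bound on the next derivative, transported to the multilinear setting.
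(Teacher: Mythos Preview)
Your proof is correct and follows essentially the same approach as the paper: bound the difference quotient of $\partial^{p-1}f$ along a unit direction $v$ using the $(p,L)$-Lipschitz hypothesis, then pass to the $p$-th derivative. The only cosmetic difference is that the paper invokes the mean value form of Taylor's theorem to write the difference quotient as $\partial^p f(z_{\tau,v})[v]$ at an intermediate point and then sends $\tau\to 0$ using continuity, whereas you take the limit of the difference quotient directly; both routes arrive at $|\partial^p f(x)[v]|\le L$ and then take the supremum over unit $v$.
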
 

\begin{proof}
    See Appendix. 
\end{proof}

% In addition to the smoothness quantification, 
We use $ \S^{n-1} $ (resp. $\B^n$) to denote the unit sphere (resp. ball) in $\R^n$. Also, given $ \delta > 0 $, $ \delta \S^{n-1} $ (resp. $\delta \B^n$) refers to the origin centered sphere (resp. ball) of radius $\delta$ in $\R^n$. Several useful identities are stated below in Propositions \ref{prop:proj}, Proposition \ref{prop:fourth}, and Proposition \ref{prop:mat}. References for the following propositions include \citep{nesterov2017random,wang2022hess,CAI2022242}. Their proofs are included in the appendix. 

\begin{proposition} 
    \label{prop:proj}
    Let $v$ be a vector uniformly randomly sampled from $ \S^{n-1} $. Then it holds that 
    \begin{align*}
        \E \[ v v^\top \] = \frac{1}{n} I, 
    \end{align*}
    where $I$ is the identity matrix (of size $n \times n$). 
\end{proposition}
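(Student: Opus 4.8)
The plan is to exploit the rotational symmetry of the uniform measure on $\S^{n-1}$. First I would note that for every orthogonal matrix $Q \in \R^{n\times n}$, the random vector $Qv$ is again uniformly distributed on $\S^{n-1}$, because the uniform (Haar) measure on the sphere is invariant under the orthogonal group. Writing $M := \E\[ v v^\top \]$, this immediately yields
\begin{align*}
    M = \E\[ (Qv)(Qv)^\top \] = Q \, \E\[ v v^\top \] \, Q^\top = Q M Q^\top, \qquad \text{for every orthogonal } Q.
\end{align*}

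Second, I would invoke the elementary linear-algebra fact that a matrix $M$ satisfying $Q M Q^\top = M$ for every orthogonal $Q$ must be a scalar multiple of the identity, say $M = c I$. This can be seen by taking $Q$ to range over coordinate permutations (forcing all diagonal entries of $M$ equal and all off-diagonal entries equal) and then over a single-coordinate sign flip (forcing the off-diagonal entries to vanish); alternatively by a short eigenspace argument.

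Third, to identify the constant $c$, I would take traces. Since $\mathrm{tr}(v v^\top) = v^\top v = \| v \|^2 = 1$ almost surely, linearity of expectation and of the trace give $\mathrm{tr}(M) = \E\[ \mathrm{tr}(v v^\top) \] = 1$, while $\mathrm{tr}(c I) = c n$; hence $c = \tfrac{1}{n}$ and $M = \tfrac{1}{n} I$, as claimed.

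I do not anticipate a genuine obstacle here: the only points needing a word of care are the orthogonal invariance of the uniform measure on $\S^{n-1}$ (so that $Qv$ and $v$ are equal in distribution) and the standard characterization of matrices fixed by the full orthogonal group, both of which are classical. A fully self-contained alternative avoids the group-theoretic step entirely: argue entrywise, using permutation symmetry of the coordinates of $v$ to conclude $\E\[ v_i^2 \]$ is independent of $i$ (hence equals $\tfrac1n$ after summing to $\E\[\|v\|^2\]=1$), and using the reflection $v_i \mapsto -v_i$ to conclude $\E\[ v_i v_j \] = 0$ for $i \neq j$.
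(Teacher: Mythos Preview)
Your argument is correct. The rotational-invariance route you take is different from the paper's. The paper proceeds entrywise: for the diagonal it invokes an auxiliary proposition computing $\E[v_i^p]$ for even $p$ via a spherical-coordinate integral and an integration-by-parts recursion, then specializes to $p=2$; for the off-diagonal it argues that $\E[v_i v_j \mid v_j = a] = 0$ by odd symmetry in $v_i$. Your approach is more conceptual and self-contained for this particular statement, and it transfers verbatim to any rotationally invariant distribution with $\E[\|v\|^2]=1$. The paper's route, by contrast, earns its keep elsewhere: the same moment formula $\E[v_i^p] = \frac{(p-1)(p-3)\cdots 1}{n(n+2)\cdots(n+p-2)}$ is reused immediately for the fourth-moment identities needed in the bias analysis, so the machinery is not wasted. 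Your closing ``alternative'' (permutation symmetry plus a single sign flip, then sum to $\E[\|v\|^2]=1$) is in fact closer to the paper's argument in spirit, and is arguably cleaner than the paper's diagonal step since it avoids the integral computation altogether.
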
 

\begin{proof} 
    See Appendix. 
\end{proof} 

\begin{proposition}
    \label{prop:fourth}
    Let $v$ be a vector uniformly randomly sampled from $ \S^{n-1} $, and let $v_i$ be the $i$-th component of $v$. Then it holds that 
    \begin{itemize}
        \item $ \E \[ v_i^4 \] = \frac{3}{n^2 + 2n } $ for all $i = 1,2,\cdots, n$; 
        \item $ \E \[ v_i^2 v_j^2 \] = \frac{1}{n^2 + 2n } $ for all $i, j = 1,2,\cdots, n$ and $i \neq j$; 
        \item 
        % $ \E \[ v_i v_j^3 \] = 0$ for all $i, j = 1,2,\cdots, n$ and $i \neq j$; $ \E \[ v_i^2 v_j v_k \] = 0 $ for all $i, j = 1,2,\cdots, n$ and $i \neq j$; 
        $ \E \[ v_i v_j v_k v_l \] = 0 $ for all $i, j, k, l = 1,2,\cdots, n$ and $i \notin \{j,k,l\}$. 
        % , $i \neq j$, $$;  
    \end{itemize} 
\end{proposition}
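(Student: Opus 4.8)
The plan is to reduce all three moment identities to elementary Gaussian moment computations via the standard representation of the uniform law on $\S^{n-1}$. Let $g = (g_1,\dots,g_n)$ have i.i.d.\ standard normal entries. Then $v := g/\|g\|$ is uniformly distributed on $\S^{n-1}$ and, by rotational invariance of the Gaussian law, $v$ is independent of the radius $\|g\|$. Consequently, for any indices $i,j,k,l \in \{1,\dots,n\}$,
\begin{align*}
    \E\big[ g_i g_j g_k g_l \big] \;=\; \E\big[ \|g\|^4 \big]\cdot \E\big[ v_i v_j v_k v_l \big].
\end{align*}
Since $\|g\|^2$ is a $\chi^2_n$ random variable, $\E[\|g\|^2] = n$ and $\E[\|g\|^4] = \mathrm{Var}(\|g\|^2) + n^2 = 2n + n^2 = n^2 + 2n$, so it remains only to evaluate the left-hand side.

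Next I would compute $\E[g_i g_j g_k g_l]$ using independence of the coordinates and the scalar moments $\E[g_1] = \E[g_1^3] = 0$, $\E[g_1^2] = 1$, $\E[g_1^4] = 3$ (equivalently, Wick's theorem). Three cases cover the three bullets: (i) if $i=j=k=l$, then $\E[g_i^4] = 3$, hence $\E[v_i^4] = 3/(n^2+2n)$; (ii) if the index multiset $\{i,j,k,l\}$ consists of two distinct values each of multiplicity two, then the product factors as $\E[g_a^2]\,\E[g_b^2] = 1$, hence $\E[v_i^2 v_j^2] = 1/(n^2+2n)$ for $i\neq j$; (iii) if some index (say $i$) differs from all the others, then $g_i$ is independent of $g_j g_k g_l$ and appears to the first power, so $\E[g_i g_j g_k g_l] = \E[g_i]\,\E[g_j g_k g_l] = 0$, and dividing by $\E[\|g\|^4]>0$ gives $\E[v_i v_j v_k v_l] = 0$.

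An alternative, Gaussian-free route: the third identity is immediate from the reflection symmetry $v_i \mapsto -v_i$, which negates the monomial $v_i v_j v_k v_l$ whenever $i\notin\{j,k,l\}$ while preserving the uniform law; and the first two follow from two linear relations between $a := \E[v_1^4]$ and $b := \E[v_1^2 v_2^2]$ --- namely $1 = \E\big[(\sum_i v_i^2)^2\big] = n a + n(n-1)b$ from the normalization $\sum_i v_i^2 = 1$, together with $a = 3b$, obtained by applying the $45^\circ$ rotation $v_1\mapsto (v_1+v_2)/\sqrt2$, $v_2\mapsto (v_1-v_2)/\sqrt2$ (again measure-preserving) and expanding $\E\big[((v_1+v_2)/\sqrt2)^4\big] = \E[v_1^4]$, the odd cross terms vanishing by sign symmetry.

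I do not expect a genuine obstacle here. The only points that need a sentence of justification are the independence of $v$ and $\|g\|$ (classical, from rotational invariance of the standard Gaussian) and, in the second route, verifying that the rotated coordinates again yield a uniform point on $\S^{n-1}$ and that all odd-order terms indeed drop out.
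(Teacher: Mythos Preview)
Your proposal is correct; both routes you sketch are valid and complete. However, neither matches the paper's argument exactly. The paper first establishes a general formula $\E[v_i^p] = \frac{(p-1)(p-3)\cdots 1}{n(n+2)\cdots(n+p-2)}$ for even $p$ via direct integration in spherical coordinates (integration by parts on $\int_0^\pi \sin^n x\cos^p x\,dx$), then reads off $\E[v_i^4]=3/(n(n+2))$; from there it uses the same normalization identity $\E[(\sum_i v_i^2)^2]=1$ that you use to recover $\E[v_i^2 v_j^2]$, and the third bullet is handled by the same sign-symmetry argument you give. Your Gaussian route is cleaner: it treats all three cases uniformly, needs no integration, and the independence of $v$ and $\|g\|$ is a one-line consequence of rotational invariance. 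Your alternative rotation trick (which yields $a=3b$ from the $45^\circ$ rotation in two coordinates) is also more elementary than the paper's spherical-coordinate computation. The paper's approach, on the other hand, yields the general even-moment formula as a by-product, which can be useful if higher moments are ever needed.
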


\begin{proof} 
    See Appendix. 
\end{proof}

\begin{proposition}
    \label{prop:mat}
    Let $v,w$ be two independent vectors uniformly randomly sampled from $ \S^{n-1} $, and let $A$ be a symmetric matrix. 
    % and let $v_i$ be the $i$-th component of $v$. 
    It holds that 
    \begin{align*}
        \E \[ \( v^\top A w \) v w^\top \] = \E \[ \( v^\top A w \) w v^\top \] = \frac{1}{n^2} A. 
    \end{align*}
\end{proposition}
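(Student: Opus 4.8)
The plan is to exploit the independence of $v$ and $w$ and reduce the first identity to Proposition~\ref{prop:proj}, then obtain the second one for free by transposition.

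I would first handle $\E[(v^\top A w)\, v w^\top]$. Because $v$ and $w$ are independent, I condition on $v$ and integrate over $w$ first. Writing the scalar as $v^\top A w = (A^\top v)^\top w$, Proposition~\ref{prop:proj} applied to $w$ gives
\begin{align*}
\E_w\!\left[ (v^\top A w)\, w^\top \right] = (A^\top v)^\top\, \E_w\!\left[ w w^\top \right] = \tfrac{1}{n}\, (A^\top v)^\top = \tfrac{1}{n}\, v^\top A .
\end{align*}
Taking the outer expectation over $v$ and applying Proposition~\ref{prop:proj} a second time,
\begin{align*}
\E\!\left[ (v^\top A w)\, v w^\top \right] = \tfrac{1}{n}\, \E_v\!\left[ v v^\top \right] A = \tfrac{1}{n^2}\, A .
\end{align*}
Equivalently one can argue entrywise: the $(p,q)$ entry equals $\sum_{i,j} A_{ij}\, \E[v_i v_p]\, \E[w_j w_q]$, and Proposition~\ref{prop:proj} forces $i=p$ and $j=q$, collapsing the double sum to $A_{pq}/n^2$. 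Note that this computation does not use symmetry of $A$.

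For the second identity, the key observation is that $v^\top A w$ is a scalar, so $(v^\top A w)\, w v^\top = \big( (v^\top A w)\, v w^\top \big)^\top$. Since expectation commutes with transposition,
\begin{align*}
\E\!\left[ (v^\top A w)\, w v^\top \right] = \Big( \E\!\left[ (v^\top A w)\, v w^\top \right] \Big)^\top = \tfrac{1}{n^2}\, A^\top = \tfrac{1}{n^2}\, A ,
\end{align*}
where only the last equality uses that $A$ is symmetric.

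I do not expect a genuine obstacle here. Integrability is immediate because the integrands are bounded on the compact set $\S^{n-1}\times\S^{n-1}$, so the use of Fubini and of conditioning on $v$ needs no further justification. The only care required is bookkeeping: keeping straight which object is a scalar, a vector, or a matrix while moving the expectation past the fixed linear map $A$, and remembering that symmetry of $A$ enters only at the very last step (the first identity in fact holds for an arbitrary, not necessarily symmetric, matrix $A$).
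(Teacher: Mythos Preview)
Your proof is correct and follows essentially the same route as the paper: the paper computes the $(i,j)$ entry in Einstein notation, $\E[v_k A^k_l w^l v^i w_j] = \tfrac{1}{n^2} A^k_l \delta^i_k \delta^l_j = \tfrac{1}{n^2} A^i_j$, which is exactly your entrywise argument. If anything you are more complete, since the paper does not spell out the second identity, whereas you obtain it cleanly by transposition and correctly isolate the single place where symmetry of $A$ is needed.
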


\begin{proof} 
    See Appendix. 
\end{proof} 

% \textbf{A note on equation numbering.} Throughout, we will use Arabic numbers in round parentheses (e.g, (1), (2), etc.)  for global numbering, and use circled Arabic numbers (e.g, \textcircled{1}, \textcircled{2}, etc.) or lower case Roman numbers in round parentheses (e.g, $(i)$, $(ii)$, etc.) for local numbering. The global numberings may be cross-referenced across the paper, while the local numberings are only valid within a proof or a paragraph. 

% \textbf{A }

% Before ending this section, we put forward two intermediate surrogate functions that will be used for bias ana

% \begin{itemize}
%     \item Let 
%     \item Let $ \S_{v_1^\perp, v_2^\perp,\cdots,v_k^\perp}^{n-1} $ (resp. $ \B_{v_1^\perp, v_2^\perp,\cdots,v_k^\perp}^{n} $) be the intersection of the unit sphere (resp. ball) with the orthogonal complement subspace of $ \text{span} \{ v_1, v_2,\cdots,v_k \} $. 
% \end{itemize}

% \begin{theorem}[Divergence Theorem] 
    
% \end{theorem} 

\section{Gradient Estimation} 

% To reduce variance, 
Since both the gradient and Hessian estimators (Eq. \ref{eq:def-grad-est}) and (Eq. \ref{eq:def-hess-est}) use random directions sampled from the Stiefel manifold, we first describe the sampling process for generating such random directions. 
% 
% To sample from the Stiefel's manifold $ \text{St} (n,k) := \{ X \in \R^{n\times k}: X^\top X = I \}$, one can sample $k$ $i.i.d.$ random vectors uniformly from $ \S^{n-1} $, and use the Gram-Schmidt process to obtain a point $[v_1, v_2, \cdots, v_k]$ from Stiefel's manifold $\mathrm{St}(n,k) $. 
This sampling procedure is summarized in Algorithm \ref{alg:gram-schmidt}.
% for completeness. 

\begin{algorithm}[ht] 
	\caption{Stiefel Sampling $\texttt{S}(n,k)$ \label{alg:gram-schmidt}} 
	\begin{algorithmic}[1] 
		\STATE \textbf{Input}: Dimension $n$. Number of vectors $k$.
        \STATE Sample a random matrix $U \in \R^{n \times k}$ such that $  U_{i,j} \overset{i.i.d.}{\sim} \mathcal{N} (0,1) $.
        % \STATEx \quad /* One can $i.i.d.$ sample $k$ vectors from $\mathcal{N} (0,1)$ and normalize them. */ 
        \STATE Let $ [v_1, v_2, \cdots, v_k] = U \( U^\top U \)^{-1/2} $. 
        \STATEx \quad /* With probability $ 1 $, $[ v_1, v_2, \cdots, v_k] $ is well-defined. */ 
% 		\FOR{$i=1,2,\cdots,k$} 
		\STATE \textbf{Output:} $\texttt{S}(n,k) = v_1, v_2, \cdots, v_k$. 
% 		\ENDFOR 
	\end{algorithmic} 
\end{algorithm}

% \subsection{Gram-Schmidt Sampling} 

% -------------------------------------------

% Due to rotational symmetry, 
% Similar to the Gram-Schmit process, Algorithm \ref{alg:gram-schmidt} outputs $k$ orthonormal vectors, thus the name Gram-Schmit sampling. 
The marginal distribution for any vector from Algorithm \ref{alg:gram-schmidt} is a uniform distribution over $\S^{n-1}$, as summarized in Proposition \ref{prop:proj}. 
% This algorithm Gram-Schmidt Sampling algorithm The marginal distribution for each vectors sampled from the Gram-Schmidt Sampling process (Algorithm \ref{alg:gram-schmidt}) is a uniform distribution over $\S^{n-1}$, a

% \textcolor{red}{rewrite the following proposition using the language of differential forms.} 

\begin{proposition}[\citet{chikuse2003mani}] 
    \label{prop:uniform} 
    Let $v_1, v_2, \cdots, v_k$ be vectors sampled from Algorithm \ref{alg:gram-schmidt}. The marginal distribution for any $v_i$ is uniform over $\S^{n-1}$. 
    % Let $\mu_0$ be the spherical measure on $\S^{n-1}$. It holds that the marginal density for $ v_i $ ($1 \le i \le k$) with respect to $\mu_0$ is constant. 
    % Also, the joint density for $v_1, v_2, \cdots, v_k$ with respect to $\mu$ is constant. 
\end{proposition}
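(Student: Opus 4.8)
The plan is to exploit the rotational invariance of the standard Gaussian distribution on $\R^{n \times k}$. First I would fix an arbitrary orthogonal matrix $Q \in O(n)$ and consider the map $U \mapsto QU$. Since the entries of $U$ are i.i.d.\ $\mathcal{N}(0,1)$ and left-multiplication by $Q$ acts as an orthogonal transformation on each column of $U$ (the columns being independent standard Gaussian vectors in $\R^n$), the matrix $QU$ has the same distribution as $U$. The key algebraic observation is that the Stiefel-sampling map commutes with this action: writing $V = U(U^\top U)^{-1/2}$, one has $(QU)\big((QU)^\top (QU)\big)^{-1/2} = QU(U^\top Q^\top Q U)^{-1/2} = QU(U^\top U)^{-1/2} = QV$, using $Q^\top Q = I$ and the fact that the matrix square root (of a positive definite matrix) is unique. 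Hence the distribution of $V = [v_1, \dots, v_k]$ on $\text{St}(n,k)$ is invariant under $V \mapsto QV$ for every $Q \in O(n)$; by uniqueness of the $O(n)$-invariant probability measure on the compact homogeneous space $\text{St}(n,k)$, this distribution is the uniform (normalized Haar-induced) one.

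Next I would pass from the joint uniform law on $\text{St}(n,k)$ to the marginal law of a single column $v_i$. The marginal of $v_i$ is a probability measure on $\S^{n-1}$ (each $v_i$ is a unit vector since $V^\top V = I$ forces $\|v_i\| = 1$). This marginal inherits the $O(n)$-invariance: for any $Q \in O(n)$, $v_i$ and $Qv_i$ have the same law because $V$ and $QV$ do. Since the uniform distribution on $\S^{n-1}$ is the unique $O(n)$-invariant probability measure on the sphere, the marginal of each $v_i$ must be uniform on $\S^{n-1}$. This is exactly the claimed statement, and it also re-derives Proposition \ref{prop:proj} as a corollary via the standard second-moment computation $\E[vv^\top] = \frac1n I$.

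I expect the main (and essentially only) obstacle to be justifying the two uniqueness-of-invariant-measure facts cleanly rather than by citation: namely that $O(n)$ acts transitively on $\text{St}(n,k)$ so that there is a unique $O(n)$-invariant probability measure, and likewise for $\S^{n-1}$. Transitivity on $\text{St}(n,k)$ is elementary (any orthonormal $k$-frame can be rotated to any other by extending both to orthonormal bases of $\R^n$), and uniqueness of the invariant probability measure on a compact group orbit is standard; so in practice one can simply cite \citep{chikuse2003mani} for this, as the proposition does, and the self-contained argument above fills in the reasoning. A secondary minor point to state carefully is that $U^\top U$ is almost surely positive definite (so that $(U^\top U)^{-1/2}$ is well-defined), which holds because $U$ has full column rank with probability one when $k \le n$ — this is already flagged in Algorithm \ref{alg:gram-schmidt}.
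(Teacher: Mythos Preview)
Your argument is correct. The paper itself does not give a formal proof: it attributes the result to \citep{chikuse2003mani} and offers only an intuitive sketch, based on the fact that the uniform measure on $\text{St}(n,k)$ decomposes as a wedge product of the spherical measure on $\S^{n-1}$ and the uniform measure on $\text{St}(n-1,k-1)$, which corresponds to the sequential sampling ``$v_1$ uniform on $\S^{n-1}$, then $v_2$ uniform on $\S^{n-1}\cap\{v_1\}^\perp$, \dots'', and then appeals to symmetry among the $v_i$. Your route is genuinely different and arguably more self-contained: you bypass the measure-decomposition lemma entirely by using only the left $O(n)$-invariance of the standard Gaussian on $\R^{n\times k}$, the algebraic identity $(QU)((QU)^\top QU)^{-1/2}=QV$, and uniqueness of the $O(n)$-invariant probability measure on the transitive $O(n)$-spaces $\text{St}(n,k)$ and $\S^{n-1}$. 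The paper's sketch buys a constructive picture of the sampling (and makes the ``by symmetry'' step for $i>1$ a little more explicit), whereas your approach buys a clean two-line proof once the Haar-uniqueness facts are granted; both are standard, and your version would drop into the paper without needing anything beyond the already-cited reference.
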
 

Intuitively, Proposition \ref{prop:uniform} is due to the fact that the uniform measure over the Stiefel manifold $\text{St} (n,k)$ (the Hausdorff measure with respect to the Frobenius inner product of proper dimension, which is rotation-invariant) can be decomposed into a wedge product of the spherical measure over $ \S^{n-1} $ and the uniform measure over $ \text{St} (n-1,k-1) $ \citep{chikuse2003mani}. One can link the decomposition of measure to the following sampling process. We first sample $v_1$ uniformly from $\S^{n-1}$, then $v_2$ uniformly from $\S^{n-1} \cap \{ v_1 \}^\perp$ and so on. 
% \textcolor{red}{Intuitively, one can think of this sampling process as performing a Gram-Schmidt process on the }
By symmetry, the marginal distribution for any $ v_i $ generated from Algorithm \ref{alg:gram-schmidt} is uniform over $\S^{n-1}$, as stated in Proposition \ref{prop:uniform}. 
We refer the readers to Chapters 1 \& 2 in  \citep{chikuse2003mani} for more details on distribution over Stiefel manifolds.

Using the vectors sampled from Algorithm \ref{alg:gram-schmidt}, we define the gradient and Hessian estimators (Eq. \ref{eq:def-grad-est}) and (Eq. \ref{eq:def-hess-est}). This sampling trick can significantly reduce variance.

% ------------------------------------------- 

% The orthonormal vectors naturally reduce the 

% Let $\{ v_1, v_2, \cdots, v_k \}$ ($k \ne n$) be sampled from the Gram-Schmidt sampling process. 
% For any $f : \R^n \rightarrow \R$ and $x \in \R^n$, the gradient estimator with step size $\delta$ and $ k $ ($k \le n$) orthonormal vectors $\{ v_1, v_2, \cdots, v_k \}$ sampled via the Gram-Schmidt sampling process is defined as 
% \begin{align*} 
%     \wh{\nabla}  f_k^\delta (x) := \frac{n}{ 2 \delta k} \sum_{ i=1 }^k \( f (x + \delta v_i) - f (x - \delta v_i) \) v_i
% \end{align*} 

\subsection{Variance of Gradient Estimator} 

% By Taylor's theorem, the 
Define $ u_i := \frac{n}{2 \delta } \( f (\delta v_i) - f (-\delta v_i) \) v_i \approx n v_i v_i^\top \nabla f (0) $, where the approximate equality follows from Taylor's theorem. If $[v_1, v_2, \cdots, v_k]$ is sampled from Algorithm \ref{alg:gram-schmidt}, $ \{u_1, u_2,\cdots, u_k \} $ are mutually perpendicular and we have 
% For the average of random vectors $ \{ \frac{1}{2} \( f (\delta v_i) - f (-\delta v_i) \) v_i \}_{i=1}^k $ ($v_i$ sampled via Algorithm \ref{alg:gram-schmidt}), its variance satisfies 
% consists of two parts. More specifically, the variance of $ \sum_{i=1}^k u_i $ equals 
\begin{align*} 
    &\; \frac{1}{k^2} \E \[ \left\| \sum_{ i=1 }^k u_i \right\|^2 \] - \frac{1}{k^2} \left\| \E \[ \sum_{ i=1 }^k u_i \] \right\|^2 \\ 
    =& \; 
    \frac{1}{k^2} \E \[ \sum_{ i,j: 1\le i,j \le k, i \neq j }  u_i^\top u_j \] + \frac{1}{k^2} \( \E \[ \sum_{ i=1 }^k \left\| u_i \right\|^2 \] - \left\| \E \[ \sum_{ i=1 }^k u_i \] \right\|^2 \)  \\
    \overset{\textcircled{1}}{=}& \; 
    \frac{1}{k^2} \( \E \[ \sum_{ i=1 }^k \left\| u_i \right\|^2 \] - \left\| \E \[ \sum_{ i=1 }^k u_i \] \right\|^2 \) \\
    \overset{\textcircled{2}}{\approx}& \; 
    \frac{1}{k^2} \( n^2 \E \[ \sum_{ i=1 }^k \nabla f (0)^\top v_i v_i^\top \nabla f (0) \] - \left\| n \E \[ \sum_{i=1}^k v_i v_i^\top \nabla f (0) \] \right\|^2 \) \\ 
    \overset{\textcircled{3}}{=}& \; 
    \( \frac{n}{k} - 1 \) \left\| \nabla f (0) \right\|^2 , 
\end{align*} 
where \textcircled{1} uses orthogonality of $u_i$ and $u_j$ for $i\neq j$, \textcircled{2} uses $ u_i \approx n v_i v_i^\top \nabla f (0) $, and \textcircled{3} uses Proposition \ref{prop:proj}. With a Taylor expansion with higher precision, we have the following theorem.

\begin{theorem} 
    \label{thm:grad-var}
    If $f$ is $(3,L_3)$-smooth, the variance of the gradient estimator for $f$ (Eq. \ref{eq:def-grad-est}) satisfies 
    \begin{align*} 
        &\; \E \[ \left\| \wh{\nabla}  f_k^\delta (x) - \E \[ \wh{\nabla}  f_k^\delta (x) \] \right\|^2 \] \\ 
        \le& \; 
        \( \frac{n}{k}  - 1 \) \| \nabla f (x) \|^2 + \frac{L_3 \delta^2}{3} \( \frac{n^2}{k}  - n \) \| \nabla f (x) \| + \frac{L_3^2 n^2 \delta^4 }{36 k } , \quad \forall x \in \R^n. 
    \end{align*} 
\end{theorem}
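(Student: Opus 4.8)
The plan is to follow the heuristic computation sketched right before the theorem, but replace the approximate step $u_i \approx n v_i v_i^\top \nabla f(x)$ with an exact Taylor expansion with a controlled remainder. Write $g_i := \frac{n}{2\delta}\bigl(f(x+\delta v_i) - f(x-\delta v_i)\bigr) v_i$, so that $\wh{\nabla} f_k^\delta(x) = \frac1k \sum_{i=1}^k g_i$. By Taylor's theorem applied to $t \mapsto f(x+tv_i)$, the second-order terms cancel in the symmetric difference and $\frac{1}{2\delta}\bigl(f(x+\delta v_i)-f(x-\delta v_i)\bigr) = \langle \nabla f(x), v_i\rangle + r_i$, where the remainder $r_i$ satisfies $|r_i| \le \frac{L_3 \delta^2}{6}$ by the $(3,L_3)$-smoothness of $f$ (the third derivative is bounded by $L_3$ via Proposition \ref{prop:smooth}, and the even part of the remainder vanishes). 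Hence $g_i = n\langle \nabla f(x), v_i\rangle v_i + n r_i v_i$ with $\|n r_i v_i\| \le \frac{L_3 n \delta^2}{6}$.

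Next I would expand the variance exactly as in the pre-theorem display. Since the $v_i$ are orthonormal with probability one, the ``first'' $g_i$ and $g_j$ cross terms vanish as well — more precisely, $g_i^\top g_j$ is proportional to $v_i^\top v_j = 0$ for $i \ne j$ (both $g_i$ and $g_j$ are scalar multiples of $v_i$ and $v_j$ respectively), so the $i\ne j$ sum contributes exactly zero, not just approximately. This gives
\begin{align*}
    \E\Bigl[\bigl\|\wh{\nabla} f_k^\delta(x) - \E[\wh{\nabla} f_k^\delta(x)]\bigr\|^2\Bigr]
    = \frac{1}{k^2}\Bigl(\E\Bigl[\sum_{i=1}^k \|g_i\|^2\Bigr] - \Bigl\|\E\Bigl[\sum_{i=1}^k g_i\Bigr]\Bigr\|^2\Bigr)
    \le \frac{1}{k}\,\E\bigl[\|g_1\|^2\bigr],
\end{align*}
using the marginal uniformity of each $v_i$ over $\S^{n-1}$ (Proposition \ref{prop:uniform}) and dropping the nonnegative squared-norm term. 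So it remains to bound $\E[\|g_1\|^2]$ for a single uniformly random $v := v_1 \in \S^{n-1}$.

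Then I would estimate $\E[\|g\|^2] = \E\bigl[\bigl(n\langle\nabla f(x),v\rangle + n r\bigr)^2\bigr]$ (using $\|v\|=1$), expanding the square into three pieces: $n^2 \E[\langle\nabla f(x),v\rangle^2] = n^2 \nabla f(x)^\top \E[vv^\top] \nabla f(x) = n\|\nabla f(x)\|^2$ by Proposition \ref{prop:proj}; a cross term $2n^2 \E[\langle\nabla f(x),v\rangle r]$ bounded in absolute value by $2n^2 \cdot \frac{L_3\delta^2}{6} \cdot \E[|\langle\nabla f(x),v\rangle|] \le \frac{n^2 L_3 \delta^2}{3}\cdot \|\nabla f(x)\| \cdot \sqrt{\E[\langle \nabla f(x)/\|\nabla f(x)\|,v\rangle^2]} = \frac{n^2 L_3\delta^2}{3}\|\nabla f(x)\| \cdot \tfrac{1}{\sqrt n}$, i.e. $\le \frac{n^{3/2} L_3 \delta^2}{3}\|\nabla f(x)\|$ — hmm, I need to be a bit careful here to land exactly on the claimed $\frac{L_3\delta^2}{3}(\frac{n^2}{k}-n)\|\nabla f(x)\|$ coefficient; dividing by $k$ gives $\frac{n^{3/2}L_3\delta^2}{3k}\|\nabla f(x)\|$, and since $n^{3/2} \le n^2 - n$ for $n \ge 2$ (and the $n=1$ case is degenerate with $k=1$), this is absorbed. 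Actually the cleaner route is to bound $\E[|\langle \nabla f(x),v\rangle|] \le \|\nabla f(x)\|$ directly (since $\|v\|=1$, Cauchy--Schwarz), giving cross term $\le \frac{n^2 L_3\delta^2}{3}\|\nabla f(x)\|$, which after the $1/k$ factor is $\frac{n^2 L_3\delta^2}{3k}\|\nabla f(x)\| \le \frac{L_3\delta^2}{3}\bigl(\frac{n^2}{k}-n\bigr)\|\nabla f(x)\| \cdot \frac{n^2/k}{n^2/k - n}$ — that extra factor is $>1$, so this simple bound is slightly too lossy. The right trick: apply Cauchy--Schwarz to the cross term in $L^2$, i.e. $|\E[X r]| \le \sqrt{\E[X^2]}\sqrt{\E[r^2]}$ with $X = n\langle\nabla f(x),v\rangle$, so $\E[X^2] = n\|\nabla f(x)\|^2$ and $\E[r^2]\le \frac{L_3^2\delta^4}{36}$, giving cross term $\le 2\sqrt{n}\|\nabla f(x)\| \cdot \frac{nL_3\delta^2}{6} = \frac{n^{3/2}L_3\delta^2}{3}\|\nabla f(x)\|$; the residual term is $n^2 \E[r^2] \le \frac{n^2 L_3^2\delta^4}{36}$. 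Dividing by $k$ and using $n^{3/2} \le n^2 - n$ for $n\ge 2$ yields exactly the stated bound. The main obstacle, then, is not conceptual but bookkeeping: getting the Taylor remainder constant exactly $L_3\delta^2/6$ (noting the quadratic term cancels by the odd symmetrization, so it is genuinely a third-order remainder, not second-order) and then matching the cross-term bound to the precise coefficients $\bigl(\frac{n^2}{k}-n\bigr)$ and $\frac{n^2}{k}$ claimed in the statement, which forces the $L^2$ Cauchy--Schwarz choice above rather than a cruder $L^1$ bound. I would also double-check that translation-invariance lets me assume $x=0$ without loss of generality (replacing $f$ by $f(\cdot + x)$, which preserves $(3,L_3)$-smoothness), to keep notation light.
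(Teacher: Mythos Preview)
Your plan has a real gap: by ``dropping the nonnegative squared-norm term'' $\bigl\|\E\bigl[\sum_i g_i\bigr]\bigr\|^2$, you can only reach $\frac{n}{k}\|\nabla f(x)\|^2$ in the leading term, not $\bigl(\frac{n}{k}-1\bigr)\|\nabla f(x)\|^2$. That missing $-1$ (and the analogous $-n$ in the middle coefficient) is the entire content of the theorem---it is what makes the variance collapse to $O(\delta^4)$ when $k=n$. With your bound the variance stays at least $\|\nabla f(x)\|^2$ even at $k=n$, so you have not proved the stated inequality. Your subsequent struggle to squeeze the cross term down to match the coefficient $\frac{n^2}{k}-n$ is a symptom of this: the $-n$ does not come from a sharper estimate on the cross term, it comes from the very term you discarded. (And the patch you propose, $n^{3/2}\le n^2-n$, already fails at $n=2$; more seriously the inequality you would actually need, $\sqrt{n}\le n-k$, fails whenever $k$ is close to $n$.)

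The fix is to keep that term, which is what the paper does. Since all $g_i$ have the same marginal (Proposition~\ref{prop:uniform}), $\bigl\|\E\bigl[\sum_i g_i\bigr]\bigr\|^2 = k^2\|\E[g_1]\|^2$, and the variance equals exactly $\frac{1}{k}\E[\|g_1\|^2] - \|\E[g_1]\|^2$. Expand $\E[g_1] = \nabla f(x) + n\,\E[r v]$ and $\E[\|g_1\|^2] = n\|\nabla f(x)\|^2 + 2n^2\,\nabla f(x)^\top\E[rv] + n^2\E[r^2]$, using the scalar identity $\E[\langle\nabla f(x),v\rangle\, r] = \nabla f(x)^\top\E[rv]$. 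Subtracting, the leading terms give $\bigl(\frac{n}{k}-1\bigr)\|\nabla f(x)\|^2$ directly, the cross terms combine into $2\bigl(\frac{n^2}{k}-n\bigr)\nabla f(x)^\top\E[rv]$, and since $\frac{n^2}{k}-n\ge 0$ the crude bound $\|\E[rv]\|\le\E[|r|]\le\frac{L_3\delta^2}{6}$ yields the middle term on the nose---no $n^{3/2}$ gymnastics needed. Your orthogonality step $\E\bigl[\|\sum_i g_i\|^2\bigr]=\sum_i\E[\|g_i\|^2]$ and your Taylor remainder $|r|\le\frac{L_3\delta^2}{6}$ are both correct and are used unchanged.
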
 

\begin{proof} 
Without loss of generality, let $x = 0$. 
By Taylor expansion, we know that for any $v_i \in \S^{n-1}$ and small $\delta$, 
\begin{align*} 
    \frac{1}{2} \big( f (\delta v_i) - f (-\delta v_i) \big) 
    = 
    \delta v_i^\top \nabla f (0) + \frac{ \delta^{3} \( \partial^{3} f (z_i) [v_i] + \partial^{3} f (z_i') [v_i] \) }{ 12 }, 
\end{align*} 
where $z_i, z_i'$ depend on $v_i$ and $\delta$. 
% where (a) $ \partial^{2j+1} f (0) $ denotes the $(2j+1)$-th derivative of $ f (0) $, which is a symmetric multi-linear form of order $ (2j+1) $, and (b) $ \partial^{2j+1} f (0) [v] $ is a short hand for $ \partial^{2j+1} f (0) [ \underbrace{ v, v, \cdots,v }_{ \text{$k$ copies} } ] $. 
For simplicity, let $ R_i = \frac{ \delta^{3} \( \partial^{3} f (z_i) [v_i] + \partial^{3} f (z_i') [v_i] \) }{ 12 } $, and $ | R_i | \le \frac{ L_3 \delta^3 }{ 6 } $ for all $i = 1,2,\cdots,k$. 

% denotes applying the multi-linear form $\partial^{2j+1} f (0)$ to 

For any $i,k,n$, it holds that 
\begin{align*} 
    &\; \E \[ \left\| \frac{1}{2} \( f (\delta v_i) - f (-\delta v_i) \) v_i \right\|^2 \] - \left\| \E \[ \frac{ \sqrt{k}}{2} \( f (\delta v_i) - f (-\delta v_i) \) v_i \] \right\|^2 \\ 
    =& \; 
    \E \[ \left\| \( \delta v_i^\top \nabla f (0) + R_i \) v_i \right\|^2 \] 
    - 
    k \left\| \E \[ \( \delta v_i^\top \nabla f (0) + R_i \) v_i \]  \right\|^2 \\ 
    \overset{\textcircled{1}}{=}& \; 
    \E \[ \delta^2 \nabla f (0)^\top  v_i v_i^\top \nabla f (0) + 2 \delta \nabla f (0)^\top v_i R_i + R_i^2 \] 
    - 
    k \left\| \E \[ \delta v_i v_i^\top \nabla f (0) + R_i v_i \] \right\|^2 . 
    % =& \; 
    % \E \[ \left\| \( \delta v_i^\top \nabla f (0) + \frac{\delta^2}{2} v_i^\top \nabla^2 f (0) v_i + \cdots \) v_i  \right\|^2 \] 
    % - 
    % \left\| \E \[ \( \delta v_i^\top \nabla f (0) + \frac{\delta^2}{2} v_i^\top \nabla^2 f (0) v_i + \cdots \) v_i \] \right\|^2 \\ 
    % =& \; 
\end{align*} 

Since $ \E \[ v_i v_i^\top \] = \frac{1}{n} I$, \textcircled{1} gives 
\begin{align*}
    & \; \E \[ \left\| \frac{1}{2} \( f (\delta v_i) - f (-\delta v_i) \) v_i \right\|^2 \] - \left\| \E \[ \frac{ \sqrt{k} }{2} \( f (\delta v_i) - f ( - \delta v_i) \) v_i \] \right\|^2 \\
    =& \; 
    \frac{ \delta^2 }{ n } \| \nabla f (0) \|^2 + 2 \delta \nabla f (0)^\top \E \[ R_i v_i \] + \E \[ R_i^2 \] 
    - 
    k \left\| \E \[ \frac{ \delta}{n} \nabla f (0) + R_i v_i \] \right\|^2 \\ 
    =& \; 
    \( \frac{\delta^2}{n}  - \frac{\delta^2 k }{n^2} \) \| \nabla f (0) \|^2 + \( 2 \delta  - \frac{2 \delta k }{n} \) \nabla f (0)^\top \E \[ R_i v_i \] + \( \E \[ R_i^2 \] - k \E \[ R_i v_i \]^\top \E \[ R_i v_i \] \) \\ 
    \overset{\textcircled{2}}{\le}& \; 
    \( \frac{\delta^2}{n}  - \frac{\delta^2 k }{n^2} \) \| \nabla f (0) \|^2 + \frac{L_3 \delta^4}{3} \( 1  - \frac{ k }{n} \) \| \nabla f (0) \| 
    +
    % \textcolor{red}{\text{check the constants }}
    \frac{L_3^2 \delta^6 }{36} . 
    % \( \E \[ R^2 \] - k \E \[ R v_i \]^\top \E \[ R v_i \] \) . 
\end{align*}

For the variance of the gradient estimator, we have 
\begin{align*} 
    &\; \E \[ \left\| \wh{\nabla} f_k^\delta (0) - \E \[ \wh{\nabla} f_k^\delta (0) \] \right\|^2 \] \\
    =& \; 
    \E \[ \left\| \wh{\nabla} f_k^\delta (0) \right\|^2 \] - \left\| \E \[ \wh{\nabla} f_k^\delta (0) \] \right\|^2 \\ 
    % =& \; 
    % \E \[ \left\| \frac{n}{2 \delta k} \sum_{i=1}^k \( f (\delta v_i) - f (\delta v_i) \) v_i - \nabla \wt{f}^\delta (0) \right\|^2 \] \\
    % \E \[ \left\| \wh{f}_k^\delta (0) \right\|^2 \] - \left\| \E \[ \wh{f}_k^\delta (0) \] \right\|^2 \\
    =& \; 
    \E \[ \left\| \frac{n}{2 \delta k} \sum_{i=1}^k \( f (\delta v_i) - f (\delta v_i) \) v_i \right\|^2 \] 
    - \left\| \frac{n}{2 \delta k} \sum_{i=1}^k \E \[ \( f (\delta v_i) - f (-\delta v_i) \) v_i \] \right\|^2 \\ 
    \overset{\textcircled{3}}{=}& \; 
    \frac{n^2}{\delta^2 k^2} \sum_{i=1}^k \E \[ \left\| \frac{1}{2} \( f (\delta v_i) - f (\delta v_i) \) v_i \right\|^2 \] \\
    &- \frac{ n^2 }{4 \delta^2 k^2 } \sum_{i,j=1}^k \E \[ \( f (\delta v_i) - f (-\delta v_i) \) v_i \]^\top \E \[ \( f (\delta v_j) - f (-\delta v_j) \) v_j \] , 
\end{align*} 
where the last equation follows from the orthonormality of $\{ v_1, v_2. \cdots, v_k \}$. By Proposition \ref{prop:uniform}, we know that 
$ \E \[ \( f (\delta v_i) - f (-\delta v_i) \) v_i \] = \E \[ \( f (\delta v_j) - f (-\delta v_j) \) v_j \]$ for all $i,j = 1,2,\cdots,k$. Thus \textcircled{3} gives
\begin{align*}
    & \; \E \[ \left\| \wh{\nabla} f_k^\delta (0) - \E \[ \wh{\nabla} f_k^\delta (0) \] \right\|^2 \] \\
    \overset{\textcircled{4}}{=}& \; 
    \frac{n^2}{\delta^2 k^2} \sum_{i=1}^k 
    \( \E \[ \left\| \frac{1}{2} \( f (\delta v_i) - f (-\delta v_i) \) v_i \right\|^2 \] - \left\| \E \[ \frac{ \sqrt{k} }{2} \( f (\delta v_i) - f (-\delta v_i) \) v_i \] \right\|^2 \).
    % \E \[ \left\| \frac{1}{2} \( f (\delta v_i) - f (\delta v_i) \) v_i \right\|^2 \] - \frac{ n^2 }{4 \delta^2 k^2 } \sum_{i=1}^k \left\| \E \[ \( f (\delta v_i) - f (-\delta v_i) \) v_i \] \right\|^2 
\end{align*}

Combining \textcircled{2} and \textcircled{4} gives 
\begin{align*}
    & \; \E \[ \left\| \wh{\nabla}f_k^\delta (0) - \E \[ \wh{\nabla}f_k^\delta (0) \] \right\|^2 \] \\
    % =& \; 
    % \sum_{i=1}^k \(  \E \[ \left\| \frac{n}{ 2 \delta k} \( f (\delta v_i) - f (\delta v_i) \) v_i \right\|^2 \] - \left\| \frac{1}{\sqrt{k}} \E \[ \frac{ n }{2 \delta } \( f (\delta v_i) - f (-\delta v_i) \) v_i \] \right\|^2 \) \\ 
    =& \; 
    \frac{n^2}{\delta^2 k^2}\sum_{i=1}^k \(  \E \[ \left\| \frac{1}{ 2 } \( f (\delta v_i) - f (\delta v_i) \) v_i \right\|^2 \] - \left\| \E \[ \frac{ \sqrt{k} }{2 } \( f (\delta v_i) - f (-\delta v_i) \) v_i \] \right\|^2 \) \\ 
    % \le& \; 
    % \( \frac{n}{k}  - 1 \) \| \nabla f (0) \|^2 + \frac{ n^2 }{\delta^2 k} \frac{L_3 \delta^4}{3} \( 1  - \frac{ k }{n} \) \| \nabla f (0) \| \\ 
    \le& \; 
    \( \frac{n}{k}  - 1 \) \| \nabla f (0) \|^2 + \frac{L_3 \delta^2}{3} \( \frac{n^2}{k}  - n \) \| \nabla f (0) \| + \frac{L_3^2 n^2 \delta^4 }{ 36 k } . 
\end{align*} 

\end{proof} 

\subsection{Bias of Gradient Estimator} 

While the estimator $\wh{\nabla}  f_k^\delta$ can significantly reduce variance, it does not sacrifice any bias accuracy. There has been a sequence of works on bias of gradient estimators of (Eq. \ref{eq:def-grad-est}) or alternatives of (Eq. \ref{eq:def-grad-est}) \citep{flaxman2005online,nesterov2017random,wang2021GW}. In this section, we provide a refined analysis on the bias, and present a bias bound of order $ \mathcal{O} \( \min \{ \delta, \delta^2 \} \)  $. 

\begin{theorem} 
    \label{thm:grad-bias}
    % If $f : \R^n \rightarrow \R$ is $(1,L_1)$-smooth and $()$, 
    % At any $x\$
    The gradient estimator $ \wh{\nabla}  f_k^\delta $ satisfies
    \begin{enumerate}[label=(\alph*)]
        \item If $ f $ is $(2, L_1)$-smooth, then for all $ x \in \R^n $, $ \left\| \E \[ \wh{\nabla}  f_k^\delta (x)  \] - \nabla f (x) \right\| \le \frac{L_1 n  \delta}{n+1} $.  
        \item If $f$ is $(4,L_4)$-smooth, then for all $x\in\mathbb{R}^n$, the bias of gradient estimator satisfies
            $$\left\|\E \[\wh{\nabla}  f_k^\delta (x)\]-\nabla f(x)\right\|\leq\frac{\delta^2}{2n}\sqrt{\sum_{i=1}^n\left(\sum_{j=1}^nF_{jji}\right)^2} + \frac{\delta^3 L_4 n}{24},$$
        where $F_{ijl}$ denotes the $(i,j,l)$-component of $\partial^3 f(x)$.
    \end{enumerate}
\end{theorem}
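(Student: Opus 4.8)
The plan is to compute $\E[\wh{\nabla} f_k^\delta(x)]$ exactly via Taylor expansion with integral (or Lagrange) remainder, using the fact that by Proposition \ref{prop:uniform} every $v_i$ is marginally uniform on $\S^{n-1}$, so
$$\E\[\wh{\nabla} f_k^\delta(x)\] = \frac{n}{2\delta}\,\E_{v\sim\S^{n-1}}\[\bigl(f(x+\delta v)-f(x-\delta v)\bigr)v\].$$
Thus the number of directions $k$ plays no role in the bias, and the whole computation reduces to the single-direction ($k=1$) case. First I would record the even/odd structure of the symmetric difference: writing a Taylor expansion of $f(x\pm\delta v)$ about $x$, all even-order terms cancel in $f(x+\delta v)-f(x-\delta v)$, leaving the first-order term $2\delta\,\partial^1 f(x)[v]$, the third-order term $\tfrac{\delta^3}{3}\partial^3 f(x)[v,v,v]$, and a fifth-order remainder.

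For part (a), I would stop the expansion after the first-order term with a second-order remainder: $\tfrac12\bigl(f(x+\delta v)-f(x-\delta v)\bigr) = \delta\,v^\top\nabla f(x) + \text{(remainder)}$, where the remainder is controlled in norm by $\tfrac{L_1\delta^2}{2}$ after using $(2,L_1)$-smoothness (the first-order total derivative is $L_1$-Lipschitz). Then $\E[(n/\delta)\cdot\delta\, v v^\top \nabla f(x)] = n\cdot\frac1n I\,\nabla f(x) = \nabla f(x)$ by Proposition \ref{prop:proj}, and the remainder term contributes at most $\tfrac{n}{\delta}\cdot\tfrac{L_1\delta^2}{2}\cdot\|\E[\|v\|\cdots]\|$; to get the sharper constant $\tfrac{L_1 n\delta}{n+1}$ rather than $\tfrac{L_1 n\delta}{2}$ I would instead bound the remainder more carefully, using that the odd part of the Taylor remainder integrates against $v$ and one can exploit $\E[v_i^2]=1/n$ plus the mean-value form of the remainder — this matches the known Flaxman-type bound and is the one genuinely delicate estimate in part (a).

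For part (b), assuming $(4,L_4)$-smoothness, I would carry the expansion to third order explicitly and push the error into a fourth-order Lagrange remainder: $\tfrac12\bigl(f(x+\delta v)-f(x-\delta v)\bigr) = \delta\, v^\top\nabla f(x) + \tfrac{\delta^3}{6}\partial^3 f(x)[v]^{\otimes 3}\cdot\tfrac{?}{?}$ — more precisely the cubic term is $\tfrac{\delta^3}{3!}\partial^3f(x)[v,v,v]$ with the factor $\tfrac12$ already absorbed — plus a term bounded by $\tfrac{L_4\delta^4}{4!}$ in absolute value (using Proposition \ref{prop:smooth} to control $\|\partial^4 f\|\le L_4$; note the $\delta^4$ remainder paired with the $v$ outside and the $n/\delta$ prefactor yields the claimed $\tfrac{\delta^3 L_4 n}{24}$). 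Multiplying by $\tfrac{n}{2\delta}v$ and taking expectation: the linear term again reproduces $\nabla f(x)$ exactly; the cubic term becomes $\tfrac{n\delta^2}{6}\,\E\bigl[\partial^3 f(x)[v,v,v]\,v\bigr]$, whose $i$-th coordinate is $\tfrac{n\delta^2}{6}\sum_{j,l,m}F_{jlm}\,\E[v_jv_lv_mv_i]$. By Proposition \ref{prop:fourth} the only surviving fourth-moment terms force the indices to pair up, giving $\E[v_jv_lv_mv_i]=\tfrac{1}{n(n+2)}$ when $\{j,l,m,i\}$ splits into two equal pairs and $=\tfrac{3}{n(n+2)}$ when all four are equal; collecting these, the $i$-th coordinate of the cubic contribution equals $\tfrac{n\delta^2}{6}\cdot\tfrac{3}{n(n+2)}\sum_j F_{jji} = \tfrac{\delta^2}{2(n+2)}\sum_j F_{jji}$, and taking the Euclidean norm over $i$ gives the bias main term. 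I expect the main obstacle to be the bookkeeping in this fourth-moment contraction — correctly counting the combinatorial multiplicities of index patterns in $\sum_{j,l,m}F_{jlm}\E[v_jv_lv_mv_i]$ and simplifying via the symmetry of $F$ — and reconciling the resulting constant $\tfrac{1}{2(n+2)}$ with the stated $\tfrac{1}{2n}$ (the latter is a clean upper bound since $\tfrac{1}{n+2}\le\tfrac1n$). The remainder estimate is then routine: it is bounded by $\tfrac{n}{2\delta}\cdot\E[\|v\|]\cdot\tfrac{2L_4\delta^4}{24} = \tfrac{n L_4\delta^3}{24}$.
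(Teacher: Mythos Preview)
Your approach to part (b) is essentially identical to the paper's: reduce to the single-direction case via Proposition~\ref{prop:uniform}, Taylor-expand to third order with a fourth-order Lagrange remainder, use Proposition~\ref{prop:proj} to recover $\nabla f(x)$ from the linear term, use Proposition~\ref{prop:fourth} to contract the cubic term to $\tfrac{3}{n(n+2)}\sum_j F_{jji}$, and bound the remainder by $\tfrac{L_4\delta^4}{24}$. Your identification of the combinatorial bookkeeping as the main chore, and the observation that $\tfrac{1}{2(n+2)}\le\tfrac{1}{2n}$ explains the stated constant, are both on target. (A minor cosmetic difference: the paper expands $f(\delta v)$ alone and kills the even-degree terms by parity after multiplying by $v$, whereas you take the symmetric difference first; the two are equivalent.)

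Part (a), however, has a genuine gap. Your Taylor-with-remainder approach yields $\tfrac{nL_1\delta}{2}$, and the suggested refinement---``exploit $\E[v_i^2]=1/n$ plus the mean-value form''---does not get you to $\tfrac{nL_1\delta}{n+1}$. Concretely, writing the bias as $\tfrac{n}{\delta}\E[R(v)v]$ with $|R(v)|\le\tfrac{L_1\delta^2}{2}$ and using $\E[(u^\top v)^2]=1/n$ via Cauchy--Schwarz gives at best $\tfrac{\sqrt{n}\,L_1\delta}{2}$, which is still larger than $\tfrac{n}{n+1}L_1\delta\approx L_1\delta$ for $n\ge 4$. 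The paper obtains the sharp constant by a different route: it invokes the divergence-theorem identity (Lemma~\ref{lem:grad-bias}) $\E[\wh{\nabla}f_k^\delta(x)] = \nabla f^\delta(x)$, where $f^\delta(x)=\tfrac{1}{\delta^n V_n}\int_{\delta\B^n}f(x+u)\,du$ is the ball average. The bias then becomes $\|\nabla f^\delta(x)-\nabla f(x)\|\le\tfrac{1}{\delta^n V_n}\int_{\delta\B^n}L_1\|u\|\,du = \tfrac{L_1 n\delta}{n+1}$, where the gain comes from averaging $\|u\|$ over the \emph{solid ball} rather than bounding it pointwise on the sphere. This is not a sharpening of the Taylor remainder but a structurally different identity; your proposal does not contain this idea.
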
 

Item (a) in Theorem \ref{thm:grad-bias} is due to \citet{flaxman2005online}. This fact can be viewed as a consequence of the Stokes' theorem, or the fundamental theorem of (geometric) calculus, or the divergence theorem. A proof for item (a) using divergence theorem is in Appendix for completeness. 

Item (b) is a refined analysis of Taylor expansion and spherical random projection properties. More specifically, we expand $f (x+ \delta v_i) v_i$ up to fourth order and repeatedly exploit Propositions \ref{prop:proj} and \ref{prop:fourth}. Below we provide a detailed proof for item (b). 
% To date, the best bias bound 
% Previously,  bias of gradient estimators have been 

% Now let $u$ be perpendicular to a given vector $v_1$, then 

% Consider the orthogonal complement subspace of $ \text{span} \{ v_1, v_2,\cdots,v_k \} $, which is denoted by $ \{ v_1, v_2,\cdots,v_k \}^\perp $. As a convention, we assume $ \{ 0 \}^\perp = \R^n $ Let $P_{ \{ v_1, v_2,\cdots,v_k \}^\perp }$ be the projection onto the subspace $\{ v_1, v_2,\cdots,v_k \}^\perp$. 

% Also, we let $ \wt{f}^\delta (x) = \frac{1}{ \delta^{n} V_{n} } \int_{ v \in \delta \B^n } f (x + v) \, dv. $ 
% \begin{align*}
%     \wt{f}^\delta (x) = \int_{ } f (x + \delta )
% \end{align*}

% \begin{lemma}
%     If $f$ is $(3,L_3)$-smooth, for all $x\in\mathbb{R}^n$, the bias of gradient estimator satisfies
%     $$\left\|\E [\wh{\nabla}  f_k^\delta (x)]-\nabla f(x)\right\|\leq\frac{\delta^2}{2n}\sqrt{\sum_{i=1}^n\left(\sum_{j=1}^nF_{jji}\right)^2} + \frac{\delta^3L_3n}{24},$$
%     where $F_{ijl}$ denotes the $(i,j,l)$-component of $\partial^3f(x)$.
% \end{lemma}
\begin{proof}[Proof of Theorem \ref{thm:grad-bias}(b)]
Without loss of generality, let $x=0$. By Proposition \ref{prop:uniform}, we know that $\E [\wh{\nabla}  f_k^\delta (x)] = \frac{n}{\delta}\E [f(\delta v)v]$, where $v$ is uniformly sampled from $\mathbb{S}^{n-1}$. Taylor expansion gives that
$$f(\delta v) = f(0) + \nabla f(0)^\tr(\delta v) + \frac{1}{2}(\delta v)^\tr\partial^2f(0)[\delta v] +\frac{1}{6}\partial^3f(0)[\delta v]+\frac{1}{24}\partial^4f(\xi(\delta v))[\delta v],$$
where we use notation $\xi(\delta v)$ to indicate that $\xi$ is a function of $\delta v$.

Therefore, the expectation can be written as
\begin{align*} 
    \E[f(\delta v)v] 
    % =& \frac{1}{A_n}\int_{\mathbb{S}^{n-1}}f(\delta v)v \df\sigma(v)\\
    \overset{\textcircled{1}}{=}& \; 
    \E \[ f(0)v \] + \delta \E \[ v v^\tr \nabla f(0) \] + \frac{\delta^2}{2} \E \[ \left(v^\tr\nabla^2f(0)v\right)v \] \\
    &+ \frac{\delta^3}{6} \E \[ \partial^3 f(0)[v] v \] + \frac{\delta^4}{24} \E \[ \partial^4f(\xi(\delta v))[v] v \] . 
    % \frac{1}{A_n}\int_{\mathbb{S}^{n-1}}f(0)v\df\sigma(v) + \frac{\delta}{A_n}\int_{\mathbb{S}^{n-1}}v\nabla f(0)^\tr v\df\sigma(v) + \frac{\delta^2}{2A_n}\int_{\mathbb{S}^{n-1}}\left(v^\tr\nabla^2f(0)v\right)v\df\sigma(v)\\
    % &+ \frac{\delta^3}{6A_n}\int_{\mathbb{S}^{n-1}}\partial^3f(0)[\delta v]v\df\sigma(v) + \frac{\delta^4}{24A_n}\int_{\mathbb{S}^{n-1}}\partial^4f(\xi(\delta v))[v]v\df\sigma(v). 
\end{align*} 
% For the first and the third term, 
The first and the third term in \textcircled{1} are zero, since the expectation is with respect to a uniform distribution over the unit sphere and the integrands are odd. Thus we have 
% we have
\begin{align*}
% \label{bias:p1}
    \E \[ f(0)v \] = \E \[ \left(v^\tr\nabla^2f(0)v\right)v \] \overset{\textcircled{2}}{=} 0
% \frac{1}{A_n}\int_{\mathbb{S}^{n-1}}f(0)v\df\sigma(v)=\frac{\delta^2}{2A_n}\int_{\mathbb{S}^{n-1}}\left(v^\tr\nabla^2f(0)v\right)v\df\sigma(v)=0. 
\end{align*} 
For the second term in \textcircled{1}, 
% we notice that the $i$-th component of $v\nabla f(0)^\tr v$ is $\sum_{j=1}^n\nabla f(0)_jv_iv_j$. For $i\neq j$, 
% we have $\int_{\mathbb{S}^{n-1}}v_jv_j\df\sigma(v) = 0$. For $i=j$,  $\int_{\mathbb{S}^{n-1}}v_i^2\df\sigma(v) = \frac{A_n}{n}$. Therefore, 
Proposition \ref{prop:uniform} gives 
\begin{align*}
    % \label{bias:p2} 
    \E \[ v v^\tr \nabla f(0) \] \overset{\textcircled{3}}{=} \frac{1}{n} \nabla f (0). 
    % \frac{\delta}{A_n}\int_{\mathbb{S}^{n-1}}v\nabla f(0)^\tr v\df\sigma(v) = \frac{\delta}{n}\nabla f(0). 
\end{align*} 
For the forth term, we denote $\partial^3(f(0))$ as $F$ for simplicity. Then the $i$-th component of $\partial^3f(0)[v]v$ is $\sum_{j,l,p}F_{jlp}v_jv_lv_pv_i$. 
% Similar to the previous paragraph, for $i\neq j$, we have $\int_{\mathbb{S}^{n-1}}v_i^2v_j^2\df\sigma(v) = \frac{A_n}{n^2 + 2n}$; for $i=j=l=p$, we have $\int_{\mathbb{S}^{n-1}}v_i^4\df\sigma(v)=\frac{3A_n}{n^2+2n}$, and the integral of the other terms equal to $0$. Therefore,
By Proposition \ref{prop:fourth}, we have 
\begin{align*}
    % \left(\frac{\delta^3}{6A_n}\int_{\mathbb{S}^{n-1}}\partial^3f(0)[\delta v]v\df\sigma(v)\right)_i
    \( \E \[ \partial^3f(0)[v]v \] \)_i
    =&\;
    \sum_{j,l,p}F_{jlp} \E \[ v_j v_l v_p v_i \]\\
    =&\; 
    \sum_{j\neq i}\frac{1}{n^2+2n}\left(F_{jji}+F_{jij}+F_{ijj}\right)+\frac{3}{n^2+2n}F_{iii} \\
    =& \; 
    \frac{3}{n^2+2n}\sum_{j=1}^nF_{jji}. 
\end{align*}
Thus it holds that 
\begin{align*}
% \label{bias:p3}
    \frac{\delta^3}{6}
    \E \[ \partial^3 f(0)[ v]v \] \overset{\textcircled{4}}{=} 
    % \int_{\mathbb{S}^{n-1}}\partial^3 f(0)[\delta v]v\df\sigma(v)\right\| = 
    \frac{\delta^3}{2n^2+4n}\sqrt{\sum_{i=1}^n\left(\sum_{j=1}^nF_{jji}\right)^2}.
\end{align*}

For the fifth term, Proposition \ref{prop:smooth} gives that $\|\partial^4f(x)\|\leq L_4$, so we have
\begin{align*} 
% \label{bias:p4} 
    \left\|\frac{\delta^4}{24} \E \[ \partial^4f(\xi(\delta v))[\delta v]v \] \right\| 
    \overset{\textcircled{5}}{\leq} 
    \frac{\delta^4 L_4}{24}.
\end{align*} 
Now combining \textcircled{2} \textcircled{3} \textcircled{4} \textcircled{5},  
% (\ref{bias:p1}), (\ref{bias:p2}), (\ref{bias:p3}) and (\ref{bias:p4}), 
we arrive at the upper bound
\begin{align*}
\left\|\E[f(\delta v)v]-\frac{\delta}{n}\nabla f(0)\right\|
% =&\left\|\frac{\delta^3}{6A_n}\int_{\mathbb{S}^{n-1}}\partial^3f(0)[\delta v]v\df\sigma(v) + \frac{\delta^4}{24A_n}\int_{\mathbb{S}^{n-1}}\partial^4f(\xi(\delta v))[v]v\df\sigma(v)\right\|\\
\leq \frac{\delta^3}{2n^2+4n}\sqrt{\sum_{i=1}^n\left(\sum_{j=1}^n F_{jji}\right)^2} + \frac{\delta^4 L_4 }{24}
\end{align*}
and
$$\left\|\frac{n}{\delta}\E[f(\delta v)v]-\nabla f(0)\right\|\leq\frac{\delta^2}{2n}\sqrt{\sum_{i=1}^n\left(\sum_{j=1}^nF_{jji}\right)^2} + \frac{\delta^3 L_4 n}{24}.$$
\end{proof}

\section{Hessian Estimation}

% \begin{align*} 
%     \wh{\H}f_k^\delta (x) := \frac{n^2}{ 8 \delta^2 k} \sum_{ i=1 }^k \( f (x + \delta v_i + \delta w_i ) - f (x - \delta v_i + \delta w_i) - f (x + \delta v_i - \delta w_i) + f (x - \delta v_i - \delta w_i) \) ( v_i w_i^\top + w_i^\top v_i )  . 
% \end{align*} 

Previously, \citet{wang2022hess} introduced the Hessian estimator (Eq. \ref{eq:def-hess-est}) with $k = 1$, over Riemannian manifolds. Similar to the gradient case, when we sample orthogonal frames for estimation, the variance can be reduced. 
% Yet a curse of dimensionality occurs since 
As previously discussed, the variance of $\wh{\H} f_k^\delta (x)$ for a smooth $f$ defined in $\R^n$ is of order 
\begin{align*} 
    \mathcal{O} \( \( \frac{n^2}{k^2} - 1 \) + \( \frac{n^4}{k^2} - k^2 \) \delta^2 + \frac{n^4 \delta^4 }{k^2} \) . 
\end{align*} 
Similar to its gradient estimator counterpart, variance of $\wh{\H} f_k^\delta (x)$ eventually goes to $ \mathcal{O} \( \delta^4 \) $ when $k = n$. However, the task of Hessian estimation is harder, since: 
\begin{enumerate}
    \item When $k$ is small compare to $n$, the variance is of order $\mathcal{O} \( n^2 \)$, which is worse than that for the gradient estimator. 
    \item The estimator $ \wh{\H} f_k^\delta (x) $ requires $ \Theta (k^2) $ samples, which means it takes $ \Theta (n^2) $ samples to reach a negligible variance. 
    % This requirement on number of samples is also worse than that for the gradient estimator. 
    % to reach 
\end{enumerate} 

% Using the random projection argument by \citep{??}, we can see that 

% generalize the sampling from unit spheres to 
% , but no discussions on variance was provided. 

% The results for Hessian estimators essentially follows from a same 

\subsection{Variance of Hessian Estimator} 

Similar to that for gradient estimators, the variance of Hessian estimator is bounded via high-order Taylor expansion and random projection arguments. A high-precision bound on the variance is in Theorem \ref{thm:hess-var}. 

\begin{theorem}
    \label{thm:hess-var}
    If the underlying function is $(4, L_4)$-smooth and $(6, L_6)$-smooth, then the Hessian estimator $ \wh{\H} f_k^\delta (x) $ (Eq. \ref{eq:def-hess-est}) with $\delta > 0$ satisfies, for all $ x \in \R^n$
    \begin{align*}
        &\; \E \[ \left\| \wh{\H} f_k^\delta (x) - \E \[ \wh{\H} f_k^\delta (x) \] \right\|_F^2 \] \\
        \le& \; 
        \left\| \nabla^2 f (x) \right\|_F^2 \( \frac{n^2}{k^2} - 1 \) + 2 \delta^2 L_4 \left\| \nabla^2 f (x) \right\| \( \frac{n^4}{k^2} - n^2 \) + \mathcal{O} \( \( L_6 n^2 \| \nabla^2 f (x) \| + \frac{n^4 L_4^2 }{k^2} \) \delta^4 \) . 
    \end{align*} 
\end{theorem}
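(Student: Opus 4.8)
\textbf{Proof proposal for Theorem \ref{thm:hess-var}.}

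The plan is to mirror the structure of the proof of Theorem \ref{thm:grad-var}, but with the finite-difference increment now being the symmetrized second difference along a pair of orthogonal directions. Without loss of generality take $x = 0$. First I would introduce, for each pair $(i,j)$, the ``atom''
\begin{align*}
    T_{ij} := \frac{n^2}{8\delta^2} \big( f(\delta v_i + \delta w_j) - f(-\delta v_i + \delta w_j) - f(\delta v_i - \delta w_j) + f(-\delta v_i - \delta w_j) \big) (v_i w_j^\top + w_j v_i^\top),
\end{align*}
so that $\wh{\H} f_k^\delta(0) = \frac{1}{k^2}\sum_{i,j=1}^k T_{ij}$. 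A fourth-order Taylor expansion of $f$ around $0$, with the odd-order terms cancelling by the $\pm$ symmetry of the second difference, yields
\begin{align*}
    \frac{1}{4}\big( f(\delta v_i + \delta w_j) - f(-\delta v_i + \delta w_j) - f(\delta v_i - \delta w_j) + f(-\delta v_i - \delta w_j) \big) = \delta^2 v_i^\top \nabla^2 f(0) w_j + R_{ij},
\end{align*}
where $R_{ij}$ collects the fourth-order (and higher) remainder and, using $(4,L_4)$- and $(6,L_6)$-smoothness together with Proposition \ref{prop:smooth}, satisfies a bound of the form $|R_{ij}| \le C\delta^4 (L_4 + L_6\delta^2)$ for an absolute constant $C$ — more precisely I would keep the leading $\tfrac{1}{24}\delta^4(\partial^4 f(0)[\ldots])$ term explicit, since that is what produces the $\delta^2 L_4$ cross-term in the statement, and absorb only the genuinely $O(\delta^6)$ pieces into the $\mathcal{O}$-term. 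So $T_{ij} = \frac{n^2}{2}(v_i^\top\nabla^2 f(0)w_j + \tfrac{1}{\delta^2}R_{ij})(v_iw_j^\top + w_jv_i^\top)$.

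Next I would expand the variance. Writing $S := \sum_{i,j} T_{ij}$,
\begin{align*}
    \E\[\|\wh{\H} f_k^\delta(0) - \E[\wh{\H} f_k^\delta(0)]\|_F^2\] = \frac{1}{k^4}\Big( \E[\|S\|_F^2] - \|\E[S]\|_F^2 \Big) = \frac{1}{k^4}\sum_{(i,j),(i',j')} \big( \E\langle T_{ij}, T_{i'j'}\rangle_F - \langle \E T_{ij}, \E T_{i'j'}\rangle_F \big).
\end{align*}
The key combinatorial point — the analogue of step \textcircled{1} in Theorem \ref{thm:grad-var} — is that cross terms with disjoint index supports vanish or telescope. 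I expect that when $\{i,j\}\cap\{i',j'\} = \emptyset$ the covariance contribution is zero by orthogonality of the sampled frames (each of $v_i, w_j, v_{i'}, w_{j'}$ is orthogonal to the others within its own Stiefel matrix, and the $v$'s are independent of the $w$'s), and similarly for the ``one shared index'' cases one gets cancellation against the $\|\E[S]\|_F^2$ term. What survives is the diagonal-type sum, essentially $\frac{1}{k^4}\sum_{i,j}\big(\E\|T_{ij}\|_F^2 - \|\E T_{ij}\|_F^2\big)$ plus a controlled number of partially-overlapping terms. For the leading ($\delta\to 0$) behaviour, $T_{ij}\approx \frac{n^2}{2}(v_i^\top\nabla^2 f(0)w_j)(v_iw_j^\top + w_jv_i^\top)$; computing $\E\|T_{ij}\|_F^2$ requires the fourth-moment identities of Proposition \ref{prop:fourth} (applied to $v$ and to $w$ separately, then multiplied by independence), and $\E[T_{ij}] = \frac{n^2}{2}\cdot\frac{1}{n^2}\nabla^2 f(0)\cdot(\text{factor})$ by Proposition \ref{prop:mat}, so that $\|\E[\wh{\H}f_k^\delta(0)]\|_F^2 \to \|\nabla^2 f(0)\|_F^2$. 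Matching these should reproduce the $\|\nabla^2 f(0)\|_F^2(\tfrac{n^2}{k^2}-1)$ term; the $\delta^2$ term comes from the cross product between the leading part of $T_{ij}$ and the $\tfrac{1}{\delta^2}R_{ij}$ part (whose leading piece is $O(\delta^2)$), bounded using $\|\nabla^2 f(0)\|$ and $L_4$; and the $\delta^4$ term is everything else, bounded crudely with $L_4^2$ and $L_6\|\nabla^2 f(0)\|$.

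The main obstacle will be the bookkeeping of the fourth-moment computations for $\E\|T_{ij}\|_F^2$ and the partially-overlapping covariance terms: $\|v_iw_j^\top + w_jv_i^\top\|_F^2 = 2 + 2(v_i^\top w_j)^2$, and one must carefully track the scalar factor $(v_i^\top\nabla^2 f(0)w_j)^2$ against it, taking expectations over $v_i$ and $w_j$ via Proposition \ref{prop:fourth}, which produces terms like $\sum_{a,b,c,d}(\nabla^2 f)_{ab}(\nabla^2 f)_{cd}\E[v_{i,a}v_{i,c}]\E[w_{j,b}w_{j,d}]$ and their quartic cousins. Keeping these organized so that the $\tfrac{n^4}{k^2}$ scaling and the correct numerical constants emerge, while verifying that the genuinely off-support covariances really do cancel against $\|\E[S]\|_F^2$ (rather than merely being small), is where the care is needed; the rest is a bounded computation following the template already set by the gradient proof and by \citet{wang2022hess}.
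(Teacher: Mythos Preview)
Your overall setup (Taylor expand, identify the $\delta^4 R_{ij}$ and $\delta^6 S_{ij}$ remainders, then bound the variance) matches the paper, but your plan for the variance computation takes a harder road than necessary and rests on an intuition that is not quite right. You propose to expand $\E\|S\|_F^2 - \|\E S\|_F^2$ as a sum of covariances $\E\langle T_{ij},T_{i'j'}\rangle_F - \langle \E T_{ij},\E T_{i'j'}\rangle_F$ and argue that the off-diagonal ones vanish by orthogonality. But $T_{ij}$ carries the symmetrized matrix $v_iw_j^\top + w_jv_i^\top$, and the $v$-frame is only \emph{independent} of the $w$-frame, not orthogonal to it: for $i\neq i'$, $j\neq j'$ one has $\langle v_iw_j^\top + w_jv_i^\top,\,v_{i'}w_{j'}^\top + w_{j'}v_{i'}^\top\rangle_F = 2(v_i^\top w_{j'})(w_j^\top v_{i'})$, which is nonzero pointwise and couples with the scalar factors $(v_i^\top\nabla^2 f(0)w_j)(v_{i'}^\top\nabla^2 f(0)w_{j'})$. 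So the ``disjoint support $\Rightarrow$ zero covariance'' step, which was clean in the gradient proof, does not go through directly here; you would have to compute genuine mixed fourth moments in both the $v$- and $w$-frames, and the bookkeeping you flag as the main obstacle becomes substantially worse than you anticipate.

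The paper sidesteps all of this. It bounds $\E\|\wh\H f_k^\delta(0)\|_F^2$ from above and $\|\E\wh\H f_k^\delta(0)\|_F^2$ from below \emph{separately}, rather than computing their difference term-by-term. For the upper bound, it first applies $\|A+B\|_F^2\le 2\|A\|_F^2+2\|B\|_F^2$ to split off the $v_iw_j^\top$ part from the $w_jv_i^\top$ part; once that is done, $\{v_iw_j^\top\}_{i,j}$ is a \emph{pointwise} orthonormal family (since $v_i^\top v_{i'}=\delta_{ii'}$ and $w_j^\top w_{j'}=\delta_{jj'}$), so $\|\sum_{i,j}a_{ij}v_iw_j^\top\|_F^2 = \sum_{i,j}a_{ij}^2$ with no cross terms at all and no expectation yet taken. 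Then only $\E[(v_i^\top\nabla^2 f(0)w_j)^2]=\frac{1}{n^2}\|\nabla^2 f(0)\|_F^2$ is needed---Proposition~\ref{prop:fourth} is never used in this proof. For the lower bound on $\|\E\wh\H\|_F^2$, Proposition~\ref{prop:mat} gives the expectation exactly, and one expands the square and drops the nonnegative term. Subtracting yields the theorem with the shared cross term $2\delta^2\sum_{i,j}\E[v_i^\top\nabla^2 f(0)w_j\,R_{ij}]$ appearing in both bounds with matching coefficients, producing the $(\tfrac{n^4}{k^2}-n^2)$ factor. In short: the trick you are missing is to break the $vw^\top+wv^\top$ symmetry with an inequality so that pointwise orthonormality kills the cross terms \emph{before} taking expectation, eliminating the covariance bookkeeping entirely.
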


% Consider Taylor series 
% \begin{align*} 
%     f (\delta v + \delta w ) 
%     = 
%     f (0) + (\delta v + \delta w)^\top \nabla f (0) + \frac{\delta^2}{2} (v + w)^\top \nabla^2 f(0) (v+w) + .... 
% \end{align*} 

\begin{proof} 
Without loss of generality, let $x = 0$. 
Using Taylor series expansion, for any $v_i,w_j$, we have 
\begin{align*} 
    &\; \frac{1}{4} \( f (\delta v_i + \delta w_j ) - f (\delta v_i - \delta w_j ) - f ( - \delta v_i + \delta w_j ) + f ( - \delta v_i - \delta w_j ) \) \\
    % ( v_i w_j^\top + w_j v_i^\top ) \\
    =& \;  
    \delta^2 v_i^\top \nabla^2 f (0) w_j  + \delta^4 R_{ij} + \delta^6 S_{ij}, 
\end{align*} 
where 
\begin{align*} 
    R_{ij} &= \frac{1}{96} \( \partial^4 f (0) [v_i + w_j] - \partial^4 f (0) [-v_i + w_j] - \partial^4 f (0) [v_i - w_j] + \partial^4 f (0) [-v_i - w_j] \) , \\
    S_{ij} &= \frac{1}{2880} \( \partial^6 f (z_1) [v_i + w_j] - \partial^6 f (z_2) [-v_i + w_j] - \partial^6 f (z_3) [v_i - w_j] + \partial^6 f (z_4) [-v_i - w_j] \) 
\end{align*} 
with $ z_1, z_2, z_3, z_4 $ depending on $v_i,w_j$ and $\delta$. Since $f$ is $(4,L_4)$-smooth and $ (6,L_6) $-smooth, $ | R_{ij} | \le \frac{ 2 L_4 }{ 3 } $ and $ | S_{ij} | \le \frac{ 4 L_6 }{ 45 } $.

% --------------------------------------------------------------------------------

For the Frobenius norm of the Hessian estimator, we have 
\begin{align*} 
    % \E \[ 
    \left\| \wh{\H} f_k^\delta (0) \right\|_F^2 
    % \] 
    =& \; 
    \frac{ n^4 }{ 4 k^4 \delta^4 } \left\| \sum_{i,j=1}^k \( \frac{\delta^2}{2} v_i^\top \nabla^2 f (0) w_j + \frac{\delta^2}{2} w_j^\top \nabla^2 f (0) v_i + \delta^4 R_{ij} + \delta^6 S_{ij} \) (v_i w_j^\top + w_j v_i^\top ) \right\|_F^2 \\ 
    % =& \; 
    % \frac{ n^4 }{ 4 k^4 } \left\| \sum_{i,j=1}^k v_i^\top \nabla^2 f (0) w_j (v_i w_j^\top + w_j v_i^\top ) + R \right\|_F^2 \\ 
    % \approx& \; 
    % \frac{ n^4 }{ 4 k^4 } \left\| \sum_{i,j=1}^k v_i^\top \nabla^2 f (0) w_j (v_i w_j^\top + w_j v_i^\top )  \right\|_F^2 \\ 
    \le& \; 
    \frac{ n^4 }{ 2 k^4 } \left\| \sum_{i,j=1}^k \(  \frac{1}{2} v_i^\top \nabla^2 f (0) w_j + \frac{1}{2} w_j^\top \nabla^2 f (0) v_i + \delta^2 R_{ij} + \delta^4 S_{ij} \) v_i w_j^\top \right\|_F^2 \\
    &+ \frac{ n^4 }{ 2 k^4 } \left\| \sum_{i,j=1}^k \( \frac{1}{2} v_i^\top \nabla^2 f (0) w_j + \frac{1}{2} w_j^\top \nabla^2 f (0) v_i + \delta^2 R_{ij} + \delta^4 S_{ij} \) w_j v_i^\top \right\|_F^2 \\
    \overset{\textcircled{1}}{\le}&\; 
    \frac{ n^4 }{ 4 k^4 } \left\| \sum_{i,j=1}^k \( v_i^\top \nabla^2 f (0) w_j + \delta^2 R_{ij} + \delta^4 S_{ij} \) v_i w_j^\top \right\|_F^2 \\ 
    &+ \frac{ n^4 }{ 4 k^4 } \left\| \sum_{i,j=1}^k \( w_j^\top \nabla^2 f (0) v_i + \delta^2 R_{ij} + \delta^4 S_{ij} \) v_i w_j^\top \right\|_F^2 \\
    &+ \frac{ n^4 }{ 4 k^4 } \left\| \sum_{i,j=1}^k \( v_i^\top \nabla^2 f (0) w_j + \delta^2 R_{ij} + \delta^4 S_{ij} \) w_j v_i^\top \right\|_F^2 \\
    &+ \frac{ n^4 }{ 4 k^4 } \left\| \sum_{i,j=1}^k \( w_j^\top \nabla^2 f (0) v_i + \delta^2 R_{ij} + \delta^4 S_{ij} \) w_j v_i^\top \right\|_F^2 , 
    % &+ \frac{ n^4 }{ 4 k^4 } \left\| \sum_{i,j=1}^k \( \frac{1}{2} v_i^\top \nabla^2 f (0) w_j + \frac{1}{2} w_j^\top \nabla^2 f (0) v_i + \frac{R_{ij}}{ \delta^2} + \frac{S_{ij}}{ \delta^2} \) w_j v_i^\top \right\|_F^2. 
\end{align*} 
where both inequalities use $ \| A + B \|_F^2 \le 2 \| A \|_F^2 + 2 \| B \|_F^2 $ for any matrices $A,B$ of same size. Next, we will bound the expectation of the first term in \textcircled{1}, all other terms can be bounded using similar arguments. 
% Clearly, $R_{ij} = R_{ji}$ and $S_{ij} = S_{ji}$. 

% --------------------------------------------------------------------------------

For the first term in \textcircled{1}, we have 
\begin{align*} 
    &\; \left\| \sum_{i,j=1}^k \( v_i^\top \nabla^2 f (0) w_j + \delta^2 R_{ij} + \delta^4 S_{ij} \) v_i w_j^\top \right\|_F^2 \\
    =&\; 
    tr \( \( \sum_{i,j=1}^k \( v_i^\top \nabla^2 f (0) w_j + \delta^2 R_{ij} + \delta^4 S_{ij} \) v_i w_j^\top \)^\top \( \sum_{a,b=1}^k \( v_a^\top \nabla^2 f (0) w_b + \delta^2 R_{ab} + \delta^4 S_{ab} \) v_a w_b^\top \) \) \\ 
    =&\; 
    \sum_{i,j=1}^k \( v_i^\top \nabla^2 f (0) w_j + \delta^2 R_{ij} + \delta^4 S_{ij} \)^2 \\ 
    % v_i w_j^\top w_j v_i^\top \) \\ 2 
    \overset{\textcircled{2}}{=}& \; 
    \sum_{i,j=1}^k \( v_i^\top \nabla^2 f (0) w_j \)^2 + \sum_{i,j=1}^k 2 \( v_i^\top \nabla^2 f (0) w_j \) \(  \delta^2 R_{ij} + \delta^4 S_{ij} \) + \sum_{i,j=1}^k \( \delta^2 R_{ij} + \delta^4 S_{ij} \)^2 , 
    % =& \; 
    % \sum_{i,j=1}^k \( v_i^\top \nabla^2 f (0) w_j \)^2 \\
    % \overset{\text{in exp.}}{=}& \; 
    % \frac{k^2}{n^2} \left\| \nabla^2 f (0) \right\|_F^2, 
\end{align*} 
where the second last line uses orthogonality of $ v_i$'s and $w_j$'s. 
% More specifically, we have $ \sum_{i,j=1}^k \( v_i^\top \nabla^2 f (0) w_j \) $

Since $ \E \[ \( v_i^\top \nabla^2 f (0) w_j \)^2 \] = \E \[ tr \( v_i v_i^\top \nabla^2 f (0) w_j w_j^\top \nabla^2 f (0) \) \] = \frac{1}{n^2} \| \nabla^2 f (0) \|_F^2 $, taking expectation on both sides of \textcircled{2} gives, 
\begin{align*}
    &\; \E \[ \left\| \sum_{i,j=1}^k \( v_i^\top \nabla^2 f (0) w_j + \delta^2 R_{ij} + \delta^4 S_{ij} \) v_i w_j^\top \right\|_F^2 \] \\ 
    \le&\; 
    \frac{k^2}{n^2} \left\| \nabla^2 f (0) \right\|_F^2 + 2 \delta^2 \sum_{i,j=1}^k \E \[ v_i^\top \nabla^2 f (0) w_j R_{ij} \] + 2 k^2 \left\| \nabla^2 f (0) \right\| \frac{ 4 L_6 \delta^4 }{ 45 } + k^2 \( \frac{ 2 L_4 \delta^2}{3} + \frac{ 4 L_6 \delta^4}{45 } \)^2 . 
\end{align*} 

% ----------------------------------------------------------------------------------

% So, the leading term is bounded by 
% Also, we have 
% \begin{align*} 
%     &\; \left\| \E \[ \frac{n^2}{2 k^2 \delta^2} \sum_{i,j=1}^k \( \delta^2 v_i^\top \nabla^2 f (0) w_j + R_{ij} + S_{ij} \) \( v_i w_j^\top + w_j v_i^\top \) \]  \right\|_F^2 \\ 
%     % =&\; 
%     % \left\| \E \[ \sum_{i,j=1}^k \( v_i^\top \nabla^2 f (0) w_j + \frac{R_{ij}}{2\delta^2} + \frac{S_{ij}}{2\delta^2} \) v_i w_j^\top \]  \right\|_F^2 \\ 
%     \ge&\; 
%     \left\| \nabla^2 f (0) \right\|_F^2 + \frac{n^4}{4 k^4 \delta^4 } \sum_{i,j=1}^k \E \[ tr \( \nabla^2 f (0) R_{ij} ( v_i w_j^\top + w_j v_i^\top ) \) \] \\
%     \ge 
% \end{align*} 
% which is good enough. 

% \textcolor{red}{checkpoint here} 

Also, we have 
\begin{align*} 
    &\; \left\| \E \[ \wh{\H} f_k^\delta (0) \] \right\|_F^2 \\
    =&\; \frac{n^4}{4 k^4} \left\| \E \[ \sum_{i,j = 1}^k \( \frac{ 1 }{2} v_i^\top \nabla^2 f (0) w_j + \frac{ 1 }{2} w_j^\top \nabla^2 f (0) v_i + \delta^2 R_{ij} + \delta^4 S_{ij} \) (v_i w_j^\top + w_j v_i^\top ) \]  \right\|_F^2 \\ 
    \overset{\textcircled{3}}{=}& \; 
    \frac{n^4}{4 k^4} \left\| \frac{ 2 k^2}{n^2} \nabla^2 f (0) + \E \[ \sum_{i,j = 1}^k \( \delta^2 R_{ij} + \delta^4 S_{ij} \) (v_i w_j^\top + w_j v_i^\top ) \]  \right\|_F^2 \\ 
    \ge& \; 
    \left\| \nabla^2 f (0) \right\|_F^2 + 2 \frac{n^4}{4 k^4} \cdot \frac{2 k^2}{n^2} tr \( \nabla^2 f (0) \E \[ \sum_{i,j=1}^k \( \delta^2 R_{ij} + \delta^4 S_{ij} \) (v_i w_j^\top + w_j v_i^\top ) \] \) \\
    \ge& \; 
    \left\| \nabla^2 f (0) \right\|_F^2 + \frac{2n^2 \delta^2}{k^2} \sum_{i,j=1}^k \E \[ v_i^\top \nabla^2 f (0) w_j R_{ij}  \] - 2 n^2 \left\| \nabla^2 f (0) \right\| \frac{ 4 L_6 \delta^4 }{45} , 
\end{align*} 
where \textcircled{3} uses Proposition \ref{prop:mat}. 
% and the second last line uses $ tr \( \nabla^2 f (0) R_{ij} v_i w_j^\top \) = v_i \nabla^2 f (0) w_j^\top R_{ij} \] $. 
% (and similarly for terms involving $ $). 

Collecting terms gives 
\begin{align*}
    &\; \left\| \wh{\H} f_k^\delta (0) \right\|_F^2 - \left\| \E \[ \wh{\H} f_k^\delta (0) \] \right\|_F^2 \\
    \le& \;  
    \frac{n^4}{k^4} \( \frac{k^2}{n^2} \left\| \nabla^2 f (0) \right\|_F^2 + 2 \delta^2 \sum_{i,j=1}^k \E \[ v_i^\top \nabla^2 f (0) w_j R_{ij} \] + 2 k^2 \left\| \nabla^2 f (0) \right\| \frac{4 L_6 \delta^4 }{45} + k^2 \( \frac{2 L_4 \delta^2}{3} + \frac{4 L_6 \delta^4}{45} \)^2 \) \\ 
    &- \left\| \nabla^2 f (0) \right\|_F^2 - \frac{2n^2 \delta^2}{k^2} \sum_{i,j=1}^k \E \[ v_i^\top \nabla^2 f (0) w_j R_{ij} \] + 2 n^2 \left\| \nabla^2 f (0) \right\| \frac{4 L_6 \delta^4 }{45} \\ 
    \le& \; 
    \left\| \nabla^2 f (0) \right\|_F^2 \( \frac{n^2}{k^2} - 1 \) + 2 \delta^2 \sum_{i,j=1}^k \E \[ v_i^\top \nabla^2 f (0) w_j R_{ij} \] \( \frac{n^4}{k^4} - \frac{n^2}{k^2} \) + \O \( \( L_6 n^2 \| \nabla^2 f (0) \|  + \frac{n^4 L_4^2}{k^2} \) \delta^4 \) \\
    \le& \; 
    \left\| \nabla^2 f (0) \right\|_F^2 \( \frac{n^2}{k^2} - 1 \) + 2 \delta^2 L_4 \left\| \nabla^2 f (0) \right\| \( \frac{n^4}{k^2} - n^2 \) + \O \( \( L_6 n^2 \| \nabla^2 f (0) \|  + \frac{n^4 L_4^2}{k^2} \) \delta^4 \) . 
\end{align*} 

% ---------------------------------------------------------------------------------- 

\end{proof}

\subsection{Bias of Hessian Estimator} 

Similar to the gradient case, the Hessian estimator $ \wh{\H} f_k^\delta (x)$ does not sacrifice any bias accuracy. Previously, \citet{wang2022hess} showed that the bias of $ \wh{\H} f_k^\delta (x) $ is of order $\mathcal{O} (\delta)$. In particular, 
\citet{wang2022hess} derived a formula for how the local geometry of the Riemannian manifold would affect the bias of the Hessian estimator. 
In this paper, we focus on providing refined bias bounds in the Euclidean case, and provide an $\mathcal{O} \( \delta^2 \)$ bias bound, which is stated below in Theorem \ref{thm:hess-bias}. 
% \textcolor{red}{better wording on the comparison}

% There has been a sequence of works on bias of gradient estimators of (\ref{eq:def-grad-est}) or alternatives of (\ref{eq:def-grad-est}) \citep{}. In this section, we provide a refined analysis on the bias, and present a bias bound of order $ \mathcal{O} \( \min \{ \delta, \delta^2 \} \)  $ for $  $ functions. 

\begin{theorem} 
    \label{thm:hess-bias}
    % If $f : \R^n \rightarrow \R$ is $(1,L_1)$-smooth and $()$, 
    % At any $x\$
    The Hessian estimator $ \wh{\H} {f}_k^\delta $ satisfies
    \begin{enumerate}[label=(\alph*)]
        \item If $ f $ is $(3, L_2)$-smooth, then for all $ x \in \R^n $, $ \left\| \E \[ \wh{\H} {f}_k^\delta (x)  \] - \nabla^2 f (x) \right\| \le \frac{2n L_2 \delta}{n+1} $.  
        \item If $f$ is $(5,L_5)$-smooth, then for all $x\in\mathbb{R}^n$, the bias of $\wh{\H} f_k^\delta $ satisfies
        \begin{align*}
            \left\|\E\[\wh{\H} f_k^\delta (x)\]-\nabla^2f(x)\right\|\leq\frac{\delta^2}{n + 2}\left\|\widetilde{F}\right\|+\frac{4 \delta^3 L_5 n^2 }{15}, 
        \end{align*}
        where $\widetilde{F}$ is an $n\times n$ matrix and $\widetilde{F}_{ij}=\sum_{m=1}^n \[ \partial^4 f(x) \]_{mmij}$. 
    \end{enumerate} 
\end{theorem}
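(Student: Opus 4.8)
Throughout I take $x=0$ without loss of generality, since every quantity involved is translation-covariant. By Proposition \ref{prop:uniform} together with the independence of the two Stiefel samples, every summand of (Eq. \ref{eq:def-hess-est}) has the same expectation, so
\begin{align*}
\E\!\left[\wh{\H}f_k^\delta(0)\right] = \frac{n^2}{8\delta^2}\,\E\!\left[D\,(vw^\tr+wv^\tr)\right],
\end{align*}
where $v,w$ are independent, each uniform on $\S^{n-1}$, and $D:=f(\delta v+\delta w)-f(-\delta v+\delta w)-f(\delta v-\delta w)+f(-\delta v-\delta w)$ is the four-point second difference. In particular the bias does not depend on $k$, so it suffices to treat $k=1$. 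Note also that $D$ is symmetric under $v\leftrightarrow w$, which will simplify the bookkeeping.

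\textbf{Part (a).} The plan is to prove the \emph{double-smoothing identity} $\E[\wh{\H}f_k^\delta(0)]=\E[\nabla^2 f(\delta u+\delta u')]$, where $u,u'$ are independent, each uniform on the unit ball $\B^n$. This follows by applying the divergence-theorem identity $\frac{n}{2\delta}\,\E[(g(z+\delta v)-g(z-\delta v))v]=\E[\nabla g(z+\delta u)]$ (the same fact that underlies Theorem \ref{thm:grad-bias}(a); here $v$ is uniform on $\S^{n-1}$ and $u$ on $\B^n$, and only $g\in C^1$ is needed) twice: first with $g=f$ and $z=\pm\delta w$ to integrate out $v$ inside $D$, and then, componentwise, with $g$ equal to a first partial derivative of $f$ and $z=\delta u$ to integrate out $w$. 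Given the identity, $(3,L_2)$-smoothness yields
\begin{align*}
\left\|\E[\wh{\H}f_k^\delta(0)]-\nabla^2 f(0)\right\| = \left\|\E[\nabla^2 f(\delta u+\delta u')-\nabla^2 f(0)]\right\| \le L_2\,\E\|\delta u+\delta u'\| \le L_2\delta\big(\E\|u\|+\E\|u'\|\big) = \frac{2nL_2\delta}{n+1},
\end{align*}
using $\E\|u\|=\frac{n}{n+1}$ for $u$ uniform on $\B^n$.

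\textbf{Part (b).} I would Taylor-expand $f$ at $0$ to fifth order with Lagrange remainder and substitute into $D$. Since $D$ is the image of $f(s\delta v+t\delta w)$ under the operator $\sum_{s,t\in\{\pm1\}}st\,(\cdot)$, multilinearity shows that only the even-order total-derivative terms with an odd number of $v$'s and an odd number of $w$'s survive: the second-order term contributes $4\delta^2\,v^\tr\nabla^2 f(0)\,w$, the fourth-order term contributes $\frac{2\delta^4}{3}\big(\partial^4 f(0)[v,v,v,w]+\partial^4 f(0)[v,w,w,w]\big)$, and the fifth-order remainder is $\mathcal{O}(L_5\delta^5)$ via $\|\partial^5 f\|\le L_5$ (Proposition \ref{prop:smooth}) and $\|s\delta v+t\delta w\|\le2\delta$. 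Multiplying by $\frac{n^2}{8\delta^2}(vw^\tr+wv^\tr)$ and taking expectations: the quadratic term gives exactly $\nabla^2 f(0)$ by Proposition \ref{prop:mat}; the quartic term, written in coordinates and evaluated with the fourth-moment identities of Proposition \ref{prop:fourth} together with $\E[ww^\tr]=\frac1n I$, gives exactly $\frac{\delta^2}{n+2}\widetilde F$ with $\widetilde F_{ij}=\sum_m[\partial^4 f(0)]_{mmij}$; and the remainder term has spectral norm at most $\frac{n^2}{8\delta^2}\cdot\mathcal{O}(L_5\delta^5)\cdot2=\mathcal{O}(n^2 L_5\delta^3)$, using $\|vw^\tr+wv^\tr\|\le2$ (tracking the constants gives $\frac{4n^2 L_5\delta^3}{15}$). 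Collecting these three contributions yields the stated bound.

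\textbf{Main obstacle.} The delicate part is part (b): one must carefully track the combinatorics of the operator $\sum_{s,t}st\,(\cdot)$ to identify exactly which multilinear Taylor terms vanish and with what constants the rest survive, and then carry out the tensor contractions so that the symmetric $4$-tensor $\partial^4 f(0)$ contracts to precisely $\widetilde F$ with coefficient exactly $\frac1{n+2}$ — this is the only step requiring genuine bookkeeping with Proposition \ref{prop:fourth}. In part (a) the subtlety is conceptual rather than computational: a naive Taylor-remainder bound on $D$ only gives an $\mathcal{O}(n^2\delta)$ bias, so the divergence-theorem identity is genuinely needed to absorb the spherical averaging, and one should check that replacing a one-point difference by its antithetic two-point form leaves the expectation unchanged (via the substitution $v\mapsto-v$).
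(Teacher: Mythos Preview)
Your proposal is correct and matches the paper's argument closely: part (a) is exactly the paper's Lemma \ref{lem:hess-bias} (the double-smoothing identity $\E[\wh{\H}f_k^\delta(x)]=\nabla^2\wt f^\delta(x)$ obtained by two applications of the divergence theorem) followed by the same $(3,L_2)$-Lipschitz bound and the computation $\E\|u\|=\frac{n}{n+1}$, and part (b) is the same fifth-order Taylor expansion combined with the moment identities of Propositions \ref{prop:proj}--\ref{prop:mat}, landing on identical constants. The only cosmetic difference in (b) is that you keep the four-point difference $D$ and let the operator $\sum_{s,t\in\{\pm1\}}st$ annihilate the unwanted Taylor terms \emph{algebraically}, whereas the paper first reduces $\E[\wh{\H}f_k^\delta(0)]$ to $\frac{n^2}{\delta^2}\E[f(\delta v+\delta w)\,vw^\tr]$ and then kills the odd-order contributions via the parity of the uniform spherical law; both routes isolate the same quadratic and quartic terms and produce the same remainder bound $\frac{4n^2L_5\delta^3}{15}$.
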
 

A different form of Theorem \ref{thm:hess-bias} has appeared in \citep{wang2022hess}. The proof of item (a) in this theorem is provided in the Appendix, since its proof does not deviate much from that in \citep{wang2022hess}. 
% \textcolor{red}{
In \citep{wang2022hess}, the second author showed that the leading term of bias of the Hessian estimator is of order $ O \( \| \partial^4 f (x) \| \delta^2 \). $ Here we provide a more refined bound. In particular, the order 4 tensor $\partial^4 f (x)$ is contracted to a matrix and the operator norm of the matrix divided by $n$ is used to bound the bias. The contraction followed by a division of $n$ means that this bound is more robust and refined than the previous one \citep{wang2022hess}. In cases where $\partial^4 f (x)$ has only a few large entries, the bound in Theorem \ref{thm:hess-bias} can be smaller than the previous bound $ O \( \| \partial^4 f (x) \| \delta^2 \) $. 
% }

% -------------------------------------------------------- 
% \begin{lemma}
%     If $f$ is $(4,L_4)$-smooth, the bias of Hessian estimator satisfies
%     \begin{align*}
%         \left\|\E\[\wh{\H} f_k^\delta (0)\]-\nabla^2f(0)\right\|\leq\frac{\delta^2}{n}\left\|\widetilde{F}\right\|+\frac{\delta^3L_4n}{3},
%     \end{align*}
%     where $\widetilde{F}$ is a $n\times n$ matrix and $\widetilde{F}_{ij}=\sum_{k=1}^n\partial^4f(x)_{kkij}$.
% \end{lemma}

\begin{proof}[Proof of Theorem \ref{thm:hess-bias}(b)]
Without loss of generality, let $x=0$. From the definition 
\begin{align*}
    \wh{\H} f_k^\delta (0)=\frac{n^2}{8\delta^2k^2}\sum_{i,j=1}^k\big(&f(\delta v_i+\delta w_j)-f(-\delta v_i+\delta w_j)-f(\delta v_i-\delta w_j)\\
    &+f(-\delta v_i-\delta w_j)\big)\cdot(v_iw_j^\tr+w_jv_i^\tr). 
\end{align*}
Note that, by linearity of expectation and symmetry, $\E\[\wh{\H} f_k^\delta (0)\]=\frac{ n^2}{\delta^2}\E \[f(\delta v+\delta w)(vw^\tr)\]$, where $v$ and $w$ are independent and uniformly sampled from $\mathbb{S}^{n-1}$. Taylor expansion gives that
\begin{align*}
    f(\delta v+\delta w) = &f(0) + \partial^1f(0)[\delta v+\delta w] + \frac{1}{2}\partial^2f(0)[\delta v+\delta w]+\frac{1}{6}\partial^3f(0)[\delta v+\delta w]\\ &+\frac{1}{24}\partial^4f(0)[\delta v+\delta w]+\frac{1}{120}\partial^5f(\xi(\delta v+\delta w))[\delta v+\delta w],
\end{align*}
where we use notation $\xi(\delta v+\delta w)$ to show that $\xi$ is a function of $\delta v+\delta w$.

Therefore, the expectation can be written as
\begin{align*}
    & \;\E\[f(\delta v +\delta w)(vw^\tr)\]\\
    \overset{\textcircled{1}}{=}& \;
    \E \[ f(0) (vw^\tr) \] + \E \[  \partial^1f(0)[\delta v+\delta w] (vw^\tr) \] +  \frac{1}{2} \E \[ \partial^2f(0)[\delta v+\delta w] (vw^\tr) \] \\
    &+\frac{1}{6} \E \[ \partial^3f(0)[\delta v+\delta w] (vw^\tr) \] +
    \frac{1}{24} \E \[ \partial^4f(0)[\delta v+\delta w] (vw^\tr) \] \\ 
    &+ \frac{1}{120} \E \[ \partial^5f(\xi(\delta v+\delta w))[\delta v+\delta w] (vw^\tr) \]
    % \frac{1}{A_n^2}\intS\intS f(\delta v+\delta w)(vw^\tr)\df\sigma(v)\df\sigma(w)\\
    % =&\frac{1}{A_n^2}\intS\intS f(0)vw^\tr\df\sigma(v)\df\sigma(w)+\frac{\delta}{A_n^2}\intS\intS\partial^1f(0)[v+w]vw^\tr\df\sigma(v)\df\sigma(w)\\
    % +&
    % \frac{\delta^2}{2A_n^2}\intS\intS\partial^2f(0)[v+w]vw^\tr\df\sigma(v)\df\sigma(w)+\frac{\delta^3}{6A_n^2}\intS\intS\partial^3f(0)[v+w]vw^\tr\df\sigma(v)\df\sigma(w)\\
    % +&
    % \frac{\delta^4}{24A_n^2}\intS\intS\partial^4f(0)[v+w]vw^\tr\df\sigma(v)\df\sigma(w)\\
    % +&
    % \frac{\delta^5}{120A_n^2}\intS\intS\partial^5f(\xi(\delta v+\delta w))[v+w]vw^\tr\df\sigma(v)\df\sigma(w).
\end{align*}

Since $\E \[ vw^\tr \] \overset{\textcircled{2}}{=}0$, the first term in \textcircled{1} equals to $0$. 

For the second term in \textcircled{1}, the $(i,j)$-component of $\partial^1f(0)[v+w]vw^\tr$ is $\sum_l\partial^1 f(0)_l (v_l+w_l) v_iw_j$. For any $l$, $v_lv_iw_j$ is odd in $w_j$. Thus we have 
\begin{align*} 
    % \E \[ v_l v_i w_j \] 
    % = 
    \E \[  v_l v_i w_j | v_l = a, v_i = b \] = 0
\end{align*}
for any $ a,b $, which implies $ \E \[  v_l v_i w_j \] = 0$. Similarly $ \E \[ w_l v_i v_j \] = 0 $. This implies that the second term in \textcircled{1} is zero:  
% equals to $0$, and so is the term $w_lv_iv_j$. Thus we have 
\begin{align*}
% \label{bias:v:e1} 
    \E \[ \partial^1f(0)[\delta v+\delta w] (vw^\tr) \] \overset{\textcircled{3}}{=} 0.
    % \frac{\delta}{A_n^2}\intS\intS\partial^1f(0)[\delta v+\delta w]vw^\tr\df\sigma(v)\df\sigma(w)=0.
\end{align*} 
Similar arguments show that the forth term equals to $0$: 
\begin{align*}
% \label{bias:v:e1} 
    \E \[ \partial^3f(0)[\delta v+\delta w] (vw^\tr) \] \overset{\textcircled{4}}{=} 0.
    % \frac{\delta}{A_n^2}\intS\intS\partial^1f(0)[\delta v+\delta w]vw^\tr\df\sigma(v)\df\sigma(w)=0.
\end{align*} 

% ------------------------------------------------------ 

For the third term, the $(i,j)$-component of $\partial^1f(0)[v+w]vw^\tr$ is $\sum_{p,q}\partial^2f(0)_{pq}(v_p+w_p)(v_q+w_q)v_iw_j$. 
It holds that  
\begin{align*} 
    &\; \E \[ \partial^2 f (0) [ v + w] (vw^\tr) \] \\
    =&\; 
    \E \[ {2 \choose 0}\partial^2 f (0) [ v ] (vw^\tr) \] + \E \[ {2 \choose 1}\partial^2 f (0) [ v, w ] (vw^\tr) \] + \E \[ {2 \choose 2}\partial^2 f (0) [ w, w ] (vw^\tr) \] \\
    =&\; 
    \E \[ {2 \choose 1} \partial^2 f (0) [ v, w ] (vw^\tr) \] \\ 
    =&\; 
    \frac{2 \delta^2}{n^2} \partial^2 f (0) , 
\end{align*} 
where the second last equation follows from that expectation of terms of odd power of $v$ or $w$ is zero, and the last equation uses Proposition \ref{prop:mat}. 
% \begin{itemize}
%     \item $\E \[ v_p w_q v_i w_j \] = 0$ if any power of $ v_i $, $w_i$, $v_p$, $w_q$ is odd. 
%     \item $ \E \[ v_i^2 w_j^2 \] = \frac{1}{n^2} $ for any $i, j$, due to Proposition \ref{prop:proj}. 
%     % \item $ \E \[ v_i^4 \] = \frac{3}{n(n+2)} $ for any $i$, also due to Proposition \ref{prop:fourth}. 
%     % , which follows from the same argument as that fo 
%     % \item If $i\neq j$, then for $(p, q)=(i,j)$ or $(p,q)=(j,i)$, we have $\E \[  v_i^2w_j^2 \] =\frac{1}{n^2}$ and $\intS\intS v_i^3w_j\df\sigma(v)\df\sigma(w)=\intS\intS v_iw_j^3\df\sigma(v)\df\sigma(w)=\intS\intS v_iv_jw_iw_j\df\sigma(v)\df\sigma(w)=0$;
% \end{itemize} 
% The above calculation is simi 

% If $i\neq j$, then for $(p, q)=(i,j)$ or $(p,q)=(j,i)$, we have 

% $\intS\intS v_i^2w_j^2\df\sigma(v)\df\sigma(w)=\frac{A_n^2}{n^2}$ and $\intS\intS v_i^3w_j\df\sigma(v)\df\sigma(w)=\intS\intS v_iw_j^3\df\sigma(v)\df\sigma(w)=\intS\intS v_iv_jw_iw_j\df\sigma(v)\df\sigma(w)=0$; for the other terms, the integral equals to $0$. Thus we have
% $$\left(\frac{\delta^2}{2A_n^2}\intS\intS\partial^2f(0)[v+w]vw^\tr\df\sigma(v)\df\sigma(w)\right)_{ij}=\frac{\delta^2}{n^2}\nabla^2f(0)_{ij}.$$
% Calculation shows that the above equality also holds if $i=j$. Therefore we conclude that
% \begin{align}\label{bias:v:e2}
%     \frac{\delta^2}{2A_n^2}\intS\intS\partial^2f(0)[v+w]vw^\tr\df\sigma(v)\df\sigma(w)=\frac{\delta^2}{n^2}\nabla^2f(0).
% \end{align} 

% -------------------------------------------------------

Let $F = \partial^4 f (0)$ for simplicity. For the fifth term, 
% the $(i,j)$-component of $\partial^4f(0)[v+w]vw^\tr$ is $\sum_{p,q,r,s}F{pqrs}(v_p+w_p)(v_q+w_q)(v_r+w_r)(v_s+w_s)v_iw_j$, where we use $F_{pqrs}$ to denote the $(p,q,r,s)$-component of $\partial^4f(0)$ for simplicity. 
it holds that 
\begin{align*} 
    \E \[ F [v+w] ( v w^\tr ) \] 
    =&\; 
    \E \[ {4 \choose 0} F [v] ( v w^\tr ) \] + \E \[ {4 \choose 1} F [v,v,v,w] ( v w^\tr ) \] +
    \E \[ {4 \choose 2} F [v,v,w,w] ( v w^\tr ) \] \\
    &+ \E \[ {4 \choose 3} F [v,w,w,w] ( v w^\tr ) \] + 
    \E \[ {4 \choose 4} F [w] ( v w^\tr ) \] \\
    =& \; 
    \E \[ {4 \choose 1} F [v,v,v,w] ( v w^\tr ) \] + \E \[ {4 \choose 3} F [v,w,w,w] ( v w^\tr ) \] \\
    \overset{\textcircled{5}}{=}& \;
    8 \E \[ F [v,v,v,w] ( v w^\tr ) \] , 
    % \E \[ \sum_{pqrs} F_{pqrs} (v_p + w_p) (v_q + w_q) (v_r + w_r) (v_s + w_s) v_i w_j \] 
    % = 
    % \E \[ \sum_{k} v_k^2 v_i^2 w_j \] 
\end{align*} 
where the second equation uses the symmetric property to conclude that terms of odd powers are zero (similar to the previous arguments), and the last equation uses symmetry of $F$ and equivalence of $v$ and $w$.  
    % If $i\neq j$, we have $\intS\intS v_i^4w_j^2\df\sigma(v)\df\sigma(w)=\intS\intS v_i^2w_j^4\df\sigma(v)\df\sigma(w)=\frac{3A_n^2}{n^3+2n^2}$ and $\intS\intS v_i^2v_k^2w_j^2\df\sigma(v)\df\sigma(w)=\intS\intS v_i^2w_j^2w_k^2\df\sigma(v)\df\sigma(w)=\frac{A_n^2}{n^3+2n^2}$. The integral of the other terms equals to $0$. Thus we have
    
By Proposition \ref{prop:fourth}, we have that, for any $i,j$,  
\begin{align*}
    \E \[ \( F [v,v,v,w] ( v w^\tr ) \)_{ij} \] 
    =& \; 
    \E \[ \sum_{pqrs} F_{pqrs} v_p v_q v_r w_s v_i w_j \] \\
    =&\;  
    % \frac{}{} 
    \E \[ F_{iiij} v_i^4 w_j^2 \] + 3 \E \[ \sum_{m: 1\le m \le n, m \neq i}F_{mmij} v_i^2 v_m^2 w_j^2 \] \\
    \overset{\textcircled{6}}{=}& \;  
    \frac{3}{n^2(n + 2) } \sum_{m=1}^n F_{mmij}. 
\end{align*} 

Combining \textcircled{5} and \textcircled{6} gives
\begin{align*} 
    \frac{1}{24} \E \[ \[ \partial^4f(0)[\delta v+\delta w] (vw^\tr)\]_{ij} \] 
    \overset{\textcircled{7}}{\le} 
    \frac{\delta^4}{ n^2 (n + 2) } \sum_{m=1}^n F_{mmij}, 
\end{align*} 
where $F = \partial^4 f (0)$. 

% If $i\neq j$, we have $\intS\intS v_i^4w_j^2\df\sigma(v)\df\sigma(w)=\intS\intS v_i^2w_j^4\df\sigma(v)\df\sigma(w)=\frac{3A_n^2}{n^3+2n^2}$ and $\intS\intS v_i^2v_k^2w_j^2\df\sigma(v)\df\sigma(w)=\intS\intS v_i^2w_j^2w_k^2\df\sigma(v)\df\sigma(w)=\frac{A_n^2}{n^3+2n^2}$. The integral of the other terms equals to $0$. Thus we have 
% \begin{align*} 
%     \left(\frac{\delta^4}{24A_n^2}\intS\intS\partial^4f(0)[v+w]vw^\tr\df\sigma(v)\df\sigma(w)\right)_{ij}=\frac{\delta^4}{n^3+2n^2}\sum_{k=1}^nF_{kkij}.
% \end{align*}
% Calculation shows that the above equality also holds if $i=j$. Therefore we conclude that
% \begin{align}\label{bias:v:e3}
%     \left\|\frac{\delta^4}{24A_n^2}\intS\intS\partial^4f(0)[v+w]vw^\tr\df\sigma(v)\df\sigma(w)\right\|=\frac{\delta^4}{n^3+2n^2}\left\|\widetilde{F}\right\|,
% \end{align}
% where $\widetilde{F}$ is a $n\times n$ matrix and $\widetilde{F}_{ij}=\sum_{k=1}^nF_{kkij}$. 

For the sixth term in \textcircled{1}, Proposition \ref{prop:smooth} gives that $\|\partial^5f(x)\|\leq L_5$, so $|\partial^5f(\xi(\delta v+\delta w))[v+w]|\leq 32 L_5$. Besides, since $v,w\in\mathbb{S}^{n-1}$, $\|vw^\tr\|\leq1$. Therefore, we have
\begin{align}\label{bias:v:e4}
    \frac{\delta^5}{120} \left\| \E \[  \partial^5 f(\xi(\delta v+\delta w))[v+w]vw^\tr \right\| \] 
    \overset{\textcircled{8}}{\le}
    \frac{4\delta^5 L_5 }{15}, 
    % \left\|\frac{\delta^5}{120}\intS\intS\partial^5f(\xi(\delta v+\delta w))[v+w]vw^\tr\df\sigma(v)\df\sigma(w)\right\|\leq\frac{4\delta^5L_4}{15}, 
\end{align}
where $ \wt{F} $ is 
% Now combining (\ref{bias:v:e1}), (\ref{bias:v:e2}), (\ref{bias:v:e3}) and (\ref{bias:v:e4}), we arrive at the upper bound
% \begin{align*} 
    % \left\|\E [f(\delta v+\delta w)(vw^\tr)]-\frac{\delta^2}{n^2}\nabla^2f(0)\right\|\leq\frac{\delta^4}{n^3+2n^2}\left\|\widetilde{F}\right\|+\frac{4\delta^5L_4}{15} 
% \end{align*} 
% and
Collecting terms from \textcircled{1} \textcircled{2} \textcircled{3} \textcircled{4} \textcircled{7} \textcircled{8}, we have 
\begin{align*}
    \left\|\frac{n^2}{\delta^2}\E [f(\delta v+\delta w)(vw^\tr)]-\nabla^2f(0)\right\|\leq\frac{\delta^2}{n + 2}\left\|\widetilde{F}\right\|+\frac{ 4 \delta^3 L_5 n^2}{15}, 
\end{align*}
 where $\widetilde{F}$ is an $n\times n$ matrix and $\widetilde{F}_{ij}=\sum_{m=1}^n \[ \partial^4 f(0) \]_{mmij}$. 
\end{proof}

\section{Empirical Studies for Gradient Estimators}
\label{sec:exp}

In this section, we empirically study the newly introduced gradient estimator. The experiments are divided into three subsections. The first two subsections compare our method with existing methods, and the third subsection empirically verify the theoretical variance bound. To avoid clutter, only some results are listed in the main text, while additional results can be found in the Appendix. 

% \textit{\textbf{Note}}. The results for Hessian estimators are all deferred to the Appendix. 

% Errors (defined in Eq. \ref{eq:error-decomp}) of gradient estimators $ \wh{\nabla} f_k^\delta (x) $ with $k$ ranging from $1$ to $n$, in base-10 log-scale. Here $\delta = 0.01$ and $n = 500$. The underlying test function is $ f (x) = \exp ( (x_1 - 1) (x_2 + 2)) + \sum_{j=1}^{500} \sin (x_j)  $, where $x_j$ denotes the $j$-th component of vector $x$. The gradient is estimated at $x = 0$. The dashed red curve, as a function of $k$, is $ c (k) = \log \( \| \nabla f (x) \|^2 \( \frac{n}{k} - 1 \) + \delta^2 \( \frac{n^2}{k} - n \) \| \nabla f (x) \| + \frac{ \delta^4 n^2 }{ k } \)$, which is the base-10 log of of variance bound for the gradient estimators (up to constants). More details are in Section \ref{sec:exp}. 

% : the first part consists of comparison with existing stochastic gradient/Hessian estimators; the second part consists of comparison. 
% for $ k < n $. 

\subsection{Comparison with Stochastic Estimators} 

For the same number of function evaluations specified by $k$, and finite difference granularity $\delta$, we compare the estimators $ \wh{\nabla} f_k^\delta (x) $ with: 
\begin{itemize} 
    \item The estimator via spherical sampling \citep{flaxman2005online,wang2021GW}: 
    \begin{align} 
        \wh{\nabla} f_{k,S}^\delta (x) : = \frac{ n }{ 2 k \delta }\sum_{i=1}^k  \( f (x + \delta v_i) - f (x - \delta v_i) \) v_i , \label{eq:grad-sphere}
    \end{align}
    where $v_1, v_2, \cdots, v_k$ are uniformly $i.i.d.$ sampled from $ \S^{n-1} $.  
    \item The estimator via Gaussian sampling \citep{nesterov2017random}: 
    \begin{align}
        \wh{\nabla} f_{k,G}^\delta (x) : = \frac{ \sqrt{n} }{ 2 k \delta } \sum_{i=1}^k  \( f 
        \( x +  \frac{ \delta v_i}{\sqrt{n}} \) - f \(x - \frac{ \delta  v_i}{\sqrt{n}} \) \) v_i , \label{eq:grad-gauss} 
    \end{align} 
    where $v_1, v_2, \cdots, v_k \overset{i.i.d.}{\sim} \mathcal{N} \( 0 , I \)$. Note that the random vectors $v_i$ are divided by $\sqrt{n}$ so that in expectation the step size (granularity) is $ \Theta \( \delta \) $. 
    % This normalization ensures that, for different estimations, same value of $\delta$ translates to same finite difference granularity. 
    \item 
    % \textcolor{red}{
    The estimator via Rademacher random vectors \citep{wang2018stochastic,doi:10.1137/21M1392966}. Following \citep{doi:10.1137/21M1392966}, we say $ z = (z_1, z_2, \cdots, z_n) $ is a $k$-sparse Rademacher random vector if it can be constructed from the following sampling process. (1) A $k$-element subset $K$ of $\{ 1,2, \cdots, n \}$ is randomly selected (with uniform probability); Denote the elements of $K $ by $j_1, j_2, \cdots, j_k$.  
    % $ \| z\|_0 = k$, each $k$-element subset of $ \{ z_1, z_2, \cdots, z_n \} $ have equal probability of having all nonzero entries; 
    (2) Sample $k$ $iid$ Rademacher random variables $r_{1}, r_{2}, \cdots, r_{k} $, and define $z = (z_1, z_2, \cdots, z_n)$ as  
    % the entries of $z$ is defined as 
    \begin{align*} 
        z_i
        = 
        \begin{cases} 
            r_{i}, & \text{if } j_i \in K, \\ 
            0, & \text{otherwise}. 
        \end{cases} 
    \end{align*} 
    The the gradient estimator based on Rademacher random vector is 
    \begin{align} 
        \wh{\nabla} f_{k,R}^\delta (x) : = 
        \( g_1, g_2, \cdots, g_n \) 
        % \frac{ 1 }{ 2 k \delta } \sum_{i=1}^k  \( f 
        % \( x +  \frac{ \delta v_i }{ \sqrt{n} } \) - f \(x - \frac{ \delta  v_i}{ \sqrt{n} } \) \) , 
        \label{eq:grad-Rademacher} 
    \end{align} 
    where 
    \begin{align*} 
        g_i 
        = 
        % \begin{cases} 
            \frac{ z_i (  f ( x + \delta z_i) - f (x) ) }{\delta} .  
            % , & \text{if $z_i = 1$} 
            % \frac{ - f ( x + \delta z_i) + f (x) }{\delta} , & \text{if $z_i = -1$} \\
            % 0, & \text{otherwise}. 
        % \end{cases} 
    \end{align*} 
    In \citep{wang2018stochastic,doi:10.1137/21M1392966}, sparsity (or sparsity-type) constraints are imposed on the gradient. For our purpose, we do not assume sparsity and focus on the accuracy of the estimation. 
    % $g_i = ( f ( x + \delta z_i) - f (x) )  $. 
    % } 
    \item Comparison-based gradient estimator $ \wh{\nabla} f_{k,C}^\delta (x) $, which estimates the normalized gradient. This estimator is defined in Algorithm \ref{alg:bit}\citep{CAI2022242}. 
    % \begin{align*} 
    %      := \frac{}{}
    % \end{align*} 
    
\end{itemize}
    
    \begin{algorithm}[h]
        \caption{Comparison-based gradient estimator \citep{CAI2022242}} 
        \label{alg:bit} 
        \begin{algorithmic}[1]
            \STATE \textbf{Input:} Number of random vectors $k$; Finite difference granularity: $\delta$; Location for evaluation $x \in \R^n$; Comparison oracle for the target function $f$: $ \mathcal{C}_f ( x,y ) = \text{sign} ( f (x) - f (y) ) $ for all $x,y \in \R^d$; Sparsity parameter: $s$. 
            \STATE Uniformly sample $i.i.d.$ vectors $v_1 , v_2, \cdots, v_k $ from $\S^{n-1}$. 
            \STATE Let $ z_i = \mathcal{C}_f ( x + \delta v_i , x ) $ for $i = 1,2,\cdots, k$. 
            \STATE \textbf{Output:} $ \wh{\nabla} f_{k,C}^\delta (x) := \arg\max_{g: \| g \|_1 \le \sqrt{s} , \| g \| \le 1} \sum_{i=1}^k z_i v_i^\top g $. 
        \end{algorithmic} 
    \end{algorithm} 

Note that as per its definition, the comparison-based estimator $  \wh{\nabla} f_{k,C}^\delta (x) $ does not provide an estimate for $  {\nabla} f (x) $. Instead, it estimates $ \frac{  {\nabla} f (x) }{ \left\|  {\nabla} f (x) \right\| } $. For this reason, the comparison with $ \wh{\nabla} f_{k,C}^\delta (x) $, and the comparison with $ \wh{\nabla} f_{k,S}^\delta (x) $, $ \wh{\nabla} f_{k,G}^\delta (x) $ and $ \wh{\nabla} f_{k,R}^\delta (x) $ are measured under different scales. 

All methods are tested using the following function 
\begin{align} 
    f (x) := \exp ( (x_1 - 1) (x_2 + 2)) + \sum_{j=1}^n \sin (x_j) , \label{eq:test} 
\end{align} 
where $x_j$ denotes the $j$-th component of vector $x$. Note that, unlikely experiments in some previous works \citep[e.g.,][]{wang2022hess}, all function evaluations are \emph{noise-free}. On this test function (Eq. \ref{eq:test}), the methods are tested with different choices of $ k $, $\delta$, $x$. 
Example comparison between (Eq. \ref{eq:def-grad-est}) and (Eq. \ref{eq:grad-sphere}), (Eq. \ref{eq:grad-gauss}) can be found in Figure \ref{fig:grad-compare-main}. Example comparison between (Eq. \ref{eq:def-grad-est}) and $ \wh{\nabla} f_{k,C}^\delta (x) $ (Algorithm \ref{alg:bit}) can be found in Figure \ref{fig:grad-compare-main2}. More results can be found in the Appendix. 

% \begin{figure} 
%     \centering 
%     \includegraphics{} 
%     \caption{Caption} 
%     \label{fig:my_label} 
% \end{figure} 

\begin{figure} 
    \centering
    \subfloat[$ x = 0, \delta = 0.1, k = 300 $]{\includegraphics[width = 0.4\textwidth]{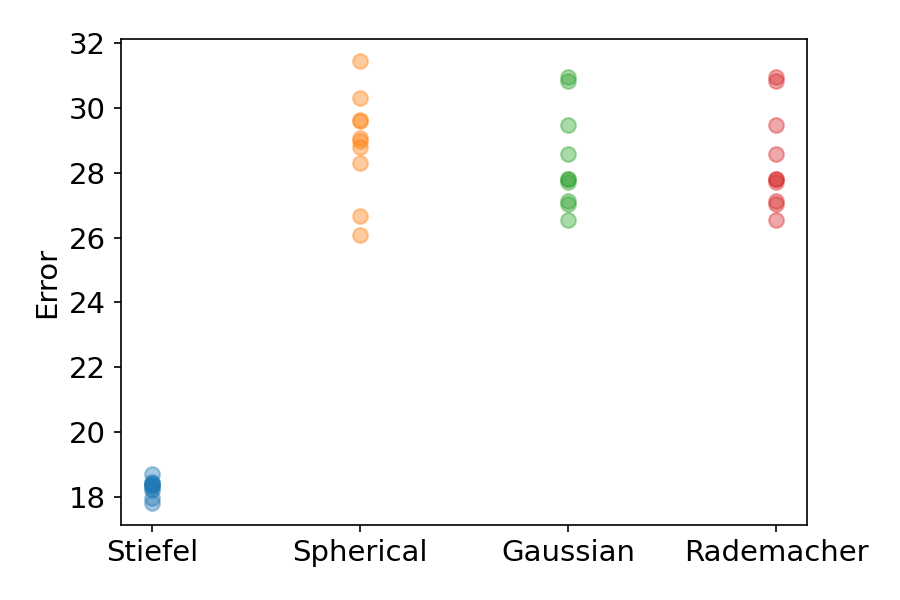}}
    \subfloat[$ x = 0, \delta = 0.1, k = 400 $]{\includegraphics[width = 0.4\textwidth]{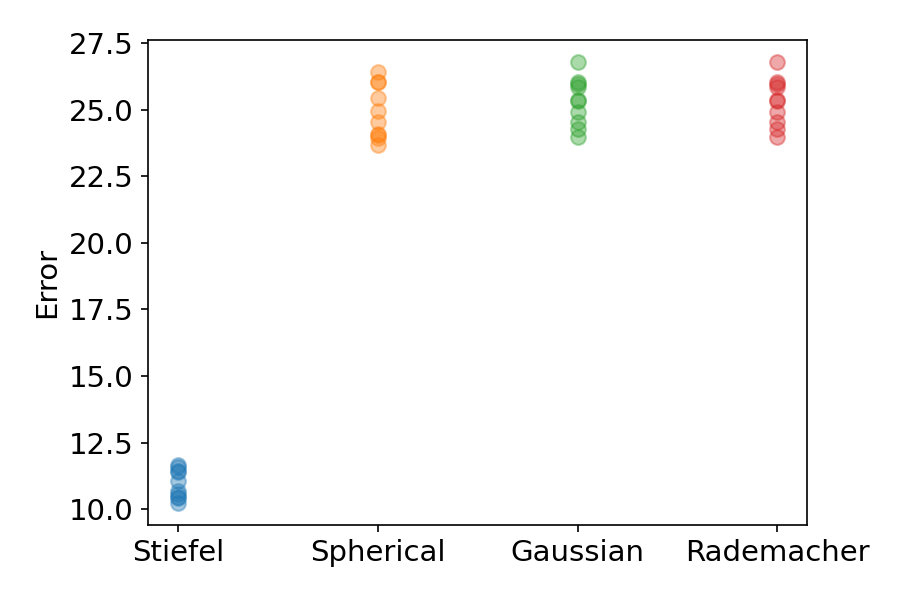}} \\
    \subfloat[$ x = 0, \delta = 0.01, k = 300 $]{\includegraphics[width = 0.4\textwidth]{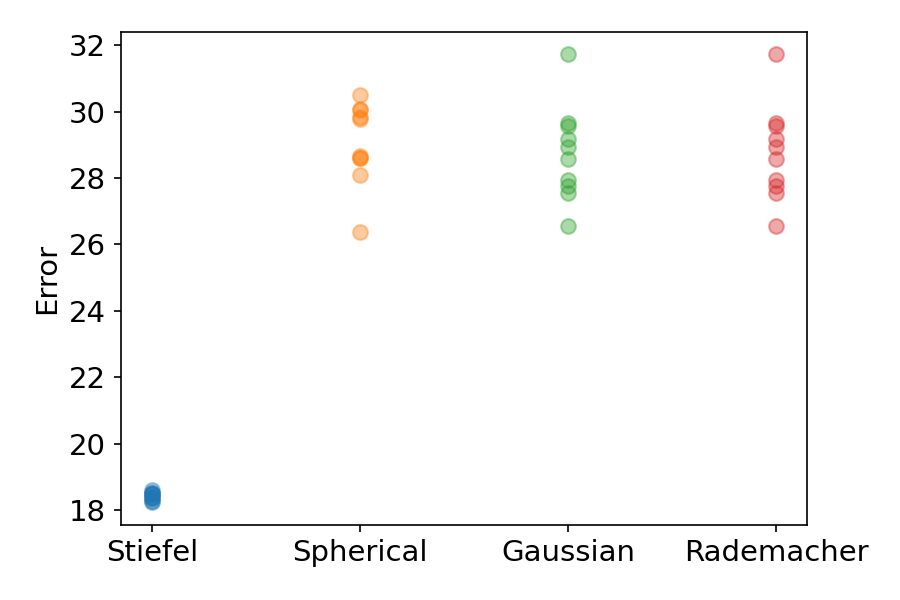}}
    \subfloat[$ x = 0, \delta = 0.01, k = 400 $]{\includegraphics[width =  0.4\textwidth]{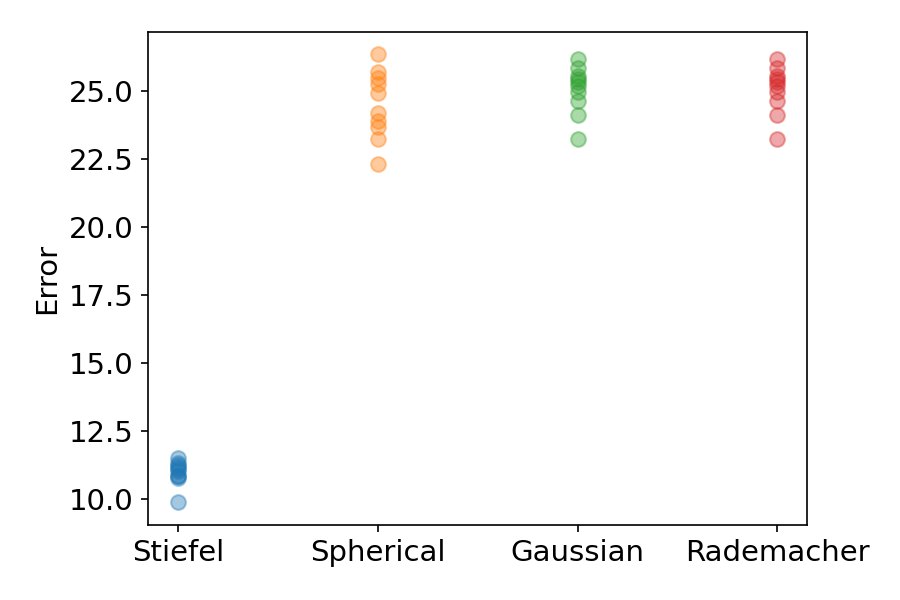}} \\ 
    \subfloat[$ x = 0, \delta = 0.001, k = 300 $]{\includegraphics[width = 0.4\textwidth]{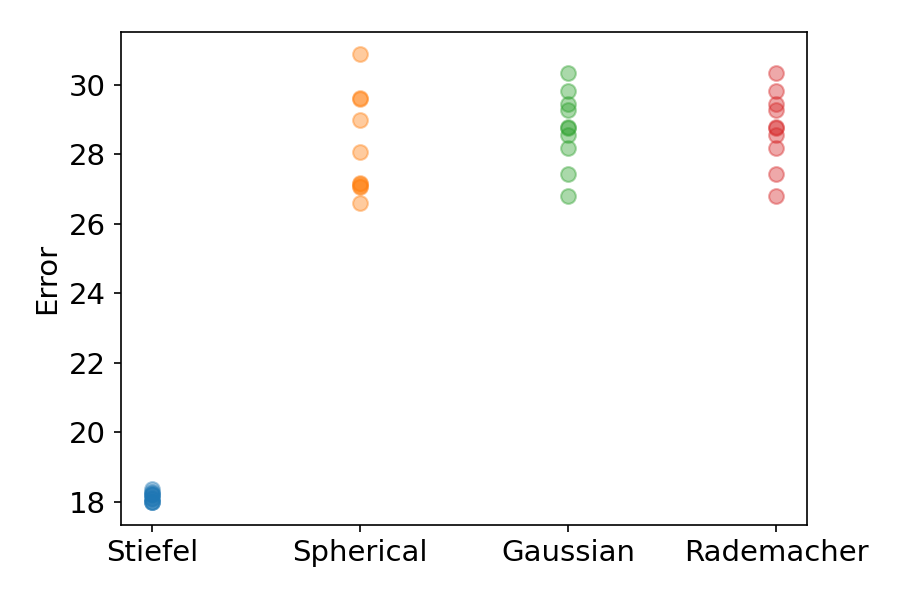}}
    \subfloat[$ x = 0, \delta = 0.001, k = 400 $]{\includegraphics[width = 0.4\textwidth]{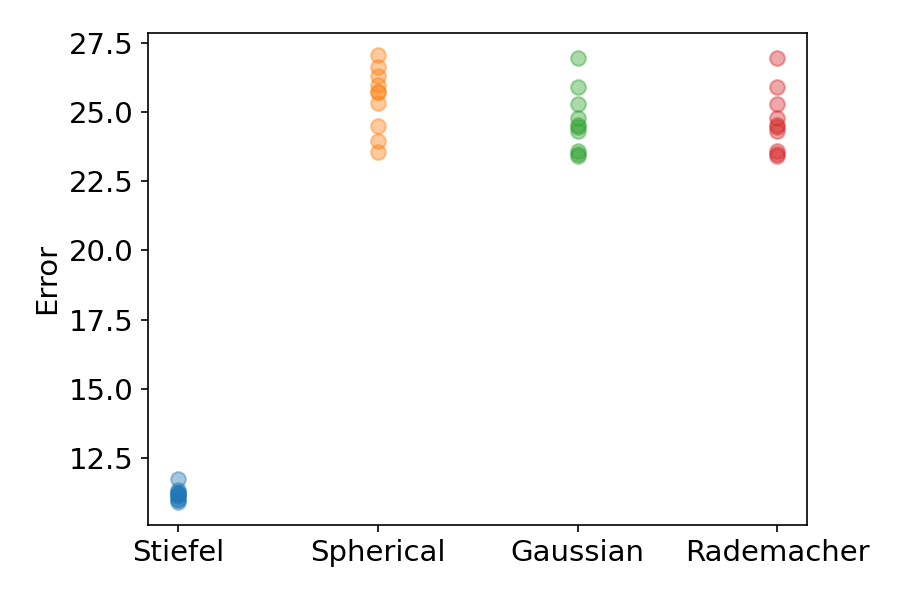}} 
    % \subfloat[$ x = 0, \delta = 0.1, k = 300 $]{\includegraphics[scale = 0.4]{grad_x0_delta0.1_k300.png}} \hfill
    % \subfloat[$ x = 0, \delta = 0.1, k = 400 $]{\includegraphics[scale = 0.4]{grad_x0_delta0.1_k400.png}} 
    \caption{Errors of gradient estimators on the test function defined in (Eq. \ref{eq:test}) with $n = 500$. Each subfigure corresponds to a different combination of the location for estimation $x$, the finite difference granularity $ \delta $, and number of random directions $k$. The $x$-axis labels the estimators: ``Stiefel'' is the estimator $\wh{\nabla}{f}_k^\delta (x)$ (Eq. \ref{eq:def-grad-est}); ``Spherical'' is the estimator $\wh{\nabla}{f}_{k,S}^\delta (x)$ (Eq. \ref{eq:grad-sphere}); ``Gaussian'' is the estimator $\wh{\nabla}{f}_{k,G}^\delta (x)$ (Eq. \ref{eq:grad-gauss}); ``Rademacher'' is the estimator $\wh{\nabla}{f}_{k,R}^\delta (x)$ (Eq. \ref{eq:grad-Rademacher}). The $y$-axis is the error of the estimator. The error is $ \left\| \wh{\nabla} f_{k}^\delta (x) - \nabla f (x) \right\| $ (or $ \left\| \wh{\nabla} f_{k,S}^\delta (x) - \nabla f (x) \right\| $, $\left\| \wh{\nabla} f_{k,G}^\delta (x) - \nabla f (x) \right\| $, $\left\| \wh{\nabla} f_{k,R}^\delta (x) - \nabla f (x) \right\| $). 
    Each dot represents one observed error of one estimator. Each estimator is evaluated 10 times (thus 10 dots for each estimator). For example, in subfigure (a), the 10 blue dots scattered above ``Stiefel'' show that the errors of 10 evaluations of $\wh{f}_k^\delta (x)$ with parameters $x = 0, \delta = 0.1, k = 300$ are in range 17 to 20. 
    % The captions of the subfigures are the 
    More results for other values of $(x,\delta,k)$ can be found in the Appendix. \label{fig:grad-compare-main}}
\end{figure}

\begin{figure}
    \centering
    \subfloat[$ x = 0, \delta = 0.1, k = 300 $]{\includegraphics[scale = 0.4]{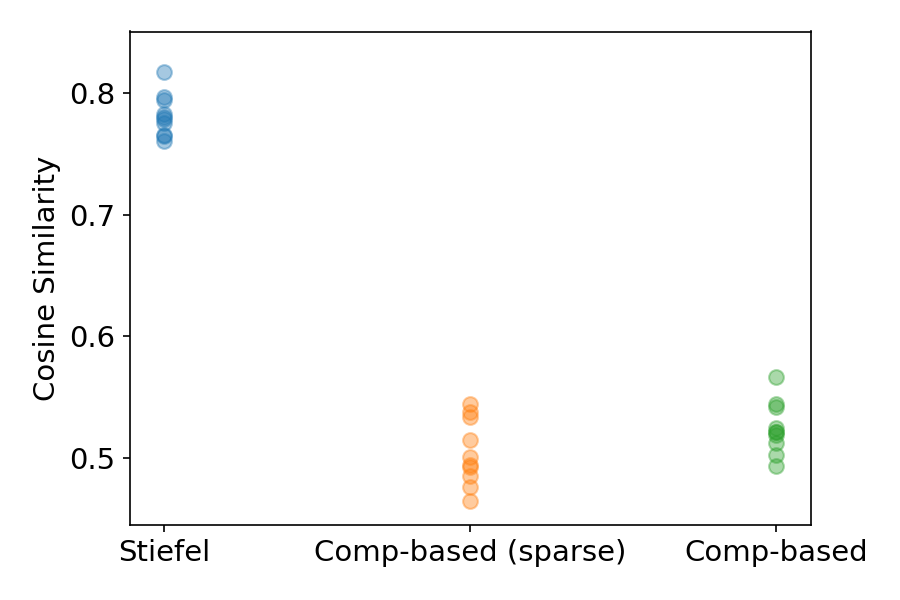}} 
    \subfloat[$ x = 0, \delta = 0.1, k = 400 $]{\includegraphics[scale = 0.4]{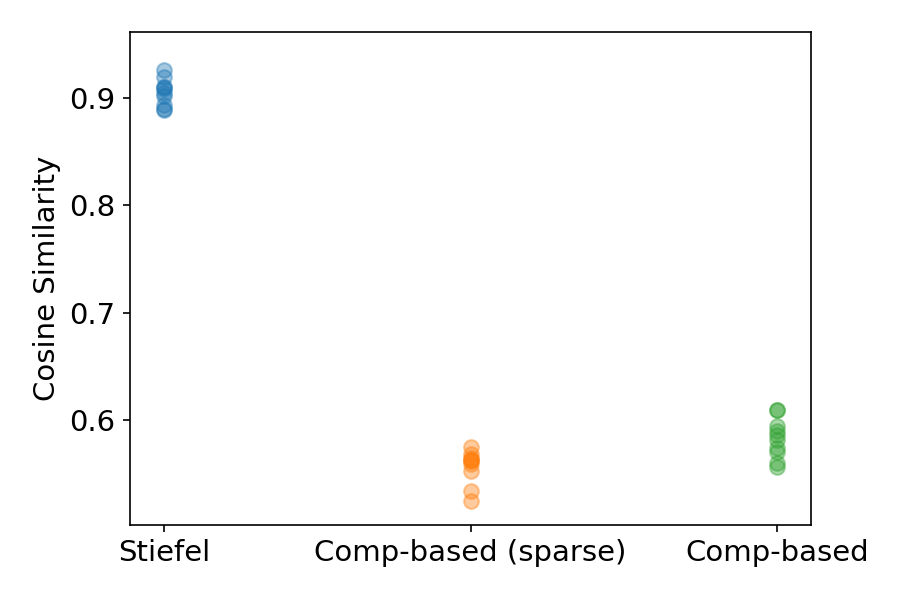}} \\
    \subfloat[$ x = 0, \delta = 0.01, k = 300 $]{\includegraphics[scale = 0.4]{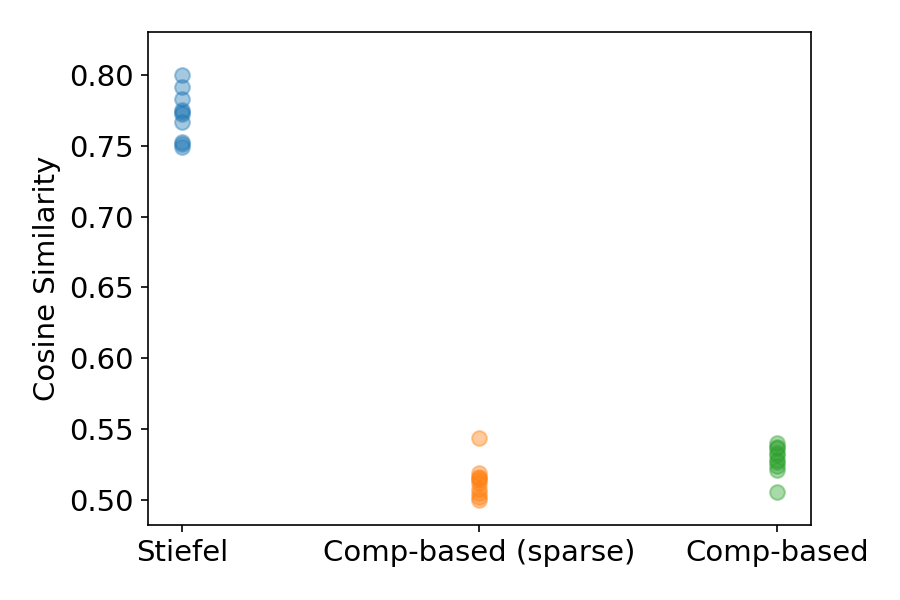}} 
    \subfloat[$ x = 0, \delta = 0.01, k = 400 $]{\includegraphics[scale = 0.4]{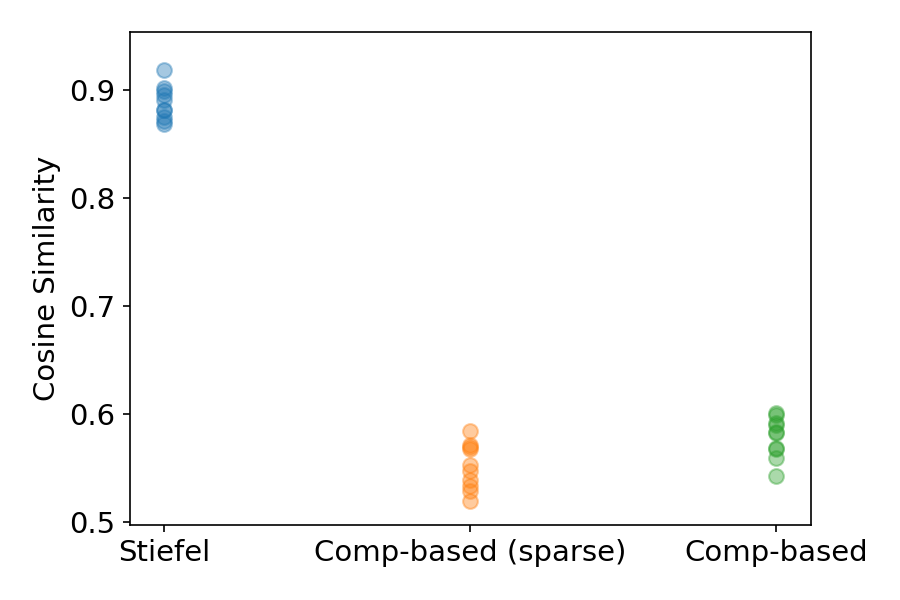}} \\
    \subfloat[$ x = 0, \delta = 0.001, k = 300 $]{\includegraphics[scale = 0.4]{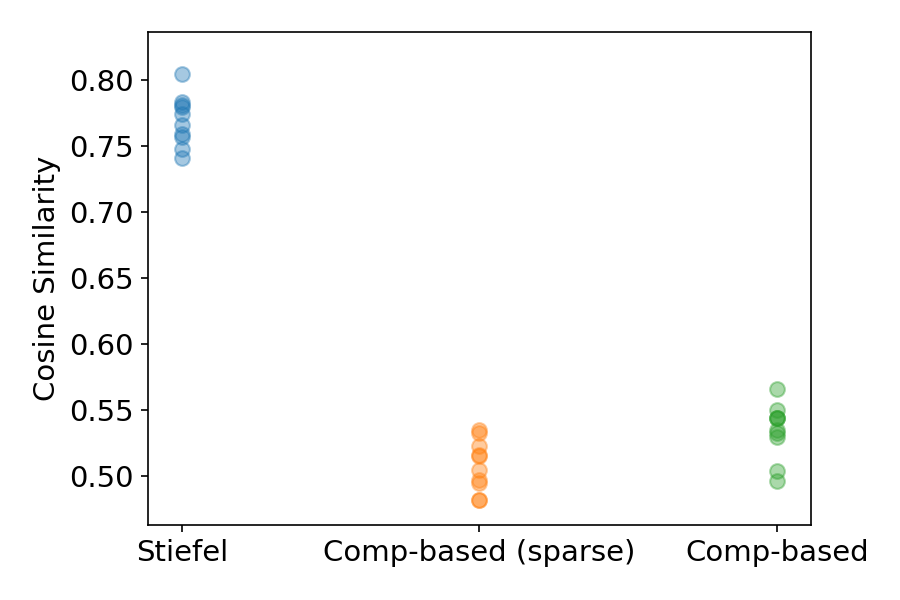}} 
    \subfloat[$ x = 0, \delta = 0.001, k = 400 $]{\includegraphics[scale = 0.4]{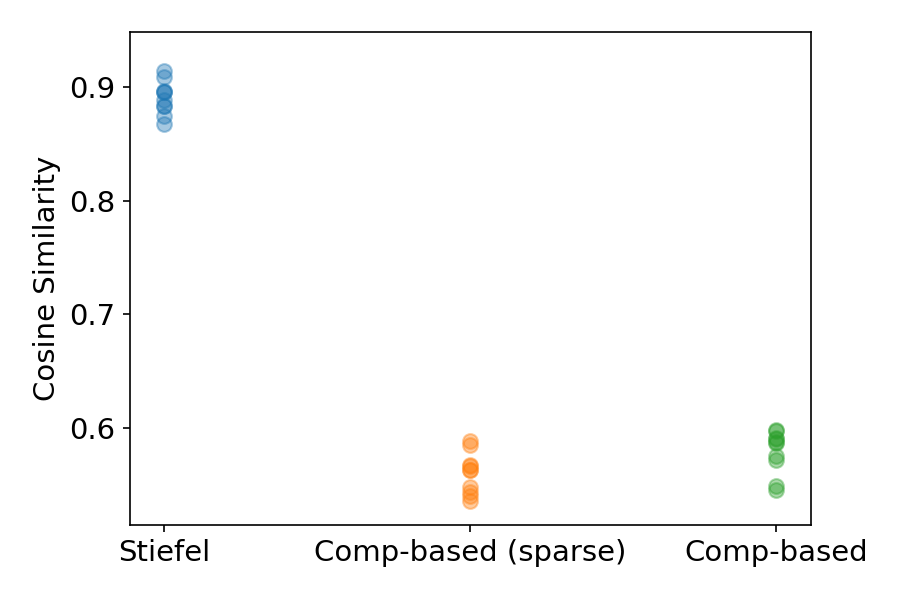}} 
    \caption{Performance of gradient estimators on the test function defined in (Eq. \ref{eq:test}), measured by cosine similarity with the true gradient. Each subfigure corresponds to a different combination of the location for estimation $x$, the finite difference granularity $ \delta $, and number of random directions $k$. The $x$-axis labels the estimators: ``Stiefel'' is the estimator $\wh{\nabla}{f}_k^\delta (x)$ (Eq. \ref{eq:def-grad-est}); ``Comp-based (sparse)'' is the estimator $\wh{\nabla}{f}_{k,C}^\delta (x)$ (Algorithm \ref{alg:bit}) with sparsity parameter $s = 100$; ``Comp-based'' is the estimator $\wh{\nabla}{f}_{k,C}^\delta (x)$ with sparsity parameter $s = \infty$ (no sparsity constrains). The $y$-axis is the cosine similarity between the estimator and the ground truth, which is $ \frac{ \< \wh{\nabla} f_k^\delta (x) , {\nabla} f (x) \> }{ \left\| \wh{\nabla} f_k^\delta (x) \right\| \left\| {\nabla} f (x) \right\| } $ (or $ \frac{ \< \wh{\nabla} f_{k,C}^\delta (x) , {\nabla} f (x) \> }{ \left\| \wh{\nabla} f_{k,C}^\delta (x) \right\| \left\| {\nabla} f (x) \right\| } $). 
    % The error is $ \left\| \wh{\nabla} f_{k}^\delta (x) - \nabla f (x) \right\| $ (or $ \left\| \wh{\nabla} f_{k,S}^\delta (x) - \nabla f (x) \right\| $, $\left\| \wh{\nabla} f_{k,G}^\delta (x) - \nabla f (x) \right\| $). 
    Each dot represents one observed cosine similarity of one estimator. Each estimator is evaluated 10 times (thus 10 dots for each estimator). For example, in subfigure (a), the 10 blue dots scattered above ``Stiefel'' show that the cosine similarity of 10 evaluations of $\wh{f}_k^\delta (x)$ with parameters $x = 0, \delta = 0.1, k = 300$ are in range 0.75 to 0.8. Note that larger cosine similarity means more accurate estimation. 
    More results for other values of $(x,\delta,k)$ can be found in the Appendix.  \label{fig:grad-compare-main2}}
\end{figure}

% \begin{figure}
% in the settings listed in Table \ref{tab:grad-setting}. 

% \begin{table}[h] 
%     \centering 
%     % \caption{The errors of gradient estimators listed against finite-difference step-size $\delta$. The first row shows $\delta$; The second row shows errors of $ \wh{\nabla}  f_n^\delta (x) $ ($n = 500$ is the dimension); The third row shows errors of the entry-wise estimator (More details in Section \ref{sec:exp}). 
%     % The error of an estimator is its distance to the true gradient (in Euclidean norm). Errors of $ \wh{\nabla}  f_n^\delta (x) $ are in average $\pm$ standard derivation format, where each average and standard deviation gather information from 10 runs. The entry-wise estimator is not random and its error is computed in a single run. The underlying test function is the same as the test function in Figure \ref{fig:intro}, $x = 0$ is used for all evaluations, the dimension $n = 500$.  \label{tab:grad-setting}} 
%     \begin{tabular}{|c|| c |} 
%         \hline 
%         $\delta$ & $0.1$ \qquad $0.01$ \qquad $0.001$    \\ \hline 
%         $ x $ & $x = 0$ \qquad $x = \frac{\pi}{4} \mathbf{1} $  \\ \hline 
%         $ k $ & $100$ \qquad $200$ \qquad $300$ \qquad $400$  \\ \hline 
%         % \makecell{Stiefel sampling \\ errors} & 4.5e-4$\pm$6.1e-6 & 2.8e-5$\pm$7.0e-7 & 1.8e-6$\pm$4.7e-8 & 1.1e-7$\pm$2.0e-9 \\ \hline 
%         % \makecell{Entry-wise \\ errors} & 5.8e-2 & 3.64e-3 & 2.3e-4 & 1.4e-5 \\ \hline 
%     \end{tabular} 
% \end{table} 

\subsection{Comparison with the Entry-wise Estimator} 

The estimator (Eq. \ref{eq:def-grad-est}) is also compared with the entry-wise estimator: 
\begin{align*}
    \wh{\nabla} f_{E}^\delta (x) := [ \wh{\nabla}_i f_{E}^\delta (x) ]_{i\in [n]}, 
\end{align*}
where $ \wh{\nabla}_i f_{E}^\delta (x) = \frac{1}{2\delta} \( f (x + \delta \e_i) - f (x - \delta \e_i)  \) $ and $\e_i$ is the vector with $1$ on the $i$-th entry and $0$ on all other entries. The comparison results are summarized in Tables \ref{tab:intro} and \ref{tab:entry}. 

\clearpage

\begin{table}[t!] 
    % \centering 
    \caption{
    % Figure \ref{fig:intro}, $x = \frac{\pi}{4} \mathbf{1}$ is used for all evaluations. 
    The errors of gradient estimators against finite-difference granularity $\delta$. The first row shows $\delta$; The second row shows errors of $ \wh{\nabla} f_n^\delta (x) $ ($n = 500$ is the dimension); The third row shows errors of the entry-wise estimator. 
    The error of an estimator is its distance to the true gradient in Euclidean norm: $\left\| \wh{\nabla} f_n^\delta (x) - \nabla f (x) \right\|$. Errors of $ \wh{\nabla}  f_n^\delta (x) $ are in average $\pm$ standard derivation format, where each average and standard deviation gather information from 10 runs. The entry-wise estimator is not random and its error is computed in a single run. The test function is defined in (Eq. \ref{eq:test}). For all evaluations in this table, $x = \frac{\pi}{4} \mathbf{1}$ is used. 
    \label{tab:entry}} 
    \centering
    \begin{tabular}{|c|| c c c|} 
        \hline 
        $\delta$ & $0.1$ & $0.01$ & $0.001$  \\ \hline
        \makecell{Stiefel sampling errors} & 2.4e-4$\pm$1.0e-5 & 2.5e-6$\pm$1.5-07 & 2.5e-8$\pm$6.8e-10 \\ \hline 
        \makecell{Entry-wise errors} & 3.2e-2 & 3.2e-4 & 3.2e-6 \\ \hline 
    \end{tabular} 
\end{table}

\begin{table}[h!] 
    % \centering 
    \caption{
    % Figure \ref{fig:intro}, $x = \frac{\pi}{4} \mathbf{1}$ is used for all evaluations. 
    The errors of Hessian estimators against finite-difference granularity $\delta$. The first row shows $\delta$; The second row shows errors of $ \wh{\H} f_n^\delta (x) $ ($n = 100$ is the dimension); The third row shows errors of the entry-wise estimator $ \wh{\H} f_E^\delta (x) $. 
    The error of an estimator is its distance to the true Hessian in spectral norm: $\left\| \wh{\H} f_n^\delta (x) - \nabla^2 f (x) \right\|$ (or $\left\| \wh{\H} f_E^\delta (x) - \nabla^2 f (x) \right\|$). Errors of $ \wh{\H}  f_n^\delta (x) $ are in ``average $\pm$ standard derivation'' format, where each average and standard deviation gather information from 10 runs. The entry-wise estimator is not random and its error is computed in a single run. The test function is defined in (Eq. \ref{eq:test}). For all evaluations in this table, $x = \frac{\pi}{2} \mathbf{1}$ is used. 
    \label{tab:hess-entry1}} 
    \centering
    \begin{tabular}{|c|| c c c|} 
        \hline 
        $\delta$ & $0.1$ & $0.01$ & $0.001$  \\ \hline
        \makecell{Stiefel sampling errors} &  0.17$\pm$0.024 & 1.7e-3$\pm$0.16e-4 & 1.6e-5$\pm$1.6e-6 \\ \hline 
        \makecell{Entry-wise errors} & 4.4 & 4.3e-2 & 4.3e-4  \\ \hline 
    \end{tabular} 
\end{table}

\begin{table}[h!] 
    \centering 
    \caption{
    % Figure \ref{fig:intro}, $x = \frac{\pi}{4} \mathbf{1}$ is used for all evaluations. 
    The errors of Hessian estimators against finite-difference granularity $\delta$. The first row shows $\delta$; The second row shows errors of $ \wh{\H} f_n^\delta (x) $ ($n = 100$ is the dimension); The third row shows errors of the entry-wise estimator $ \wh{\H} f_E^\delta (x) $. 
    The error of an estimator is its distance to the true Hessian in spectral norm: $\left\| \wh{\H} f_n^\delta (x) - \nabla^2 f (x) \right\|$ (or $\left\| \wh{\H} f_E^\delta (x) - \nabla^2 f (x) \right\|$). Errors of $ \wh{\H}  f_n^\delta (x) $ are in ``average $\pm$ standard derivation'' format, where each average and standard deviation gather information from 10 runs. The entry-wise estimator is not random and its error is computed in a single run. The test function is defined in (Eq. \ref{eq:test}). For all evaluations in this table, $x = \frac{\pi}{4} \mathbf{1}$ is used. 
    \label{tab:hess-entry2}} 
    \centering
    \begin{tabular}{|c|| c c c|} 
        \hline 
        $\delta$ & $0.1$ & $0.01$ & $0.001$  \\ \hline
        \makecell{Stiefel sampling errors} & 4.1e-3$\pm$5.3e-4 & 3.8e-5$\pm$4.63e-6 & 3.8e-7$\pm$3.7e-8 \\ \hline 
        \makecell{Entry-wise errors} & 0.12 & 1.2e-3 & 1.2e-5  \\ \hline 
    \end{tabular} 
\end{table} 

% 1 0.1 0.032287116422944684
% 1 0.01 0.0003225329887275817
% 1 0.001 3.225438113869768e-06

% 1 0.1 0.0002433893185682688 1.024488848217293e-05
% 1 0.01 2.48280704565781e-06 1.4721687480843983e-07
% 1 0.001 2.51423490606473e-08 6.759184688050646e-10

\subsection{Empirical Verification of the Theorems} 

As discussed in the introduction (Figure \ref{fig:intro}), the error is highly aligned with the variance bound. Here we present more versions of Figure \ref{fig:intro}, with different values of $\delta$ and $x$. These results can be found in Figure \ref{fig:trend}. 

% Another implication of Theorems \ref{thm:grad-var} and \ref{thm:hess-var} is that the estimate is more accurate when the true gradient is of small norm. Empirical results (Figure \ref{fig:stiefel-only}) support this implication as well. 

\begin{figure}[h!]
    \centering
    \subfloat[$x = 0$, $\delta = 0.001$]{\includegraphics[width = 0.4\textwidth]{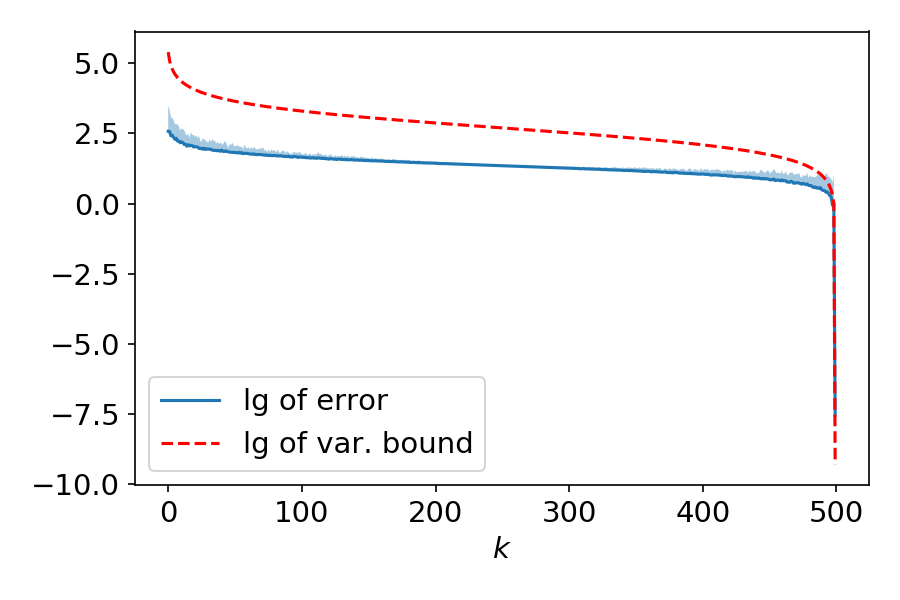}}
    \subfloat[$x = \frac{\pi}{4} \mathbf{1}$, $\delta = 0.001$]{\includegraphics[width = 0.4\textwidth]{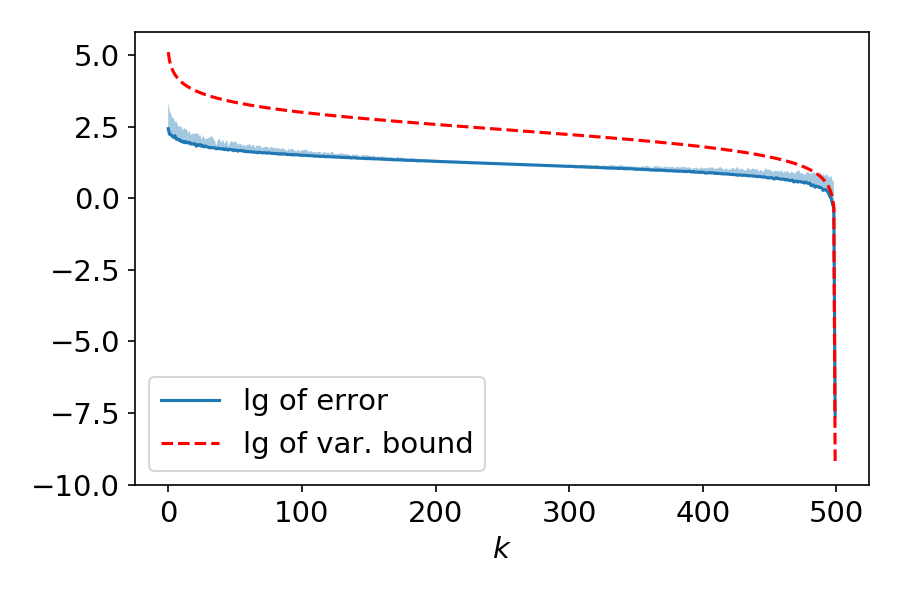}} \\
    \subfloat[$x = 0$, $\delta = 0.01$]{\includegraphics[width = 0.4\textwidth]{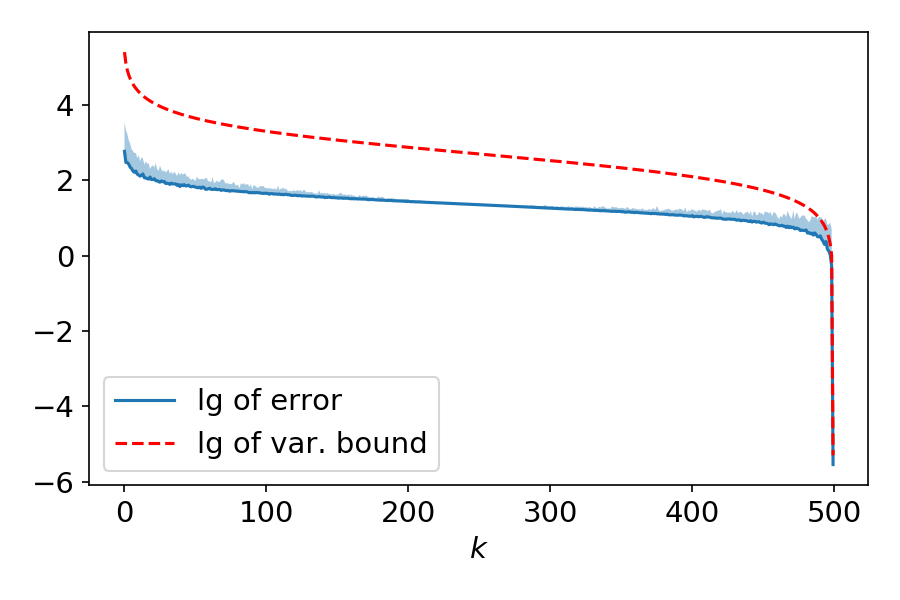}}
    \subfloat[$x = \frac{\pi}{4} \mathbf{1}$, $\delta = 0.01$]{\includegraphics[width = 0.4\textwidth]{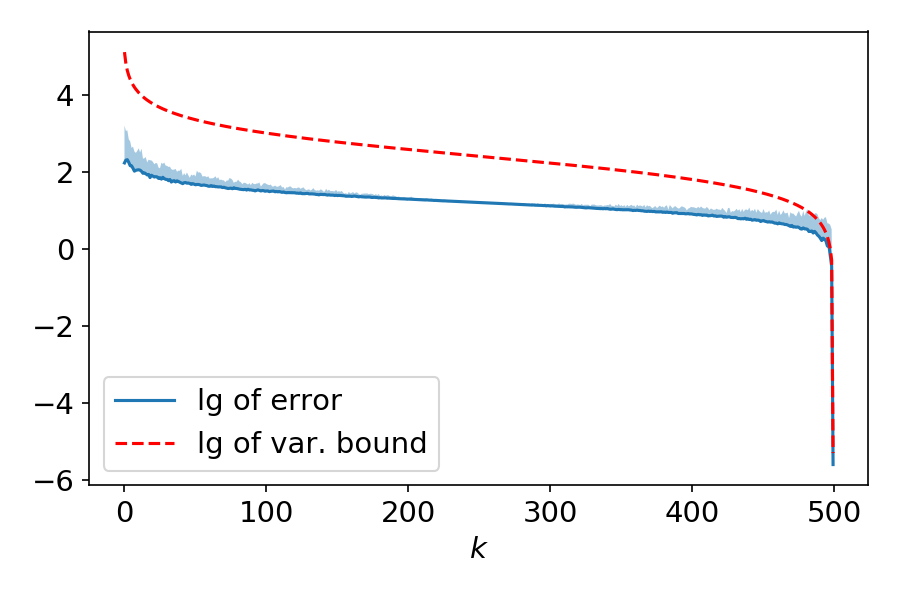}} \\ 
    \subfloat[$x = 0$, $\delta = 0.1$]{\includegraphics[width = 0.4\textwidth]{var_trend_x0_delta01.png}}
    \subfloat[$x = \frac{\pi}{4} \mathbf{1}$, $\delta = 0.1$]{\includegraphics[width = 0.4\textwidth]{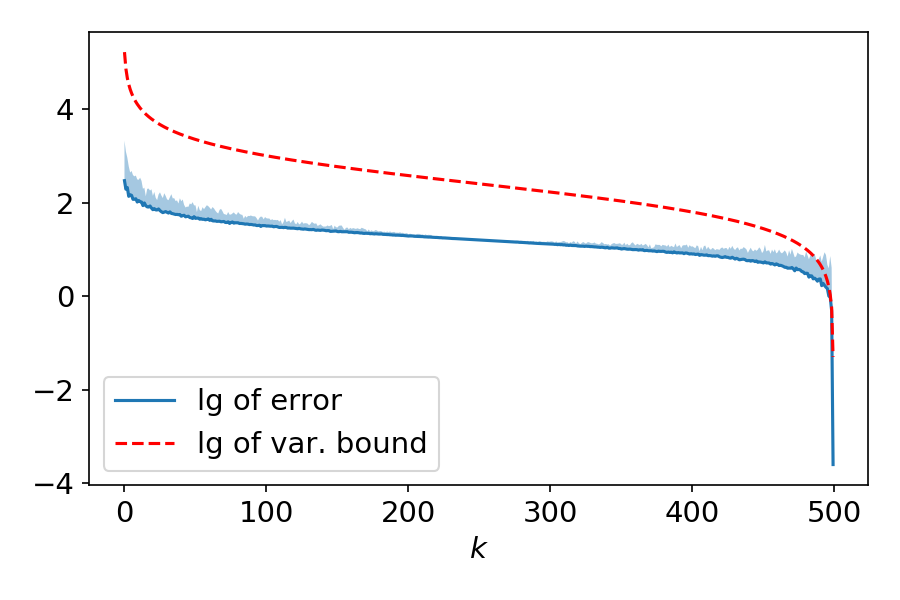}}
    \caption{Errors of gradient estimators $ \wh{\nabla} f_k^\delta (x) $ with $k$ ranging from $1$ to $n$, in base-10 log-scale. Here $n = 500$. The underlying test function is $ f (x) = \exp ( (x_1 - 1) (x_2 + 2)) + \sum_{j=1}^{500} \sin (x_j)  $, where $x_j$ denotes the $j$-th component of vector $x$. The location for estimation $x$, and the finite difference granularity $\delta$ are labeled in the captions of the subfigures. The solid blue curve plots the errors of gradient estimator in logarithmic scale, and is averaged over 10 runs. The shaded area above the solid curve shows 10 times standard deviation of the errors in logarithmic scale.
    The dashed red curve, as a function of $k$, is $ c (k) = \lg \( \| \nabla f (x) \|^2 \( \frac{n}{k} - 1 \) + \delta^2 \( \frac{n^2}{k} - n \) \| \nabla f (x) \| + \frac{ \delta^4 n^2 }{ k } \)$, which is the base-10 log of of variance bound for the gradient estimators (up to constants). \label{fig:trend}} 
\end{figure}

\clearpage

\section{Empirical Results for the Hessian Estimators}

The Hessian estimators are empirically studied, in the same way that the gradient estimators are studied. Similar to the gradient case, $(i)$ the errors of the Hessian estimators also draw an ``S''-shape curve in logarithmic scale (See Figure \ref{fig:trend-hess}); $(ii)$ When $k = n$, the stochastic Hessian estimator (Eq. \ref{eq:def-hess-est}) outperform the classic entry-by-entry Hessian estimator (See Tables \ref{tab:hess-entry1} and \ref{tab:hess-entry2}). 
% \textcolor{red}{revise here}
When $k$ is much smaller than $n$, the supremacy of Hessian estimator (Eq. \ref{eq:def-hess-est}) is sometimes less pronounced. In particular, the estimator (Eq. \ref{eq:def-hess-est}) may have same level of accuracy as the estimator introduced by \citet{wang2022hess} (See Figure \ref{fig:hess-compare-app1} in the Appendix for details). 

% In particular, the estimator introduced by \citet{wang2022hess} is still the best (Figure \ref{??}). 

\subsection{Comparison with Stochastic Estimators} 

For the same number of random direction specified by $k$, and finite difference granularity $\delta$, we compare the estimators $ \wh{\H} f_k^\delta (x) $ with: 
\begin{itemize} 
    \item The estimator via spherical sampling \citep{wang2022hess}: 
    \begin{align} 
        \wh{\H} f_{k,S}^\delta (x) : =  \frac{ n^2 }{ 8 k \delta^2 }  \sum_{i=1}^k \sum_{j=1}^k & \; \bigg( f (x + \delta v_i + \delta w_j ) - f (x + \delta v_i - \delta w_j ) \nonumber \\ 
        &- f (x - \delta v_i + \delta w_j ) + f (x - \delta v_i - \delta w_j ) \bigg) \( v_i w_j^\top + w_j v_i^\top  \),  \label{eq:hess-sphere}
    \end{align} 
    where $ v_1, v_2, \cdots, v_k $ and $ w_1, w_2, \cdots, w_k $ are $i.i.d.$ sampled from the uniform distribution over $\S^{n-1}$.  
    \item The estimator via Gaussian sampling and the Stein's identity \citep{balasubramanian2021zeroth}: 
    \begin{align}
        \wh{\H} f_{k,G}^\delta (x) : = \frac{ n }{ 2 k^2 \delta^2 } \sum_{i=1}^{k^2} \( f \( x + \frac{\delta v_i}{\sqrt{n}} \) - 2f (x) + f \( x - \frac{\delta v_i}{\sqrt{n}} \) \) (v_i v_i^\top - I ) , \label{eq:hess-gauss} 
    \end{align} 
    where $v_1, v_2, \cdots, v_k \overset{i.i.d.}{\sim} \mathcal{N} \( 0 , I \)$. Note that the finite difference step size is downscale by a factor of $\sqrt{n}$ so that the expected granularity is of order $\Theta (\delta)$. 
\end{itemize}

All methods are tested using the test function (Eq. \ref{eq:test}) with dimension $n = 100$. All function evaluations are \emph{noise-free}.

\begin{figure}[h]
    \centering
    \subfloat[$ x = 0, \delta = 0.1, k = 60 $]{\includegraphics[width = 0.4\textwidth]{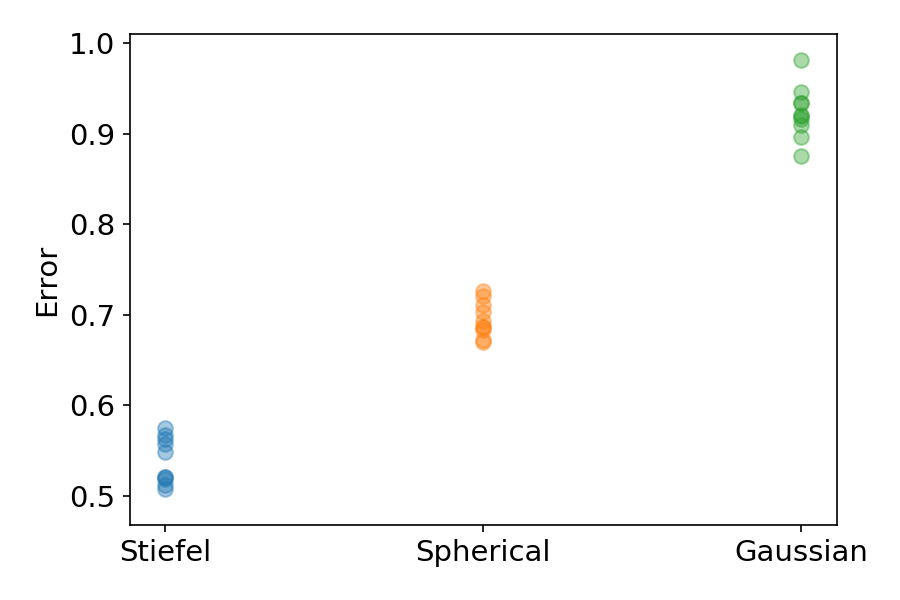}}
    \subfloat[$ x = 0, \delta = 0.1, k = 80 $]{\includegraphics[width = 0.4\textwidth]{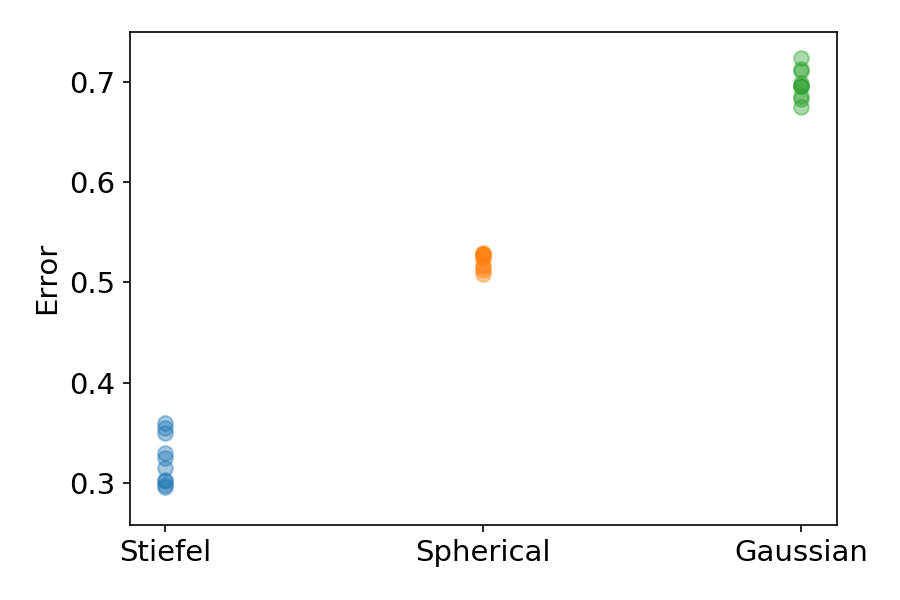}} \\
    \subfloat[$ x = 0, \delta = 0.01, k = 60 $]{\includegraphics[width = 0.4\textwidth]{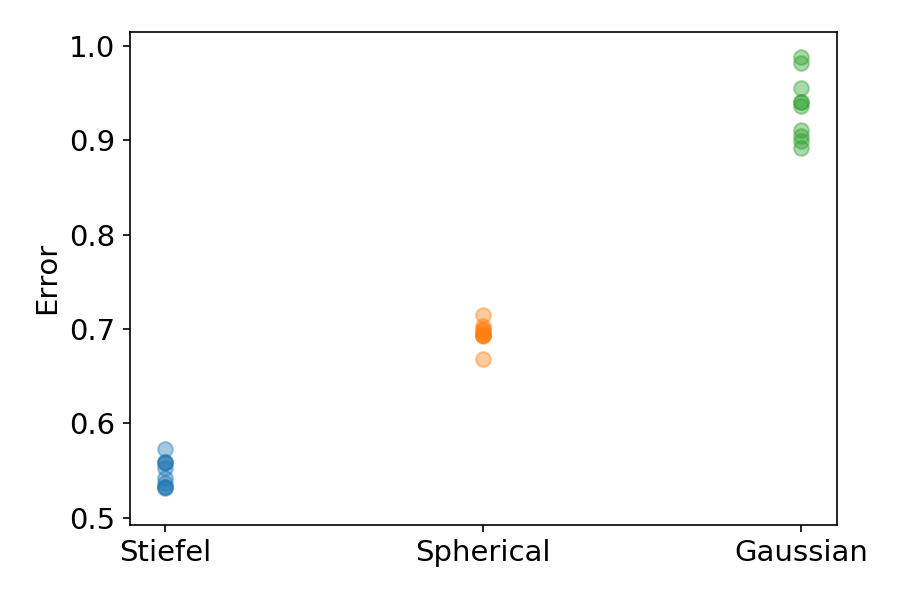}}
    \subfloat[$ x = 0, \delta = 0.01, k = 80 $]{\includegraphics[width =  0.4\textwidth]{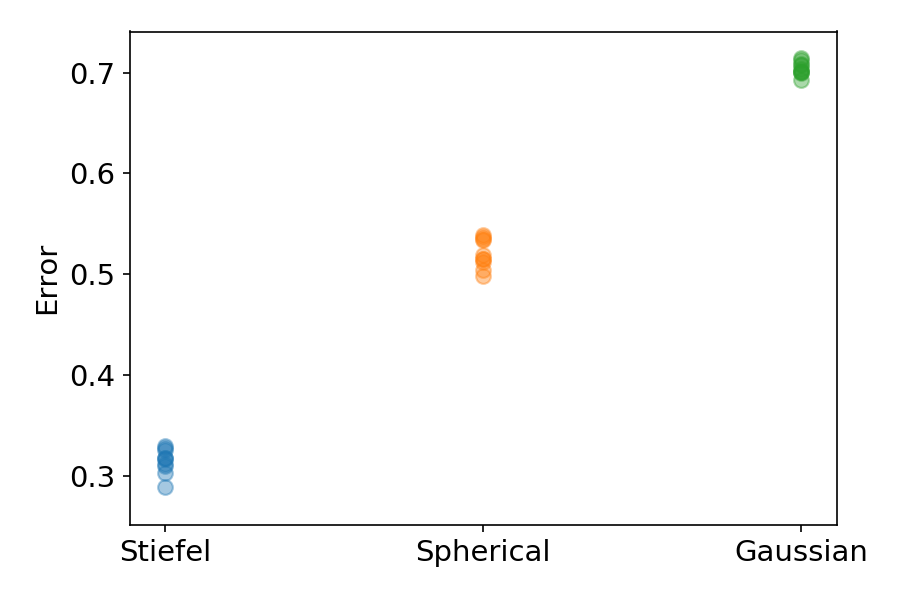}} \\ 
    \subfloat[$ x = 0, \delta = 0.001, k = 60 $]{\includegraphics[width = 0.4\textwidth]{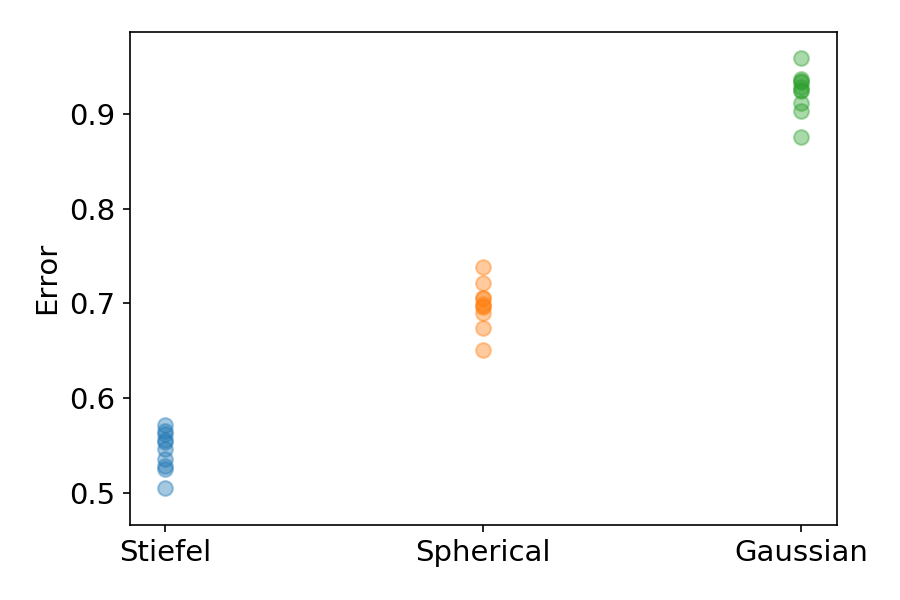}}
    \subfloat[$ x = 0, \delta = 0.001, k = 80 $]{\includegraphics[width = 0.4\textwidth]{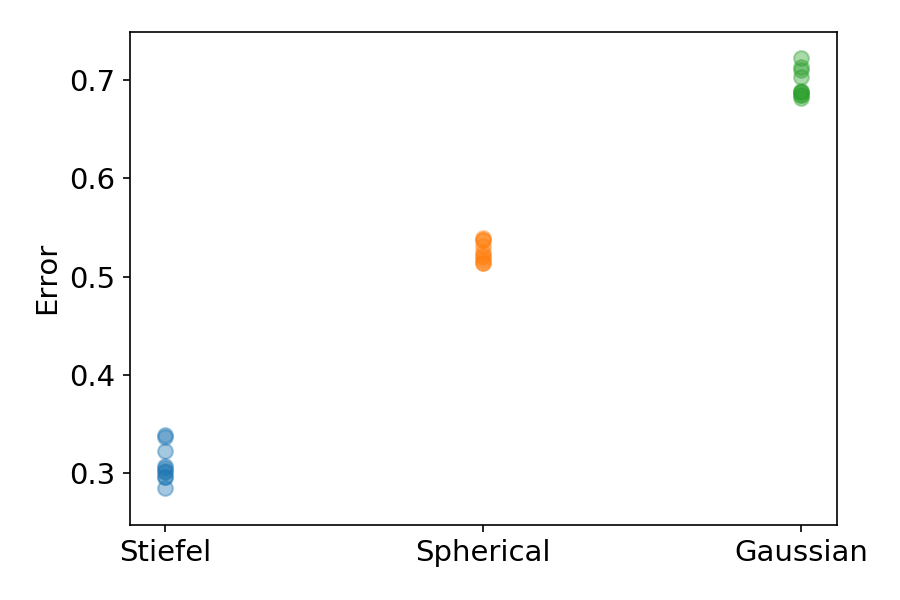}} 
    \caption{Errors of Hessian estimators on the test function defined in (Eq. \ref{eq:test}) with $n=100$. Each subfigure corresponds to a different combination of the location for estimation $x$, the finite difference granularity $ \delta $, and number of random directions $k$. The $x$-axis labels the estimators: ``Stiefel'' is the estimator $\wh{\H} f_k^\delta (x)$ (Eq. \ref{eq:def-hess-est}); ``Spherical'' is the estimator $\wh{\H}{f}_{k,S}^\delta (x)$ (Eq. \ref{eq:hess-sphere}); ``Gaussian'' is the estimator $\wh{\H}{f}_{k,G}^\delta (x)$ (Eq. \ref{eq:hess-gauss}). The $y$-axis is the error of the estimator. The error is $ \left\| \wh{\H} f_{k}^\delta (x) - \nabla^2 f (x) \right\|_F $ (or $ \left\| \wh{\H} f_{k,S}^\delta (x) - \nabla^2 f (x) \right\|_F $, $\left\| \wh{\H} f_{k,G}^\delta (x) - \nabla^2 f (x) \right\|_F $). 
    Each dot represents one observed error of one estimator. Each estimator is evaluated 10 times (thus 10 dots for each estimator). For example, in subfigure (a), the 10 blue dots scattered above ``Stiefel'' show the errors of 10 evaluations of $\wh{\H}{f}_k^\delta (x)$ with parameters $x = 0, \delta = 0.1, k = 100$. 
    % The captions of the subfigures are the 
    More results for other values of $(x,\delta,k)$ can be found in Appendix \ref{app:additional}. \label{fig:hess-compare-main}}
\end{figure}

\subsection{Comparison with the Entry-wise Estimator} 

The estimator (Eq. \ref{eq:def-hess-est}) is also compared with the entry-wise estimator: 
\begin{align*}
    \wh{\H} f_{E}^\delta (x) := [ \wh{\H}_{ij} f_{E}^\delta (x) ]_{i,j\in [n]}, 
\end{align*}
where $ \wh{\H}_{ij} f_{E}^\delta (x) = \frac{1}{4\delta^2} \( f (x + \delta \e_i + \delta \e_j ) - f (x - \delta \e_i + \delta \e_j ) - f (x + \delta \e_i - \delta \e_j ) + f (x - \delta \e_i - \delta \e_j ) \) $ and $\e_i$ is the vector with $1$ on the $i$-th entry and $0$ on all other entries. The comparison results are summarized in Tables \ref{tab:hess-entry1} and \ref{tab:hess-entry2}.

% 0 0.1 4.400206672166374
% 0 0.01 0.04328676449516043
% 0 0.001 0.00043279415493315703
% 1 0.1 0.11649030818392277
% 1 0.01 0.001153473802530225
% 1 0.001 1.153206002120819e-05

% 0 0.1 0.17295306733820853 0.02361633278012266
% 0 0.01 0.001656943232010168 0.00015547751959966785
% 0 0.001 1.595981286064697e-05 1.6484421144222021e-06
% 1 0.1 0.004074332535264573 0.0005338640080971706
% 1 0.01 3.832797670031029e-05 4.631559906515532e-06
% 1 0.001 3.8058449705137074e-07 3.721049736246323e-08

\subsection{Empirical Verification of the Theorems} 

As discussed in the introduction (Figure \ref{fig:intro}), the error is highly aligned with the variance bound. Here we present the Hessian counterpart of Figure \ref{fig:intro}, with different values of $\delta$ and $x$. 
% Also, the fact that the estimators are more accurate when the true Hessian is of small norm is verified in Figure \ref{??}. 
% Empirical results support this implication as well. 
\begin{figure} 
    \centering
    \subfloat[$x = 0 $, $\delta = 0.001$]{\includegraphics[width = 0.4\textwidth]{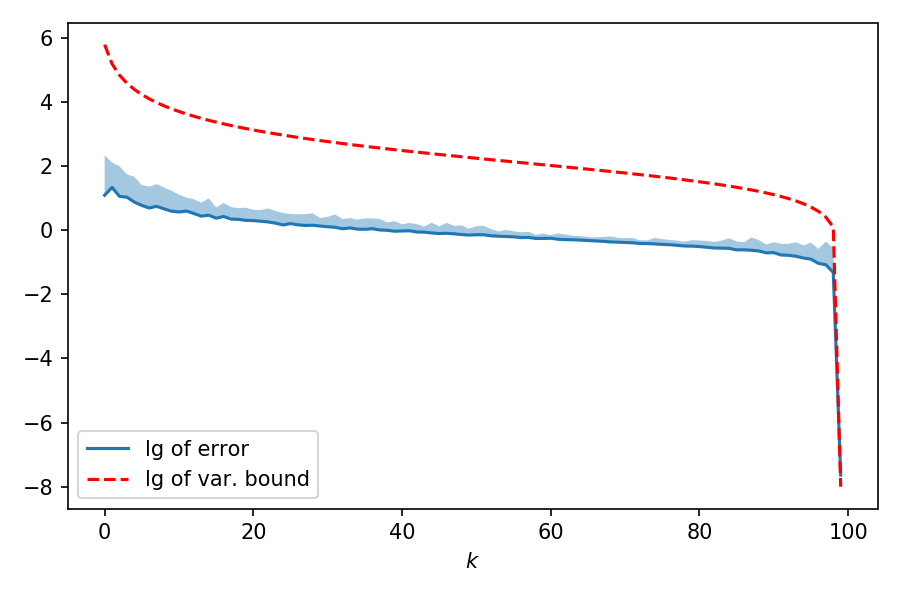}} 
    \subfloat[$x = \frac{\pi}{4} \mathbf{1}$, $\delta = 0.001$]{\includegraphics[width = 0.4\textwidth]{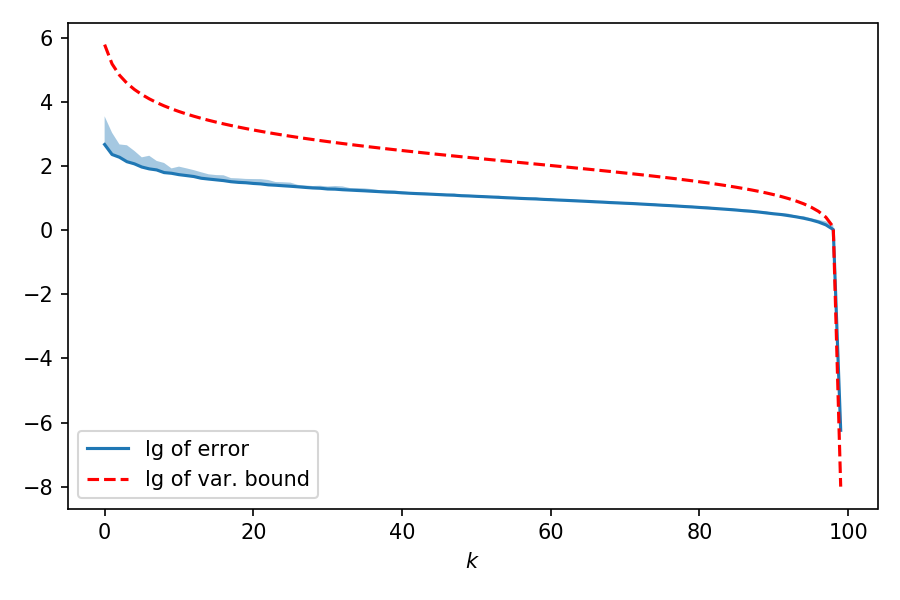}} \\
    \subfloat[$x = 0 $, $\delta = 0.01$]{\includegraphics[width = 0.4\textwidth]{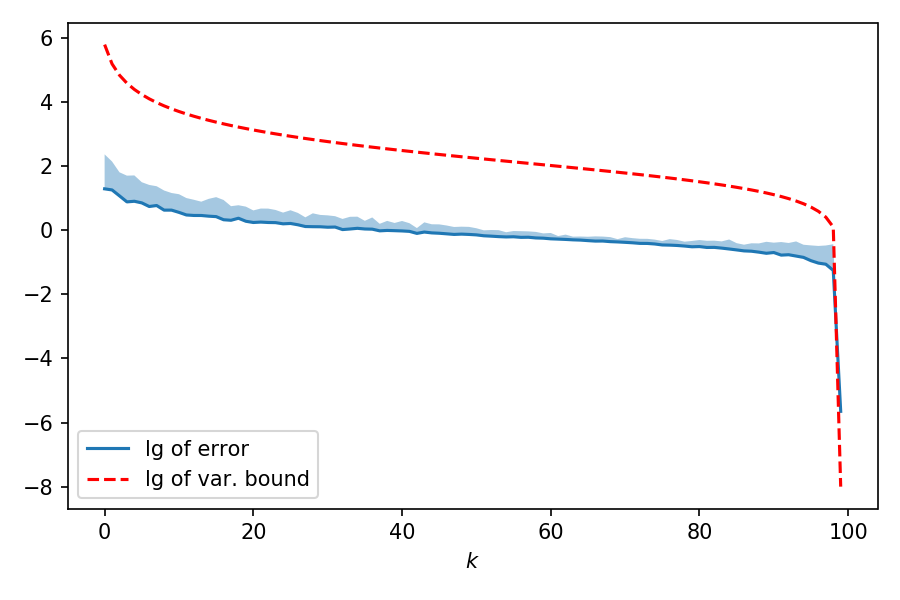}}
    \subfloat[$x = \frac{\pi}{4} \mathbf{1}$, $\delta = 0.01$]{\includegraphics[width = 0.4\textwidth]{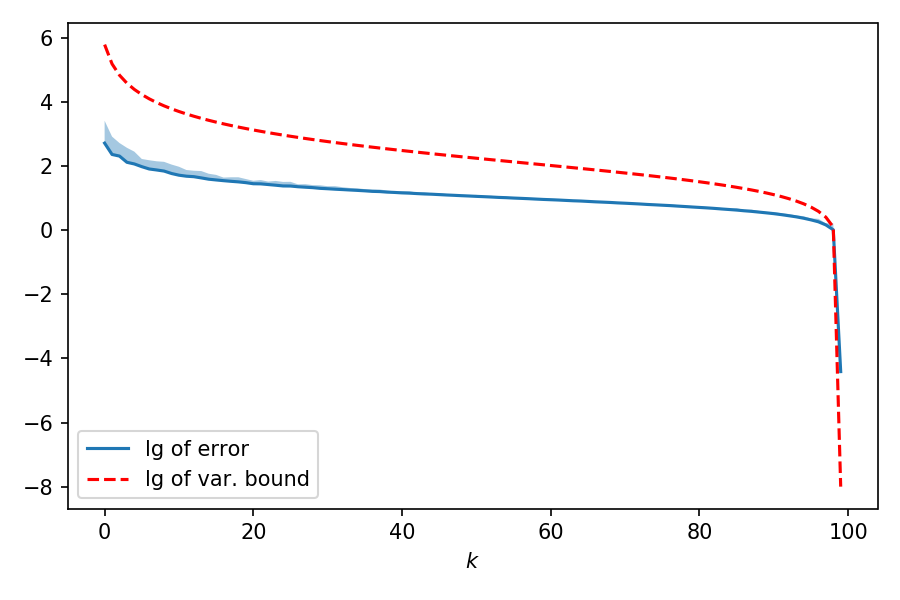}} \\ 
    \subfloat[$x = 0 $, $\delta = 0.1$]{\includegraphics[width = 0.4\textwidth]{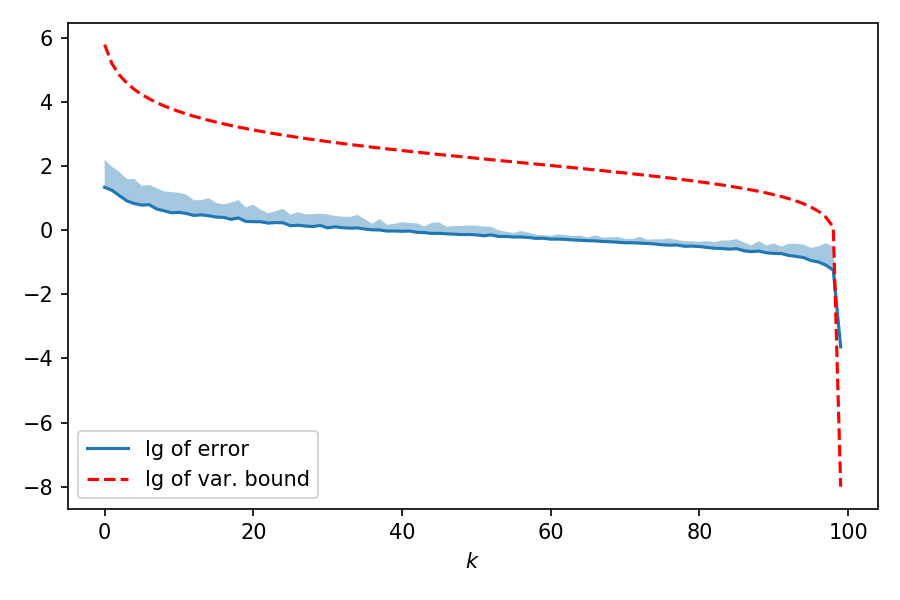}} 
    \subfloat[$x = \frac{\pi}{4} \mathbf{1}$, $\delta = 0.1$]{\includegraphics[width = 0.4\textwidth]{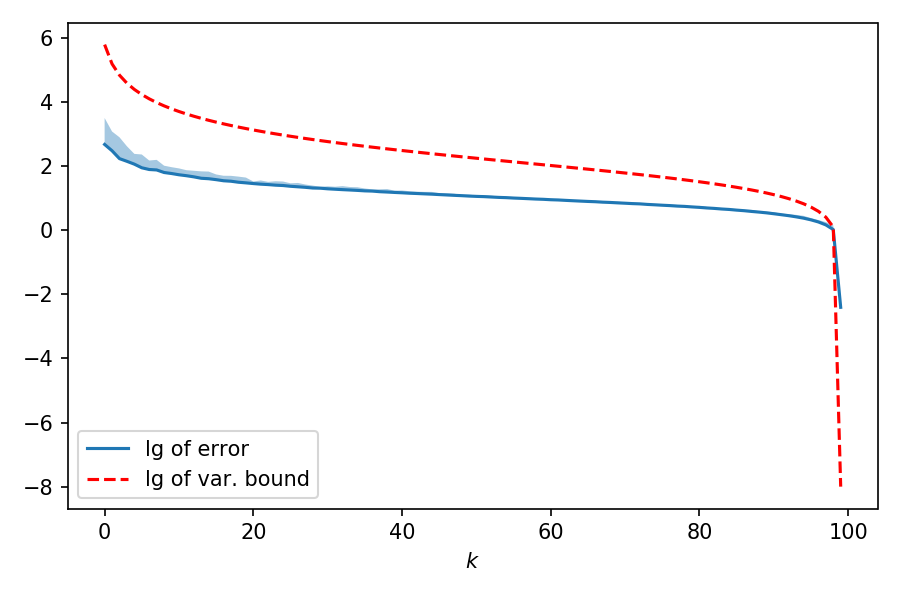}}
    \caption{Errors of Hessian estimators $ \wh{\H} f_k^\delta (x) $ as $k$ ranges from $1$ to $n$, in base-10 log-scale. Here $n = 100$. The underlying test function is $ f (x) = \exp ( (x_1 - 1) (x_2 + 2)) + \sum_{j=1}^{100} \sin (x_j)  $, where $x_j$ denotes the $j$-th component of vector $x$. The location for estimation $x$, and the finite difference granularity $\delta$ are labeled in the captions of the subfigures. The solid blue curve plots errors of the Hessian estimators in logarithmic scale (in terms of Frobenius norm), and is averaged over 10 runs. The shaded area above the solid curve shows 10 times standard deviation of the errors in logarithmic scale. 
    The dashed red curve, as a function of $k$, is $ c (k) = \lg \( \left\| \nabla^2 f (x) \right\|_F^2 \( \frac{n^2}{k^2} - 1 \) + 2 \delta^2 \left\| \nabla^2 f (x) \right\| \( \frac{n^4}{k^2} - n^2 \) + \| \nabla^2 f (x) \| \frac{n^4 \delta^4 }{k^2} \)$, which is the base-10 log of of variance bound for the Hessian estimators (up to constants).  \label{fig:trend-hess}} 
\end{figure}

\clearpage

\section{Discussions and Conclusion}

\subsection{Implications on Zeroth Order Optimization}
% \textcolor{red}{
    % Better understanding of the statistical properties of the gradient/Hessian estimators leads to better 
    In this paper, we focus on the statistical properties of the gradient/Hessian estimators. We briefly discuss the implications on zeroth order optimization algorithms before concluding the paper. 
    Consider the zeroth order gradient descent algorithm
    % with estimated gradient $  $ 
    \begin{align}
        x_{t+1} = x_t - \eta \wh{\nabla} f_k^\delta (x_t) ,  \label{eq:gd}
    \end{align}
    where $ \wh{\nabla} f_k^\delta (x_t) $ is the gradient estimator, and $\eta$ is a learning rate. 
    One virtue of our variance reduction result is that it provides bounds on $ \E \[ \| \wh{\nabla} f_k^\delta (x_t) \|^2 \] $ and thus $ \E \[ \| x_{t+1} - x_t \|^2 \] $, in terms of $\delta$ and $k$. 
    % For $L$-smooth functions, this implies that we can have 
    Recall an $L$-smooth function $f$ satisfies  
    \begin{align*} 
        f (x_{t+1}) \le f (x_t) + \nabla f (x_t)^\top (x_{t+1} - x_t) + \frac{L}{2} \| x_{t+1} - x_t \|^2. 
    \end{align*} 
    With Theorem \ref{thm:grad-var}, we can take expectation on both sides of the above equation and get an in expectation bound on $ f (x_{t+1}) $ in terms of $ f (x_t) $, $ \nabla f (x_t) $, the estimation granularity $\delta$, and the number of function evaluation $k$. For $L$-smooth functions, larger $k$ means more function evaluations, but it also allows bigger learning rates (thus potentially fewer iterations). More detailed study of the downstream usage of the estimators are studied in a separate work \citep{wang2022almost}. 
    % Thus the variance reduction result is a step stone for studying (\ref{eq:gd}) for any $L$-smooth functions. In fact, larger $k$ means more function evaluations, but it also allows bigger learning rates. 
    % The discussion is now included in the revised manuscript. 
% studied in this paper, one can 
% }

% \textcolor{red}{
\begin{remark}
    % Although the focus of the current manuscript is the statistical properties of the gradient/Hessian estimators, downstream applications to zeroth order optimization algorithms should not be overlooked. 
    An intriguing fact is that zeroth order optimization
    with \emph{noisy} function evaluations and \emph{noise-free} function evaluations are two very different problems. In a \emph{noisy} environment, the \emph{minimax} lower bound states that given $t$, there exists a strongly convex function such that no zeroth order algorithm can converge faster than $\Omega ( n / \sqrt{t} ) $ \citep{jamieson2012query,shamir2013complexity}. On contrary, with \emph{noiseless} function evaluations, Nesterov and Spokoiny have shown that zeroth-order algorithms can achieve $O (n^2 / t^2)$ convergence rate \citep{nesterov2017random}. 
    % Our paper is mainly concerned with zeroth order estimators in \emph{noiseless} environments. 
    % For statistical properties of the 
    % Comment on noisy/noise-free setting. 
\end{remark}
% }

\subsection{Conclusion}

We introduce gradient and Hessian estimators using random orthogonal frames sampled from the Stiefel manifold. The methods extend previous gradient/Hessian estimators based on spherical sampling \citep{flaxman2005online,wang2022hess}. Theoretically and empirically, we show that the variance of the estimation is reduced, and the accuracy of the estimation is improved. 
Refined bias bounds via Taylor expansion of higher orders are also provided. 
% Several promising future directions include gradient 

% \clearpage

\bibliographystyle{apalike} 
\bibliography{references}

\appendix

\section{Proofs}

\subsection{Proof of Proposition \ref{prop:smooth}} 

\begin{proof} 

    For any $ \tau > 0 $ and $ x,v \in \R^n$ with $\| v \| = 1$, define $x_{\tau,v} = x + \tau v$. When $ \tau $ is small, Taylor's theorem and $(p,L)$-smoothness of $f$ give 
    \begin{align*}
        \left| \frac{ \partial^{p-1} f (x_{\tau,v} ) [ v ] - \partial^{p-1} f (x) [v] }{\tau} \right| 
        \overset{\textcircled{1}}{=} 
        \left| \partial^{p} f (z_{\tau,v}) \[ v \]  \right| , 
    \end{align*} 
    where $z_{\tau, v}$ depends on $\tau $ and $v$ and $ \lim_{\tau \rightarrow 0} z_{\tau, v} = x $ for any $v$. 
    
    % Under Assumption \textbf{(A2)}, 
    Since the $f$ is $(p,L)$-smooth, for any $v \in \S^{n-1}$ (the unit sphere in $\R^n$), it holds that 
    \begin{align*} 
        \left| \frac{ \partial^{p-1} f (x_{\tau,v} ) [ v ] - \partial^{p-1} f (x) [v] }{\tau} \right| 
        \overset{\textcircled{2}}{\le} L. 
    \end{align*} 
    
    Combining \textcircled{1} and \textcircled{2} gives
    \begin{align*}
        \left| \partial^{p} f (z_{\tau,v}) \[ v \] \right| 
        \le 
        L, 
    \end{align*} 
    for any $v \in \S^{n-1}$ and any sufficiently small $\tau$. Thus for any $x \in \R^n$, we have 
    \begin{align*} 
        L 
        \ge 
        \sup_{v \in \S^{n-1}} \lim_{\tau \rightarrow 0} \left| \partial^{p} f (z_{\tau,v}) \[ v \] \right| 
        = 
        \left\| \partial^{p} f ({x}) \right\| . 
    \end{align*} 

    % Since $f$ is three-times continuously differentiable, $ \nabla^3 f (x) $ is a real symmetric tensor for an
\end{proof} 

% --------------------------------------------------

\subsection{Proofs of Propositions \ref{prop:proj}, \ref{prop:fourth} and \ref{prop:mat}} 

We first prove the following proposition, which will be useful in proving Proposition \ref{prop:fourth}. 

\begin{proposition}
    \label{prop:p}
    Let $v$ be uniformly sampled from $\S^{n-1}$ ($n \ge 2$). It holds that 
    \begin{align*}
        \E \[ v_i^p \] = \frac{ (p-1) (p-3) \cdots 1 }{ n (n+2) \cdots (n+p - 2) } 
    \end{align*}
    for all $i = 1,2,\cdots,n$ and any positive even integer $p$. 
\end{proposition}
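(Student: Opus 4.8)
The plan is to compute $\E[v_i^p]$ by a reduction to one-dimensional integrals over the sphere. By symmetry of the uniform measure on $\S^{n-1}$, the value of $\E[v_i^p]$ does not depend on $i$, so I fix $i=1$ and compute $\E[v_1^p]$. First I would express this expectation using the standard parametrization of the sphere that isolates the first coordinate: writing a point of $\S^{n-1}$ as $(\cos\theta,\ (\sin\theta)\,\omega)$ with $\theta\in[0,\pi]$ and $\omega\in\S^{n-2}$, the surface measure factors, and one gets
\begin{align*}
\E[v_1^p] = \frac{\int_0^\pi (\cos\theta)^p (\sin\theta)^{n-2}\,d\theta}{\int_0^\pi (\sin\theta)^{n-2}\,d\theta}.
\end{align*}
Equivalently, and perhaps more cleanly, one can substitute $t=v_1$ so that the induced density of $v_1$ on $[-1,1]$ is proportional to $(1-t^2)^{(n-3)/2}$, giving $\E[v_1^p] = \big(\int_{-1}^1 t^p (1-t^2)^{(n-3)/2}\,dt\big)\big/\big(\int_{-1}^1 (1-t^2)^{(n-3)/2}\,dt\big)$. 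Either way the ratio is a quotient of Beta-function values.

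The second step is to evaluate that ratio. The cleanest route avoids special functions: using the substitution $s=\cos^2\theta$ (or $s=t^2$) turns each integral into a Beta integral, and the quotient becomes
\begin{align*}
\E[v_1^p] = \frac{B\!\left(\tfrac{p+1}{2},\tfrac{n-1}{2}\right)}{B\!\left(\tfrac12,\tfrac{n-1}{2}\right)} = \frac{\Gamma\!\left(\tfrac{p+1}{2}\right)\Gamma\!\left(\tfrac{n}{2}\right)}{\Gamma\!\left(\tfrac12\right)\Gamma\!\left(\tfrac{n+p}{2}\right)}.
\end{align*}
For even $p$, the duplication-type identities $\Gamma\!\big(\tfrac{p+1}{2}\big)/\Gamma\!\big(\tfrac12\big) = \tfrac{(p-1)(p-3)\cdots 1}{2^{p/2}}$ and $\Gamma\!\big(\tfrac{n+p}{2}\big)/\Gamma\!\big(\tfrac n2\big) = \tfrac{n(n+2)\cdots(n+p-2)}{2^{p/2}}$ hold, and the powers of $2$ cancel, yielding exactly $\frac{(p-1)(p-3)\cdots 1}{n(n+2)\cdots(n+p-2)}$. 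Alternatively, to keep the argument fully self-contained and elementary, I would set up the recursion directly: integration by parts on $I_m := \int_0^\pi (\cos\theta)^m (\sin\theta)^{n-2}\,d\theta$ gives $I_m = \frac{m-1}{n+m-2} I_{m-2}$, and iterating this from $m=p$ down to $m=0$ produces the telescoping product $\frac{(p-1)(p-3)\cdots 1}{(n+p-2)(n+p-4)\cdots n}$, which is precisely the claimed formula once we divide by $I_0$.

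The only mild obstacle is bookkeeping: making sure the recursion terminates correctly (it does, since $p$ is even, so $m$ hits $0$ exactly and $I_0$ cancels the normalizing denominator), and checking the base case $n=2$, where the density of $v_1$ is $\tfrac{1}{\pi}(1-t^2)^{-1/2}$ and the formula reduces to the classical $\E[v_1^p]=\tfrac{(p-1)!!}{p!!}$ for $\cos^p$ of a uniform angle — consistent with the stated product since the denominator $n(n+2)\cdots(n+p-2) = 2\cdot 4\cdots p = p!!$. I would present the integration-by-parts recursion as the main argument, since it needs no external facts beyond the spherical coordinate factorization, and it makes the product structure of the answer transparent.
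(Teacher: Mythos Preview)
Your proposal is correct and follows essentially the same route as the paper: reduce $\E[v_1^p]$ to the ratio $\int_0^\pi \cos^p\theta\,\sin^{n-2}\theta\,d\theta \big/ \int_0^\pi \sin^{n-2}\theta\,d\theta$ via the spherical coordinate factorization, then establish the recursion $I_m = \frac{m-1}{n+m-2}\,I_{m-2}$ by integration by parts and telescope down to $I_0$. The paper derives that same recursion in two sub-steps (writing $I(n,p)=I(n,p-2)-I(n+2,p-2)$ and then $I(n+2,p-2)=\frac{n+1}{p-1}I(n,p)$) and does not include your Beta/Gamma alternative, but the arguments are otherwise identical.
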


\begin{proof} 
    Let $ (r, \varphi_1, \varphi_2, \cdots, \varphi_{n-1}) $ be the spherical coordinate system. We have, for any $i = 1,2,\cdots, n$ and an even integer $p$, 
    \begin{align*} 
        \E \[ v_1^p \] 
        =& \; 
        \frac{1}{A_n} \int_{0}^{2\pi} \int_0^\pi \cdots \int_{0}^\pi \cos^p ( \varphi_1 ) \sin ^{n-2}(\varphi _{1})\sin ^{n-3}(\varphi _{2})\cdots \sin(\varphi _{n-2}) \,d\varphi _{1}\, d\varphi_{2} \cdots d \varphi_{n-1} , 
        % \\ 
        % =& \; 
        % jjjjj 
    \end{align*} 
    where $A_n$ is the surface area of $ \S^{n-1} $. 
    Let 
    \begin{align*}
        I (n,p) 
        := 
        \int_{0}^\pi \sin^n ( x ) \cos^p (x) \, dx. 
    \end{align*} 
    
    Clearly, $ I (n,p) = I (n, p - 2 ) - I ( n + 2, p - 2 ) $. By integration by parts, we have $ I ( n + 2, p - 2 ) = \frac{n+1}{p-1} I (n,p) $. The above two equations give $ I (n,p) = \frac{p-1}{n+p} I (n, p-2) $. 
    
    Thus we have $ \E \[ v_1^p \] = \frac{ I (n-2, p) }{ I ( n - 2, 0 ) } = \frac{ I (n-2, p) }{ I ( n - 2, p-2 ) } \frac{ I (n-2, p-2) }{ I ( n - 2, p-4 ) }  \cdots \frac{ I (n-2, 2) }{ I ( n - 2, 0 ) } = \frac{ (p-1) (p-3) \cdots 1 }{ n (n+2) \cdots (n+p - 2) } $. 
    We conclude the proof by by symmetry. 
    % , we have $ \E \[ v_i^p \] = \frac{ (p-1) (p-3) \cdots 1 }{ n (n+2) \cdots (n+p - 2) } $ for all $i = 1,2,\cdots,n$ and even integer $p$. 
    
\end{proof}

\begin{proof}[Proof of Proposition \ref{prop:proj}]
    Let $p = 2$ in Proposition \ref{prop:p}, we have $ \E \[ v_i^2 \] = \frac{1}{n} $ for all $i$. In addition, when $i \neq j$, $ \E \[  v_i v_j | v_j = a \] = 0 $ for any $a$. Thus $ \E \[  v_i v_j \] = 0 $ for all $i \neq j$. 
\end{proof} 

\begin{proof}[Proof of Proposition \ref{prop:fourth}]
    % For any $v $ uniformly sampled from $\S^{n-1}$, we have $ \E \[ \(\sum_{i=1}^n v_i^2 \)^2 \] = 1 $. 
    By Proposition \ref{prop:p}, we have 
    \begin{align*}
        \E \[ v_i^4 \] = \frac{3}{n(n+2)}. 
    \end{align*}
    By symmetry and that $ \(\sum_{i=1}^n v_i^2 \)^2 = 1 $, for any $i \neq j$, we have 
    \begin{align*}
        \E \[ v_i^2 v_j^2 \] = \frac{ 1 - \frac{3n}{n(n+2)} }{n(n-1)} = \frac{1}{n(n+2)} . 
    \end{align*}
    In addition, similar to the proof for $\E \[ v_i v_j \] = 0$ for $i\neq j$ (Proposition \ref{prop:proj}), we have 
    $ \E \[ v_i v_j v_k v_l \] = 0 $ for any $i,j,k,l \in \{ 1,2,\cdots, n \} $ and $i \notin \{ j,k,l \}$. 
\end{proof}

\subsection{Proof of Proposition \ref{prop:mat}}

\begin{proof}
    Let $\delta_k^l$ be the Kronecker delta. Using Einstein's notation, Proposition \ref{prop:proj} is equivalent to $ \E \[ v_i v^j \] = \frac{1}{n} \delta_i^j $. Thus we have 
    \begin{align*}
        \E \[ \( v^\top A w \) v w^\top  \]
        = 
        \E \[ v_k A_l^k w^l v^i w_j \] 
        = 
        \frac{1}{n^2} A_l^k \delta_{k}^i \delta_j^l 
        = 
        \frac{1}{n^2} A_j^i, 
    \end{align*} 
    which concludes the proof. 
    
\end{proof}

% --------------------------------------------------

% \subsection{Proof of Proposition \ref{prop:proj}} 

% --------------------------------------------------

\subsection{Proof of Theorem \ref{thm:grad-bias}(a)} 
\label{app:f-delta} 
The original proof was due to \citet{flaxman2005online}. Here we present a proof via the divergence theorem. This version of proof will also assist the proof for Theorem \ref{thm:hess-bias}(a). 
Define 
\begin{align*}
    f^\delta (x) = \frac{1}{\delta^n V_n} \int_{\B^n} f (x + \delta v) \, dv , 
\end{align*}
where $V_n$ is the volume of $ \B^n $. 

\begin{lemma} 
    \label{lem:grad-bias}
    Let $f$ be a smooth function. For any $k \in \mathbb{N}_+$ and $v_1, v_2, \cdots, v_k \in \R^n$ sampled from the Stiefel sampling process, it holds that 
    \begin{align*}
        \frac{n}{\delta} \E \[ \nabla \wh{f}_k^\delta ( x ) \] 
        = \nabla {f}^\delta (x) , 
    \end{align*}
    for all $x \in \R^n$, and all $k = 1,2,\cdots, n$. 
\end{lemma}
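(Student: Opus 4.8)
The plan is to reduce the statement to the single-direction case $k=1$ via the marginal-distribution fact (Proposition \ref{prop:uniform}), and then identify the $k=1$ expectation with a spherical-to-ball averaging identity that is exactly the divergence theorem. First I would fix $x$, and use linearity of expectation together with the definition (Eq. \ref{eq:def-grad-est}): $\E[\wh{\nabla} f_k^\delta(x)] = \frac{n}{2\delta k}\sum_{i=1}^k \E[(f(x+\delta v_i)-f(x-\delta v_i))v_i]$. By Proposition \ref{prop:uniform} each $v_i$ is marginally uniform over $\S^{n-1}$, so every summand equals the same quantity, and the $\frac1k\sum_{i=1}^k$ collapses: $\E[\wh{\nabla} f_k^\delta(x)] = \frac{n}{2\delta}\,\E_{v\sim\S^{n-1}}[(f(x+\delta v)-f(x-\delta v))v]$. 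In particular the right-hand side does not depend on $k$, which already gives the ``for all $k$'' part for free.

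Next I would compute $\nabla f^\delta(x)$. Differentiating under the integral sign in the definition $f^\delta(x) = \frac{1}{\delta^n V_n}\int_{\B^n} f(x+\delta v)\,dv = \frac{1}{\delta^n V_n}\int_{\delta\B^n} f(x+u)\,du$, I get $\nabla f^\delta(x) = \frac{1}{\delta^n V_n}\int_{\delta\B^n} \nabla f(x+u)\,du$. Now apply the divergence theorem componentwise: for the $j$-th coordinate, $\int_{\delta\B^n}\partial_j f(x+u)\,du = \int_{\delta\B^n}\mathrm{div}\big(f(x+u)\,e_j\big)\,du = \int_{\delta\S^{n-1}} f(x+u)\,(e_j\cdot \nu)\,dS(u)$, where $\nu = u/\delta$ is the outward unit normal on the sphere of radius $\delta$. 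Reparametrizing $u = \delta v$ with $v\in\S^{n-1}$ (so $dS(u) = \delta^{n-1}\,dS(v)$ and $\nu = v$), this becomes $\delta^{n-1}\int_{\S^{n-1}} f(x+\delta v)\,v_j\,dS(v)$. Dividing by $\delta^n V_n$ and recalling $\int_{\S^{n-1}} dS = A_n = n V_n$, I obtain $\nabla f^\delta(x) = \frac{1}{\delta V_n}\cdot\frac{1}{n V_n}\cdot\ldots$; being careful with constants, $\nabla f^\delta(x) = \frac{n}{\delta}\,\E_{v\sim\S^{n-1}}[f(x+\delta v)\,v]$.

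Finally I would reconcile the two expressions. The term $\E_{v\sim\S^{n-1}}[f(x+\delta v)v]$ from the divergence-theorem computation must be matched with the symmetrized form $\frac{n}{2\delta}\E[(f(x+\delta v)-f(x-\delta v))v]$ appearing in $\E[\wh{\nabla}f_k^\delta(x)]$; here I use that $v\mapsto -v$ preserves the uniform measure on $\S^{n-1}$, so $\E[f(x-\delta v)v] = -\E[f(x+\delta v)v]$, hence $\E[(f(x+\delta v)-f(x-\delta v))v] = 2\E[f(x+\delta v)v]$. Combining, $\frac{n}{\delta}\E[\wh{\nabla}f_k^\delta(x)]$ — wait, I must track the statement's normalization: the lemma claims $\frac{n}{\delta}\E[\nabla \wh{f}_k^\delta(x)] = \nabla f^\delta(x)$, so I should present $\wh{\nabla}f_k^\delta$ without the leading $\frac{n}{\delta}$ already built in, or absorb constants accordingly; I will simply verify both sides reduce to a common scalar multiple of $\E_{v}[f(x+\delta v)v]$ and match constants. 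The only real obstacle is bookkeeping of the constants $V_n$, $A_n = nV_n$, the $\delta^{n-1}$ Jacobian, and the factor of $2$ from symmetrization — none of it is deep, but it is the one place an error would hide, so I would write those steps out carefully. A clean alternative that avoids differentiating under the integral is to invoke Stokes' theorem directly in the form $\int_{\delta\B^n}\nabla g = \int_{\delta\S^{n-1}} g\,\nu\,dS$ for $g = f(x+\cdot)$, which is what I sketched above.
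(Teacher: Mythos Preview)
Your proposal is correct and essentially identical to the paper's own proof: both reduce to $k=1$ via Proposition~\ref{prop:uniform}, differentiate under the integral in the definition of $f^\delta$, and apply the divergence theorem to the vector field $f(x+\cdot)$ times a constant direction (the paper uses an arbitrary unit vector $u$, you use coordinate vectors $e_j$). Your explicit handling of the antipodal symmetry $v\mapsto -v$ to pass from $\E[f(x+\delta v)v]$ to the symmetrized form is a step the paper leaves implicit, and your caution about the normalization constant is warranted---the paper's lemma statement has a stray $\tfrac{n}{\delta}$ that the proof itself does not produce (the proof actually shows $\E[\wh{\nabla}f_k^\delta(x)] = \nabla f^\delta(x)$).
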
 

% Previously, \citet{} proved the Lemma \ref{lem:grad-bias} for $k=1$. Although generalizing the statement to $k =2,3,\cdots,n$ is not hard, we provide a proof using the divergence theorem, for completeness. 

\begin{proof}
    Without loss of generality, let $x = 0$. Derivations for other values of $x$ follows similar arguments. 
    Let $u$ be an arbitrary unit vector in $ \R^n $, and let $U$ be the constant vector field generated by $u$. Let $X = f U$, which is the vector field $U$ multiplied by the function values of $f$. Apply the divergence theorem to this vector field and the region enclosed by $ \delta \S^{n-1} $ gives 
    \begin{align*} 
        \int_{ \delta \B^n } \nabla \cdot X \, dV 
        = 
    % \int_{ \B_x (\delta) } \nabla \cdot X \, dv 
        \int_{ \delta \S^{n-1} } X \cdot d \vec{S} . 
    \end{align*} 
    The above equation is equivalent to 
    \begin{align*}
        \left< \int_{ v \in \delta \B^n } \nabla f ( v) \, dv , u \right>
        = 
        \left< \int_{ v \in \delta \S^{n-1} } f ( v) \frac{v}{\| v \|} \, d v , u \right> , \quad \forall x \in \R^n
    \end{align*} 

% Since $ u $ is an arbitrary vector in $ \{ v_1, v_2, \cdots, v_k \}^\perp $, we can write the above as 
% % By the dominated convergence theorem, we know that 
% \begin{align*}
%     \nabla_u \int_{ v \in \delta \B^n \cap \{ v_1, v_2,\cdots,v_k \}^\perp }  f (v) \, dv 
%     = 
%     \int_{ v \in \delta \S^{n-1} \cap \{ v_1, v_2,\cdots,v_k \}^\perp } f (v) \frac{v}{\| v \|} \cdot u \, d S . 
% \end{align*}

% Since $ u $ is an arbitrary vector in $ \{ v_1, v_2, \cdots, v_k \}^\perp $, we can write the above as 
    By the dominated convergence theorem (or Leibniz integral rule), we can exchange the gradient and the integral to get 
    \begin{align*}
        \left< \nabla \int_{ v \in \delta \B^n }  f (v) \, dv , u \right>
        % \overset{(i)} 
        {=} 
        \left< \int_{ v \in \delta \S^{n-1} } f (v) \frac{v}{\| v \|} \, d v , u \right> , \quad \forall u \in \R^n , 
        % \label{eq:inner2}
    \end{align*}

    % Let $ \wt{f}_{v_1,v_2,\cdots,v_k}^\delta (x) = \frac{1}{ \delta^{n-k} V_{n-k} } \int_{ v \in \delta \B^n \cap \{ v_1, v_2,\cdots,v_k \}^\perp } f (x + v) \, dv $, where $V_{n-k}$ is the volume of the $(n-k)$-dimensional unit ball. 
    % Thus (\ref{eq:inner2}) becomes
    or equivalently 
    \begin{align*} 
        \left< \delta^{n} V_{n} \nabla {f}^\delta (0) , u \right> 
        \overset{\textcircled{1}}{=}
        \left< \int_{v \in \S^{n-1}} f ( \delta v) v \, dv , u \right> , 
        \quad \forall u \in \R^n , 
    \end{align*} 
    where on the right-hand-side a change of integration from $\delta \S^{n-1}$ to $ \S^{n-1} $ is used. 
    % where $ A_{n} $ is the area of the $(n-1)$-dimensional unit sphere. 

    % Rearranging terms gives, for all $u \in \R^n$, it holds that 
    % \begin{align*}
    %     \left< \frac{n}{\delta} \int_{v \in \S^{n-1}} f ( \delta v) v \, dv , u \right> 
    %     = 
    %     \left< \nabla \wt{f}^\delta (0) , u \right> , 
    % \end{align*} 
    % since $ A_n = n V_n $. 
    By Proposition \ref{prop:uniform}, it holds that, for any $v_i$ generated from the {Stiefel} sampling process, 
    \begin{align*}
        \E \[ f (\delta v_i ) v_i \] \overset{\textcircled{2}}{=} \frac{1}{\delta^{n-1} A_n } \int_{v_i \in \S^{n-1}} f  (\delta v_i ) v_i \, d v_i. 
    \end{align*}
    
    Combining \textcircled{1} and \textcircled{2} gives
    \begin{align*} 
        \left< \frac{n}{\delta} \E \[ f (\delta v_i ) v_i \], u \right> = \left< \nabla {f}^\delta (0), u \right>. 
    \end{align*} 
    
    The above equation concludes the proof since $ u $ is an arbitrary (unit) vector in $ \R^n $. 
\end{proof} 

\begin{proof}[Proof of Theorem \ref{thm:grad-bias}(a)]
    Let $V_n$ be the volume of $ \B^n $, and let $ A_n $ be the area of $\S^{n-1}$. If $f$ is $(2,L_1)$-smooth, we have 
    \begin{align*}
        \left\| \nabla f^\delta (x) - \nabla f (x) \right\| 
        \le 
        \int_{\delta \B^n} \frac{1}{\delta^n V_n} \left\|  \nabla  f^\delta (x + v) - \nabla f (x) \right\| \, d v 
        \le \int_{0}^\delta \frac{1}{\delta^n V_n} L_1 r^n A_n \, d r = \frac{L_1 n \delta }{n+1} , 
    \end{align*} 
    where the last equation uses $ A_n = n V_n $. Combine the above result with Lemma \ref{lem:grad-bias} finishes the proof. 
\end{proof}

\subsection{Proof of Theorem \ref{thm:hess-bias}(a)} 

Define 
\begin{align*}
    \wt{f}^\delta (x) = \frac{1}{\delta^n V_n}  \frac{1}{\delta^n V_n} \int_{\B^n} \int_{\B^n} f (x + \delta v + \delta w) \, dw \, dv , 
\end{align*}
where $V_n$ is the volume of $ \B^n $.

\begin{lemma} 
    \label{lem:hess-bias}
    Let $f$ be twice continuously differentiable. It holds that 
    \begin{align*}
        \frac{n^2}{\delta^2} \E \[ \wh{\H} f_k^\delta (x) \] = \nabla^2 \wt{f}^\delta (x),  
    \end{align*} 
    for any $ x \in \R^n $ and $ k = 1,2,\cdots,n $. 
\end{lemma}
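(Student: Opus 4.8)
The plan is to derive Lemma~\ref{lem:hess-bias} from Lemma~\ref{lem:grad-bias} by applying the single‑direction smoothing identity twice — once for the average over $v$ and once for the average over $w$. The first step is the reduction that already opens the proof of Theorem~\ref{thm:hess-bias}(b): by linearity of expectation, the reflection symmetries $v\mapsto -v$ and $w\mapsto -w$ of the uniform law on $\S^{n-1}$ (each of the four signed second‑difference terms is mapped to $f(x+\delta v+\delta w)$, while $v_iw_j^{\tr}+w_jv_i^{\tr}$ is even in the pair $(v_i,w_j)$), and the exchange symmetry $v\leftrightarrow w$ (Proposition~\ref{prop:uniform} gives the marginals), one obtains $\E[\wh{\H}f_k^\delta(x)]=\tfrac{n^2}{\delta^2}\,\E\big[f(x+\delta v+\delta w)\,vw^{\tr}\big]$ with $v,w$ independent and uniform over $\S^{n-1}$.

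Next I would apply Lemma~\ref{lem:grad-bias} to the inner expectation. Conditioning on $w$ and treating $g:=f(\,\cdot\,+\delta w)$ as a function of the remaining argument, Lemma~\ref{lem:grad-bias} — equivalently, the divergence‑theorem identity $\tfrac{n^2}{\delta^2}\,\E_v[g(y+\delta v)v]=\nabla g^\delta(y)$ behind its proof — gives $\tfrac{n^2}{\delta^2}\,\E_v[f(x+\delta v+\delta w)\,v]=\nabla_x g^\delta(x)$. Since ball‑averaging commutes with translation, $g^\delta(x)=f^\delta(x+\delta w)$, hence $\nabla_x g^\delta(x)=(\nabla f^\delta)(x+\delta w)$. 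Multiplying on the right by $w^{\tr}$ and taking $\E_w$ (using independence of $v$ and $w$) then yields $\tfrac{n^2}{\delta^2}\,\E\big[f(x+\delta v+\delta w)\,vw^{\tr}\big]=\E_w\big[(\nabla f^\delta)(x+\delta w)\,w^{\tr}\big]$, and combining with the first step gives $\E[\wh{\H}f_k^\delta(x)]=\E_w\big[(\nabla f^\delta)(x+\delta w)\,w^{\tr}\big]$.

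Finally I would invoke the same identity a second time, componentwise, on the vector field $\nabla f^\delta$: for each coordinate $i$, $\tfrac{n^2}{\delta^2}\,\E_w[(\partial_i f^\delta)(x+\delta w)\,w]=\nabla(\partial_i f^\delta)^\delta(x)$, and because the averaging operator commutes with $\partial_i$ and composes, $(\partial_i f^\delta)^\delta=\partial_i(f^\delta)^\delta=\partial_i\wt{f}^\delta$. Stacking the $n$ coordinate identities into a matrix gives $\tfrac{n^2}{\delta^2}\,\E_w[(\nabla f^\delta)(x+\delta w)\,w^{\tr}]=\nabla^2\wt{f}^\delta(x)$ (automatically symmetric, being a Hessian). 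Substituting the display from the previous paragraph yields $\tfrac{n^2}{\delta^2}\,\E[\wh{\H}f_k^\delta(x)]=\nabla^2\wt{f}^\delta(x)$, which is the claim; the derivation is uniform in $k$ since only the marginal laws of the $v_i$ and $w_j$ were used.

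The routine but essential point to watch is the commutation of the ball‑averaging operator with differentiation — used both to write $\nabla_x g^\delta(x)=(\nabla f^\delta)(x+\delta w)$ and to identify $(\partial_i f^\delta)^\delta$ with $\partial_i\wt{f}^\delta$ — which is justified exactly as in the proof of Lemma~\ref{lem:grad-bias}, by differentiating under the integral sign (dominated convergence / Leibniz rule), legitimate here since $f$ is twice continuously differentiable on all of $\R^n$. The only other place demanding care is the opening symmetrization, where the four signed second‑difference terms together with the symmetrized rank‑one matrix must be collapsed cleanly onto the single term $f(x+\delta v+\delta w)\,vw^{\tr}$; one should also check that the two factors $\tfrac{n^2}{\delta^2}$ picked up along the way, and the double average defining $\wt{f}^\delta$, are consistent with the normalization conventions fixed for $f^\delta$ and $\wt{f}^\delta$.
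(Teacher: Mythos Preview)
Your approach is correct and is essentially the paper's: both iterate the single-direction smoothing identity of Lemma~\ref{lem:grad-bias} twice (the paper re-runs the divergence-theorem argument on $g_z=\langle\nabla f^\delta,z\rangle$, while you invoke Lemma~\ref{lem:grad-bias} directly, first conditioning on $w$ and then componentwise in $w$). One bookkeeping slip: the identity behind Lemma~\ref{lem:grad-bias} reads $\tfrac{n}{\delta}\,\E_v[g(y+\delta v)v]=\nabla g^\delta(y)$, not $\tfrac{n^2}{\delta^2}$; with the correct constant your chain yields $\E[\wh{\H}f_k^\delta(x)]=\nabla^2\wt{f}^\delta(x)$ --- the extra $n^2/\delta^2$ in the displayed lemma is itself a typo, as the paper's own derivation and its use in Theorem~\ref{thm:hess-bias}(a) confirm.
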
 

\begin{proof}
    Without loss of generality, we consider $x = 0$. Also, by linearity of expectation, it suffices to prove that for any $ v,w $ independently uniformly sampled from $\S^{n-1}$, we have 
    \begin{align*}
        \E_{v,w \overset{i.i.d.}{\sim} \S^{n-1}} \[ f (\delta v + \delta w) v w^\top \] 
        = 
        \nabla^2 \wt{f}^\delta (0). 
    \end{align*}

% ------------------------------------------------------

    Let $ z $ be an arbitrary unit vector in $\R^n$. Let $g_z (x) = \nabla_z {f}^\delta (x) = \left< \nabla {f}^\delta (x), z \right>$, where $f^\delta$ is defined in Appendix \ref{app:f-delta}. 
    Let $ u $ be an arbitrary unit vector in $\R^n$, and let $ U $ be the constant vector field generated by $u$. Let $ X_z := g_z U $ be the vector field $U$ multiplied by the function values of $g_z$. Apply the divergence theorem to this vector field gives 
    \begin{align*}
        \int_{\delta \B^n} \nabla \cdot X_z \, dV = \int_{\delta \S^{n-1}} X_z \cdot d \vec{S}. 
    \end{align*}

    The above equation is equivalent to 
    \begin{align*} 
        \left< \int_{\delta \B^n} \nabla g_z (v) \, dv , u \right> 
        = 
        \left< \int_{\delta \S^{n-1}} g_z (v) \frac{v}{\| v \|} \, dv , u \right> . 
    \end{align*}

    By the dominated convergence theorem, we can exchange the gradient and the integral to get 
    \begin{align*}
        \left< \nabla \int_{ \delta \B^n }  g_z (v) \, dv , u \right>
        % \overset{(i)}
        {=} 
        \left< \int_{ \delta \S^{n-1} } g_z (v) \frac{v}{\| v \|} \, d v , u \right> , \quad \forall u \in \R^n ,   
        % \label{eq:inner2}
    \end{align*}
    which is equivalent to 
    % ----------------------------------------
    \begin{align*}
        \left< \nabla \int_{ \delta \B^n }  \left< \nabla {f}^\delta (v), z \right> \, dv , u \right> 
        \overset{\textcircled{1}}{=} 
        \left< \int_{ \delta \S^{n-1} } \left< \nabla {f}^\delta (v), z \right> \frac{v}{\| v \|} \, d v , u \right> , \quad \forall u \in \R^n . 
    \end{align*}

    By Lemma \ref{lem:grad-bias}, the right-hand-side of \textcircled{1} is 
    \begin{align*} 
        &\; \left< \int_{ \delta \S^{n-1} } \left< \nabla {f}^\delta (v), z \right> \frac{v}{\| v \|} \, d v , u \right> \\ 
        =& \; 
        \left< \int_{ \S^{n-1} } \left< \frac{n}{\delta} \int_{ \S^{n-1} } f (\delta v + \delta w) w \, dw, z \right> v \, d v , u \right> \\ 
        =& \; 
        % \frac{1}{\delta^{n-1} A_n } \int_{ v \in \S^{n-1} } \int_{ w \in \S^{n-1} } \left<  \left<  f (\delta v + \delta w) w , z \right>  v , u \right> \, dw \, d v \\ 
        % =& \; 
        % \frac{1}{\delta^{n-1} A_n } u^\top \( \int_{ v \in \S^{n-1} } \int_{ w \in \S^{n-1} } f (\delta v + \delta w) v w^\top \, dw \, dv \) z \\
        % =& \; 
        \frac{n}{\delta} \delta^{n-1} A_n u^\top \E_{v,w} \[ f (\delta v + \delta w) v w^\top \] z, 
    \end{align*} 
    where $A_n$ is the surface area of $\S^{n-1}$. 
    
    By dominated convergence theorem, we can interchange the integral and the directional derivative. Thus the left-hand-side of \textcircled{1} is 
    \begin{align*} 
        \left< \nabla \int_{ \delta \B^n }  \left< \nabla {f}^\delta (v), z \right> \, dv , u \right> 
        =
        \delta^n V_n u^\top \nabla^2 \wt{f}^\delta (0) z, 
    \end{align*}
    where $V_n$ is the volume of $ \B^n $. 
    
    Since $ A_n = n V_n$, collecting terms gives
    \begin{align*}
        u^\top \( \nabla^2 \wt{f}^\delta (0) \) z
        = 
        \frac{n^2}{\delta^2} u^\top \E_{v,w} \[ f (\delta v + \delta w) v w^\top \] z, 
    \end{align*} 
    
    We conclude the proof by noting that the above is true for any (unit) vectors $u$ and $z$. 
\end{proof}

\begin{proof}[Proof of Theorem \ref{thm:hess-bias}(a)]
    
    Let $V_n$ be the volume of $ \B^n $, and let $ A_n $ be the area of $\S^{n-1}$. If $f$ is $(3,L_2)$-smooth, we have 
    \begin{align*}
        \left\| \nabla^2 \wt{f}^\delta (x) - \nabla^2 f (x) \right\| 
        \le& \;  
        \int_{\delta \B^n} \int_{\delta \B^n} \frac{1}{\delta^{2n} V_n^2} \left\|  \nabla^2  \wt{f}^\delta (x + v + w) - \nabla^2 f (x) \right\| \, dw \, d v \\ 
        \le& \;  
        \int_{0}^\delta \int_{0}^\delta \frac{1}{\delta^{2n} V_n^2 } L_2 (r + s ) r^{n-1} s^{n-1} A_n^2 \, d r\, d s \\
        =& \; 
        \frac{2n L_2 \delta }{n+1}, 
        % = \frac{L_1 n \delta }{n+1} . 
    \end{align*} 
    where the last equation uses $ A_n = n V_n $. Combine the above result with Lemma \ref{lem:hess-bias} finishes the proof. 
\end{proof}

\section{Supplementary Figures} 
\label{app:additional}

\begin{figure}[h]
    \centering
    \subfloat[$ x = 0, \delta = 0.1, k = 100 $]{\includegraphics[width = 0.45\textwidth]{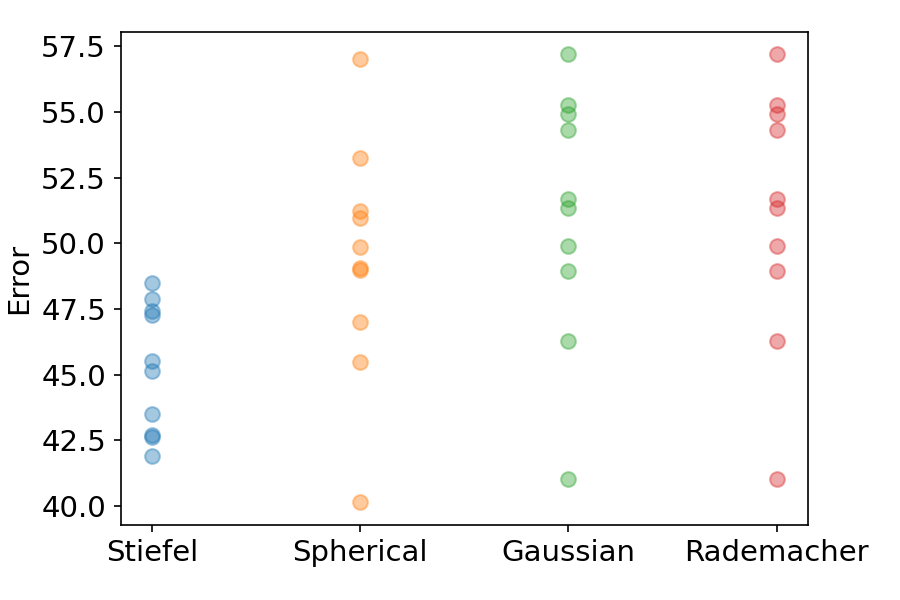}}
    \subfloat[$ x = 0, \delta = 0.1, k = 200 $]{\includegraphics[width = 0.45\textwidth]{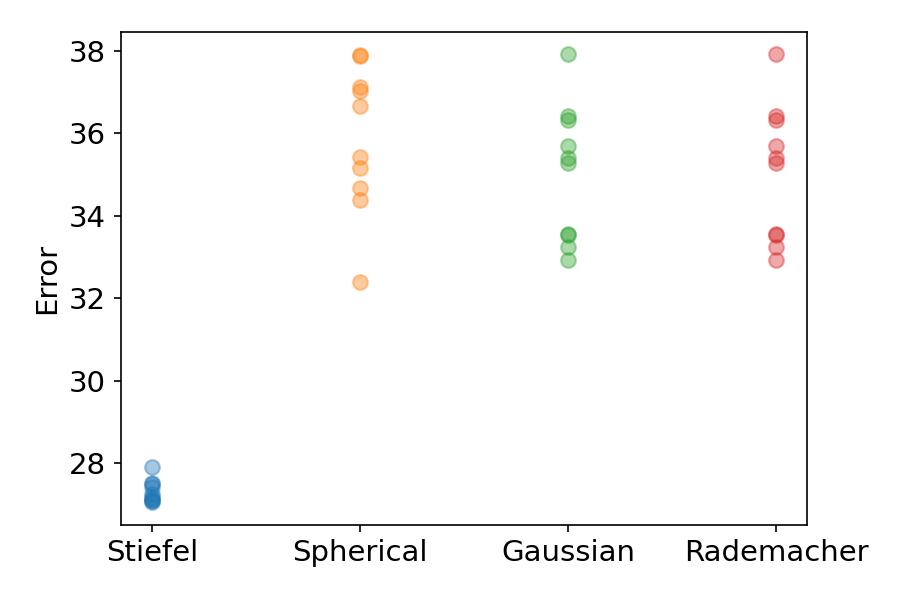}} \\
    \subfloat[$ x = 0, \delta = 0.01, k = 100 $]{\includegraphics[width = 0.45\textwidth]{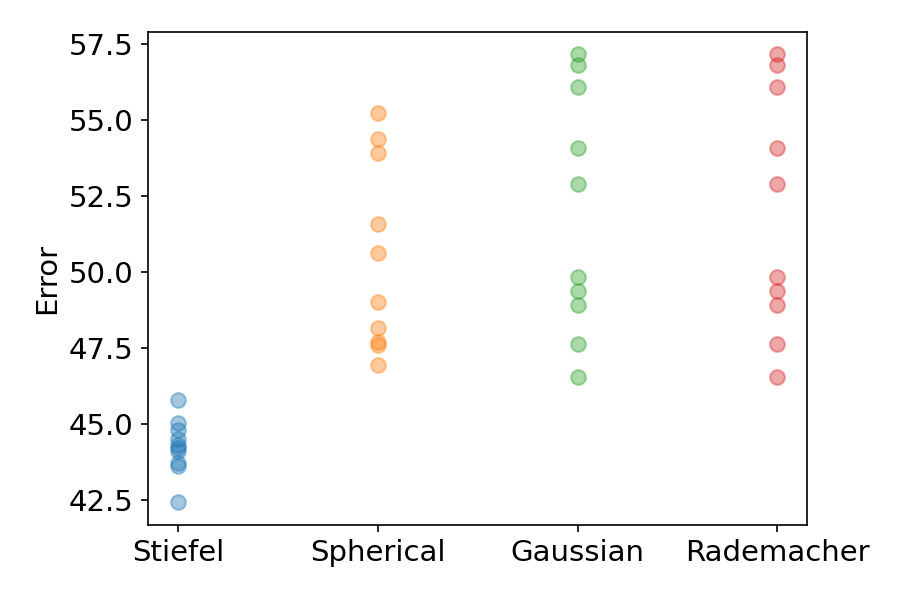}}
    \subfloat[$ x = 0, \delta = 0.01, k = 200 $]{\includegraphics[width =  0.45\textwidth]{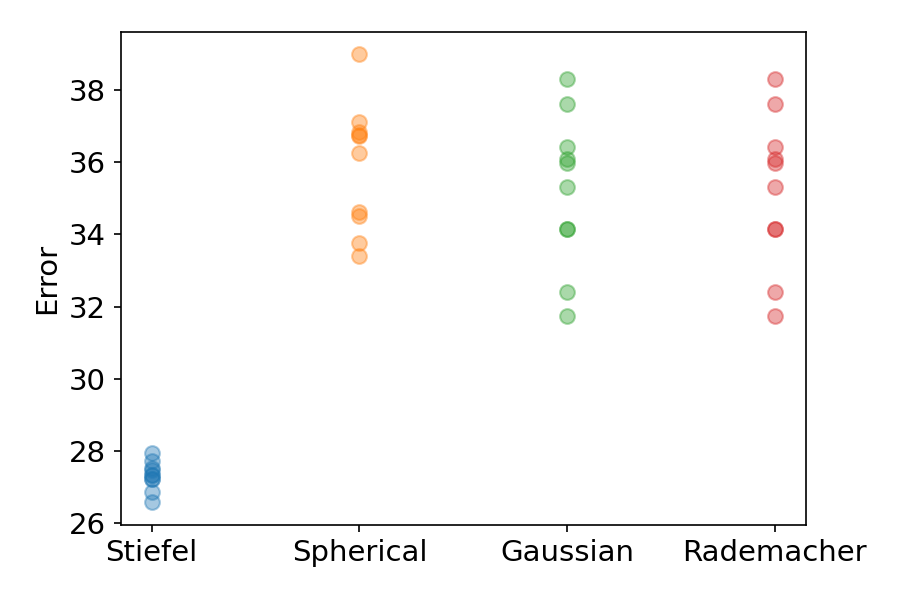}} \\ 
    \subfloat[$ x = 0, \delta = 0.001, k = 100 $]{\includegraphics[width = 0.45\textwidth]{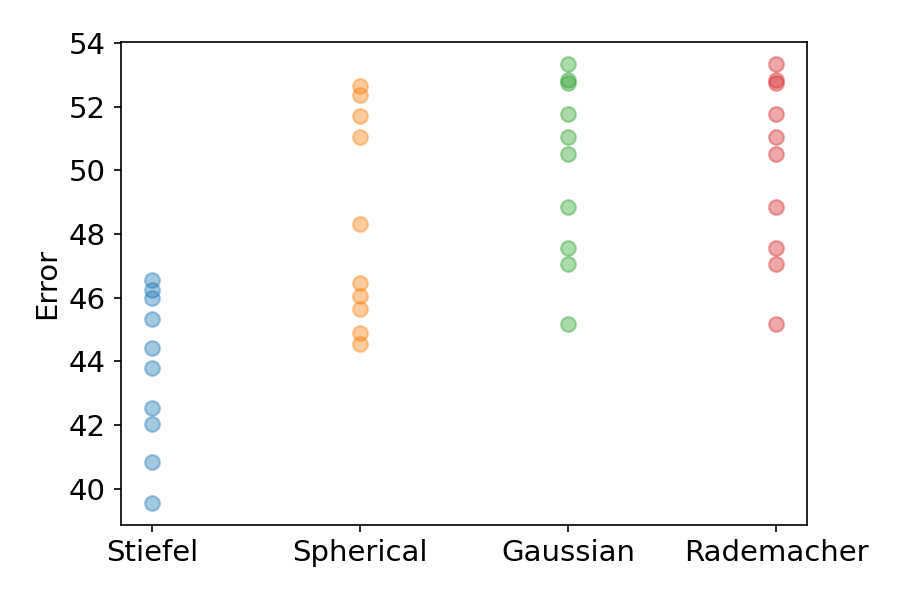}}
    \subfloat[$ x = 0, \delta = 0.001, k = 200 $]{\includegraphics[width = 0.45\textwidth]{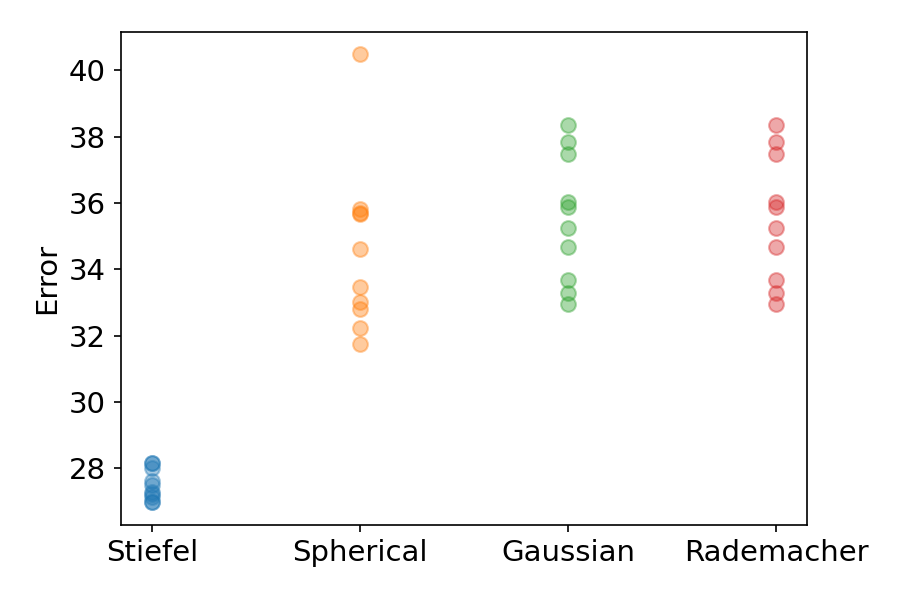}} 
    % \subfloat[$ x = 0, \delta = 0.1, k = 300 $]{\includegraphics[scale = 0.4]{grad_x0_delta0.1_k300.png}} \hfill
    % \subfloat[$ x = 0, \delta = 0.1, k = 400 $]{\includegraphics[scale = 0.4]{grad_x0_delta0.1_k400.png}} 
    \caption{Errors of $ \wh{\nabla} f_k^\delta (x) $, $ \wh{\nabla} f_{k,B}^\delta (x) $, $ \wh{\nabla} f_{k,G}^\delta (x) $, $ \wh{\nabla} f_{k,R}^\delta (x) $ on test function (Eq. \ref{eq:test}). Each subfigure corresponds to a different combination of the location for estimation $x$, the finite difference granularity $ \delta $, and number of random directions $k$. See caption of Figure \ref{fig:grad-compare-main} for detailed illustration. 
    }
    % , on different values of $(x, \delta, k)$. \label{fig:grad-compare-supp1}}
\end{figure}

\begin{figure}[h] 
    \centering
    \subfloat[$ x = \frac{\pi}{4} \mathbf{1}, \delta = 0.1, k = 100 $]{\includegraphics[width = 0.45\textwidth]{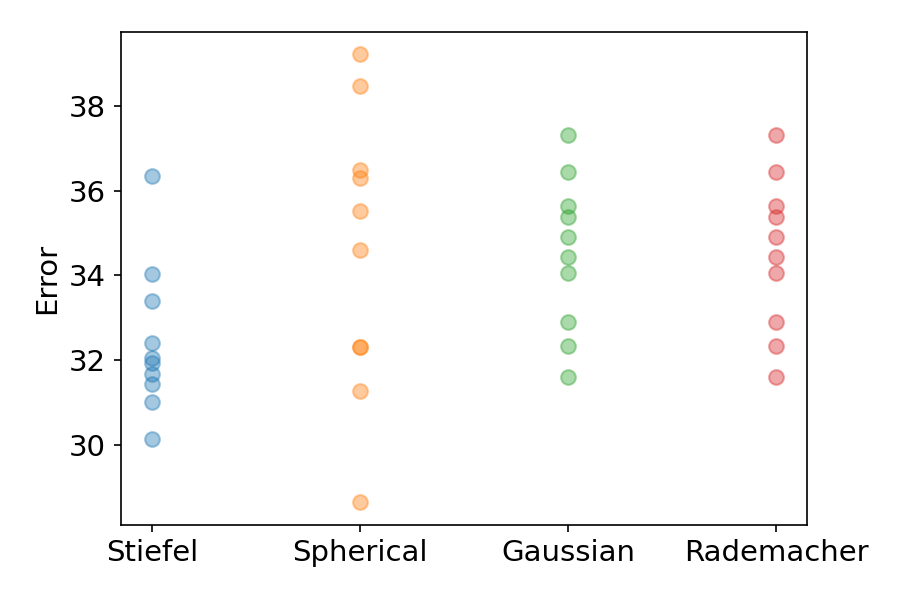}}
    \subfloat[$ x = \frac{\pi}{4} \mathbf{1}, \delta = 0.1, k = 200 $]{\includegraphics[width = 0.45\textwidth]{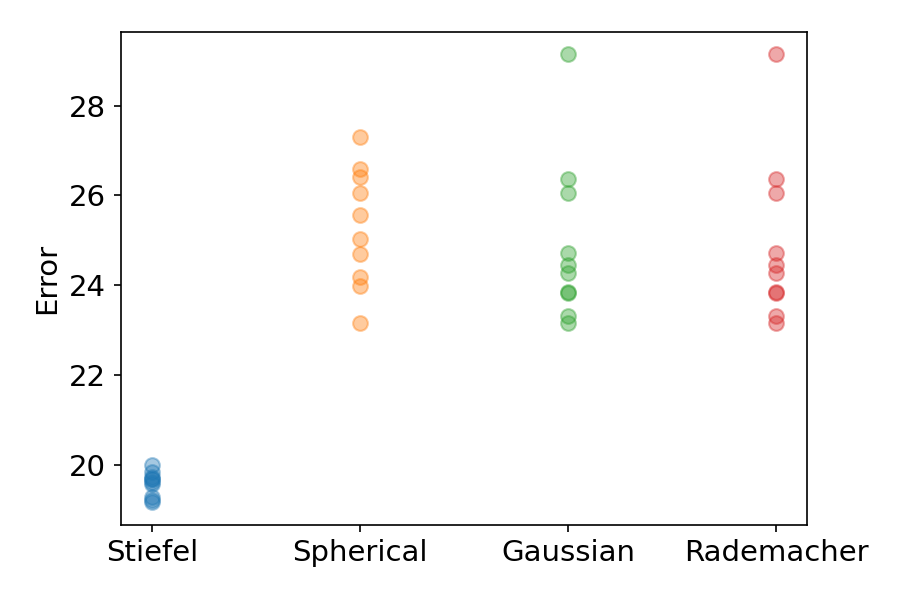}} \\
    \subfloat[$ x = \frac{\pi}{4} \mathbf{1}, \delta = 0.01, k = 100 $]{\includegraphics[width = 0.45\textwidth]{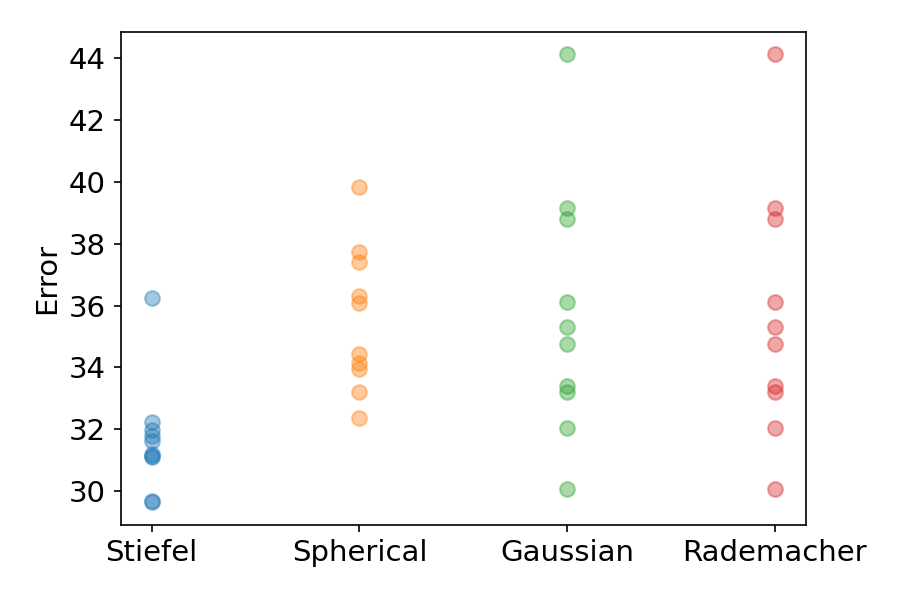}}
    \subfloat[$ x = \frac{\pi}{4} \mathbf{1}, \delta = 0.01, k = 200 $]{\includegraphics[width =  0.45\textwidth]{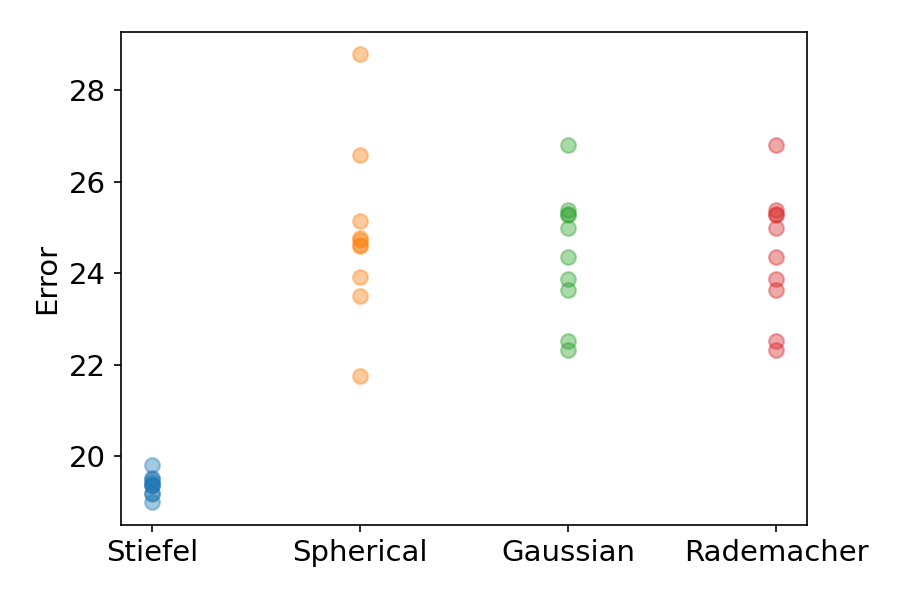}} \\ 
    \subfloat[$ x = \frac{\pi}{4} \mathbf{1}, \delta = 0.001, k = 100 $]{\includegraphics[width = 0.45\textwidth]{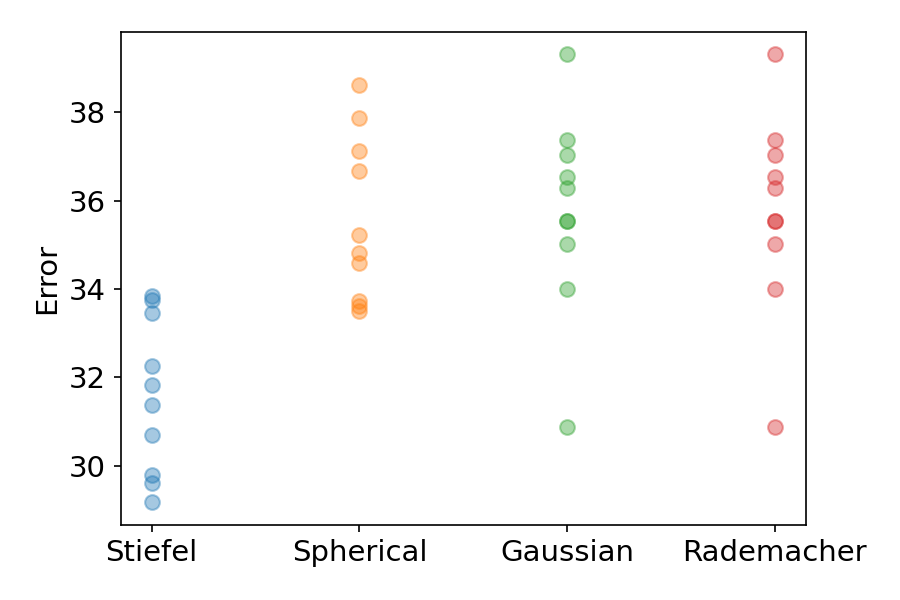}}
    \subfloat[$ x = \frac{\pi}{4} \mathbf{1}, \delta = 0.001, k = 200 $]{\includegraphics[width = 0.45\textwidth]{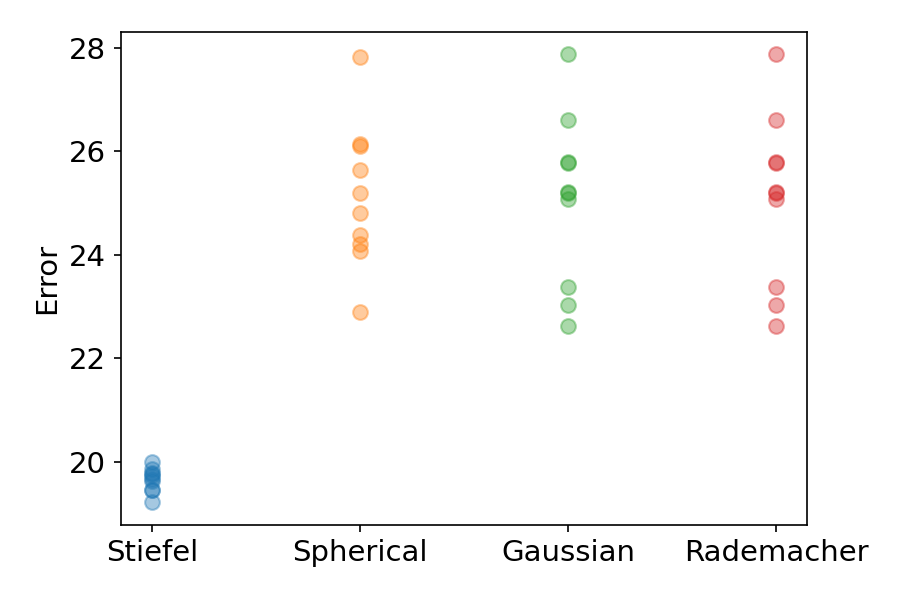}} 
    \caption{Errors of $ \wh{\nabla} f_k^\delta (x) $, $ \wh{\nabla} f_{k,B}^\delta (x) $, $ \wh{\nabla} f_{k,G}^\delta (x) $, $ \wh{\nabla} f_{k,R}^\delta (x) $ on test function (Eq. \ref{eq:test}). Each subfigure corresponds to a different combination of the location for estimation $x$, the finite difference granularity $ \delta $, and number of random directions $k$. See caption of Figure \ref{fig:grad-compare-main} for detailed illustration. 
    }
    % , on different values of $(x, \delta, k)$. \label{fig:grad-compare-supp1}}
\end{figure}

\begin{figure}[h] 
    \centering
    \subfloat[$ x = \frac{\pi}{4} \mathbf{1}, \delta = 0.1, k = 300 $]{\includegraphics[width = 0.45\textwidth]{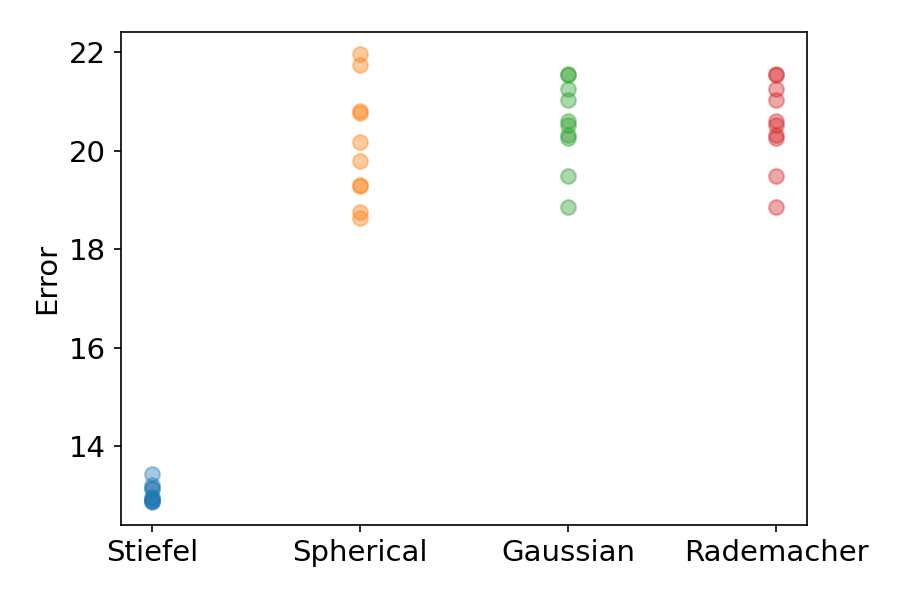}}
    \subfloat[$ x = \frac{\pi}{4} \mathbf{1}, \delta = 0.1, k = 400 $]{\includegraphics[width = 0.45\textwidth]{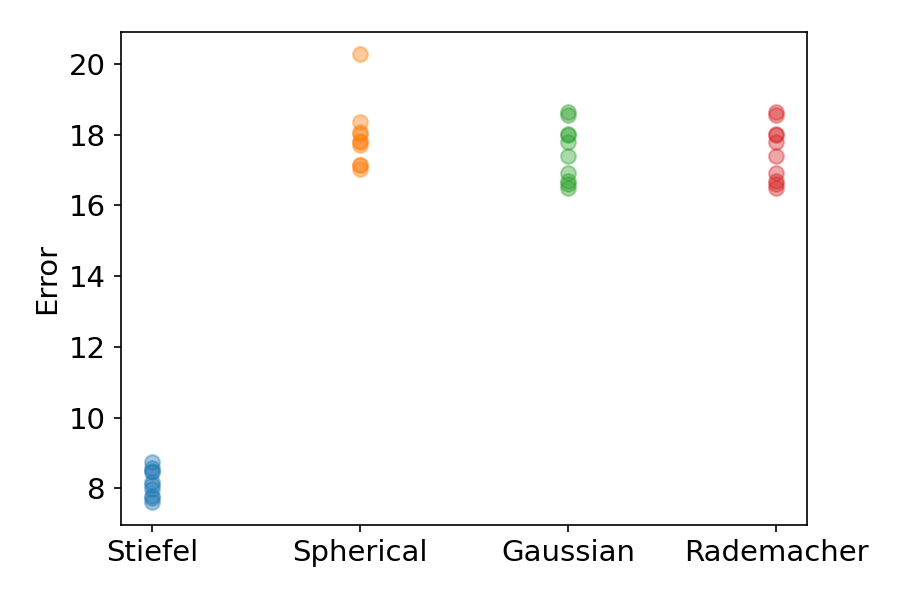}} \\
    \subfloat[$ x = \frac{\pi}{4} \mathbf{1}, \delta = 0.01, k = 300 $]{\includegraphics[width = 0.45\textwidth]{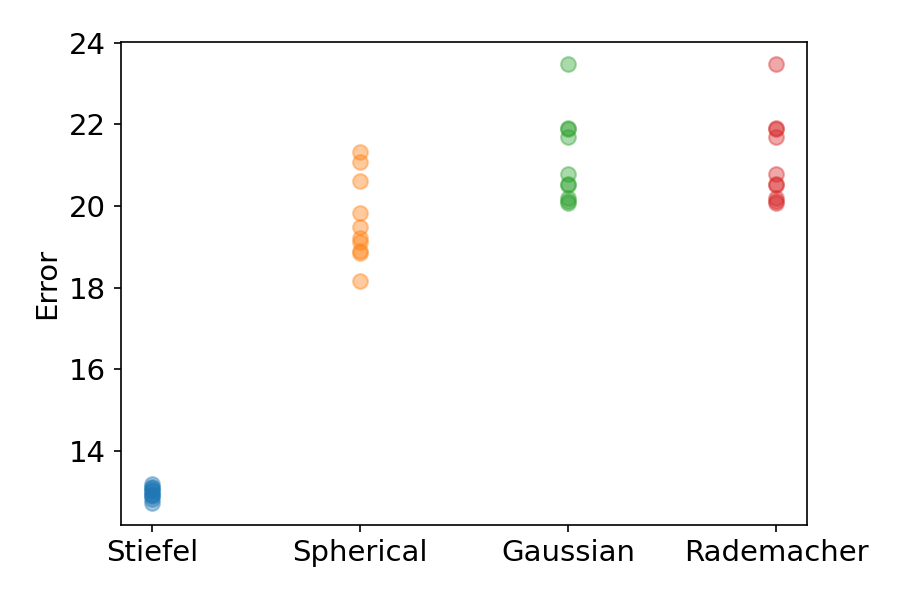}}
    \subfloat[$ x = \frac{\pi}{4} \mathbf{1}, \delta = 0.01, k = 400 $]{\includegraphics[width =  0.45\textwidth]{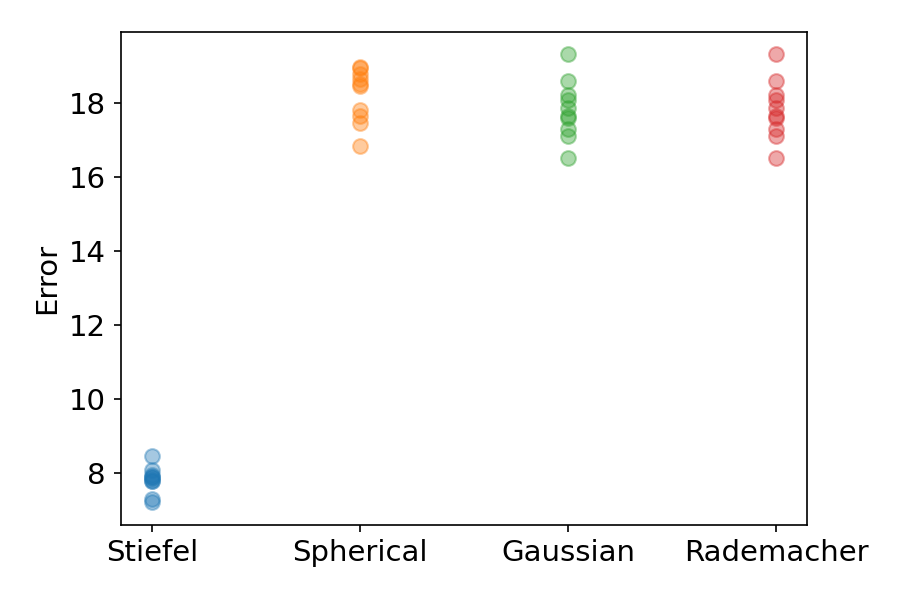}} \\ 
    \subfloat[$ x = \frac{\pi}{4} \mathbf{1}, \delta = 0.001, k = 300 $]{\includegraphics[width = 0.45\textwidth]{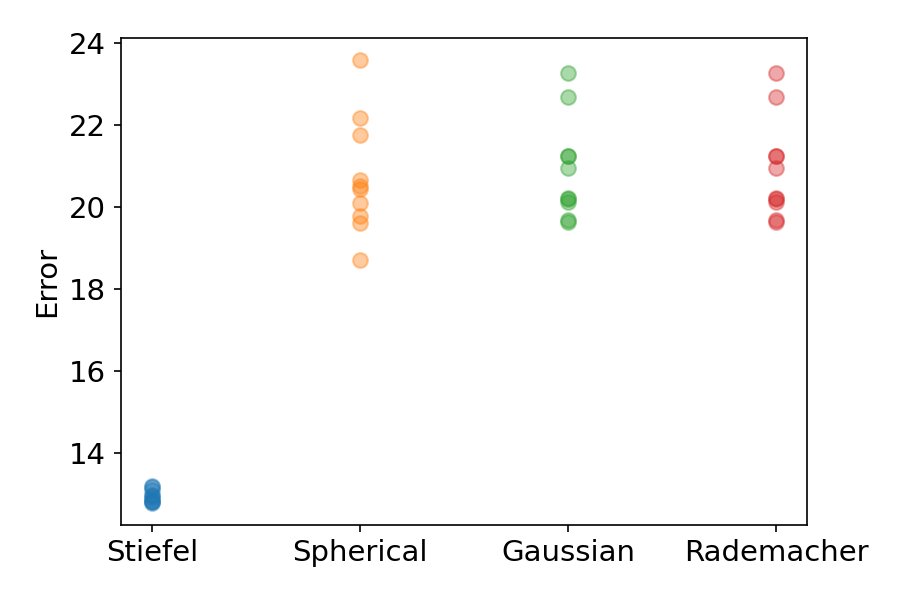}}
    \subfloat[$ x = \frac{\pi}{4} \mathbf{1}, \delta = 0.001, k = 400 $]{\includegraphics[width = 0.45\textwidth]{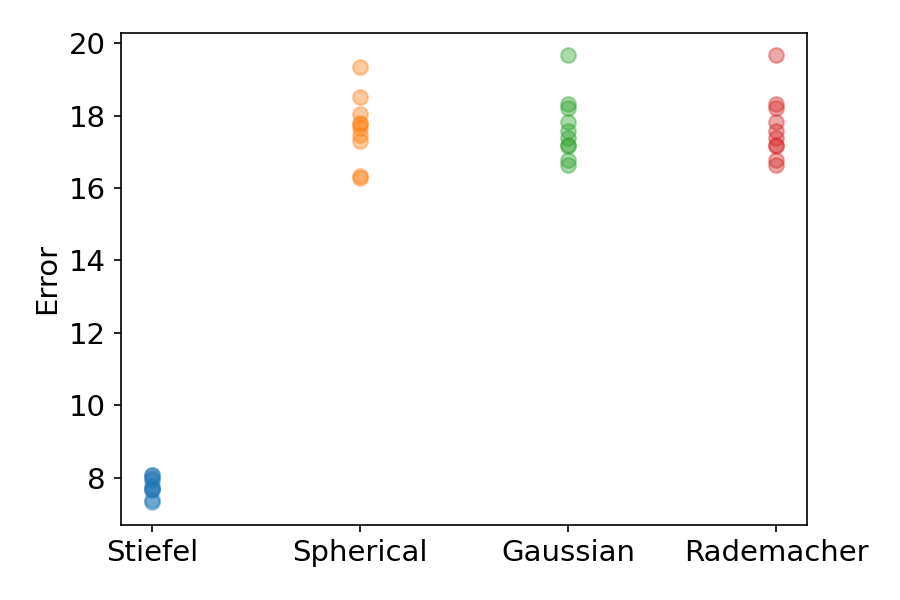}} 
    \caption{Errors of $ \wh{\nabla} f_k^\delta (x) $, $ \wh{\nabla} f_{k,B}^\delta (x) $, $ \wh{\nabla} f_{k,G}^\delta (x) $ on test function (Eq. \ref{eq:test}). Each subfigure corresponds to a different combination of the location for estimation $x$, the finite difference granularity $ \delta $, and number of random directions $k$. See caption of Figure \ref{fig:grad-compare-main} for detailed illustration. 
    }
    % , on different values of $(x, \delta, k)$. \label{fig:grad-compare-supp1}}
\end{figure}

\begin{figure}
    \centering
    \subfloat[$ x = 0, \delta = 0.1, k = 100 $]{\includegraphics[scale = 0.45]{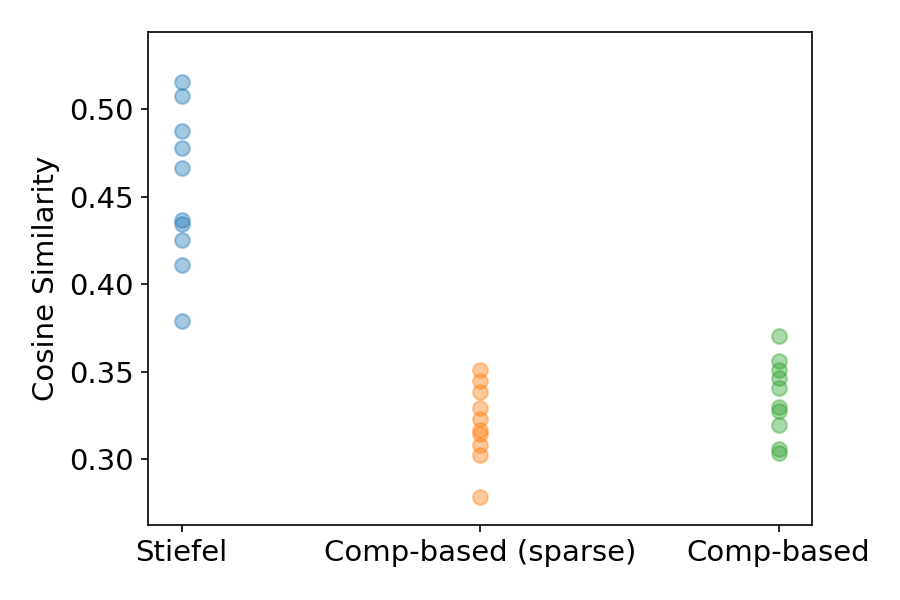}} 
    \subfloat[$ x = 0, \delta = 0.1, k = 200 $]{\includegraphics[scale = 0.45]{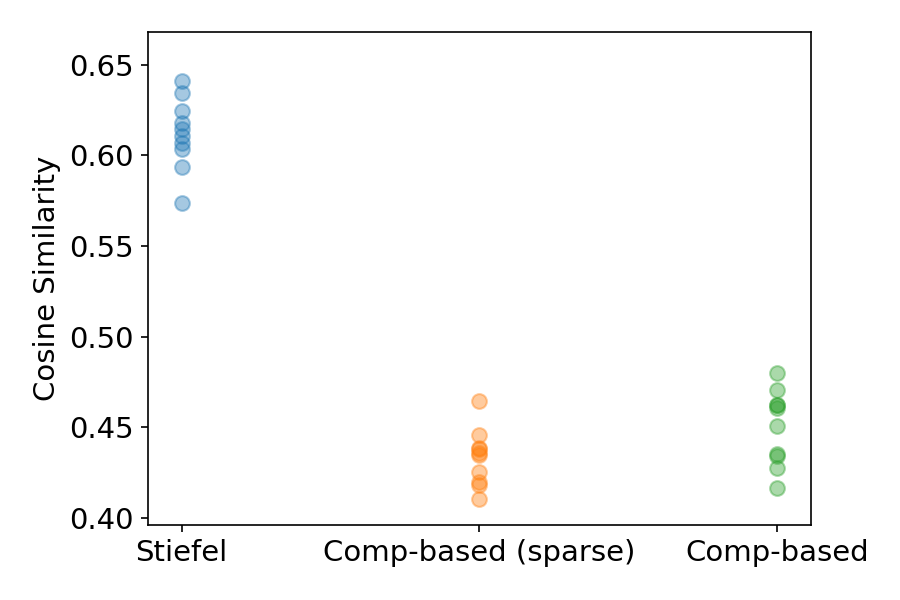}} \\
    \subfloat[$ x = 0, \delta = 0.01, k = 100 $]{\includegraphics[scale = 0.45]{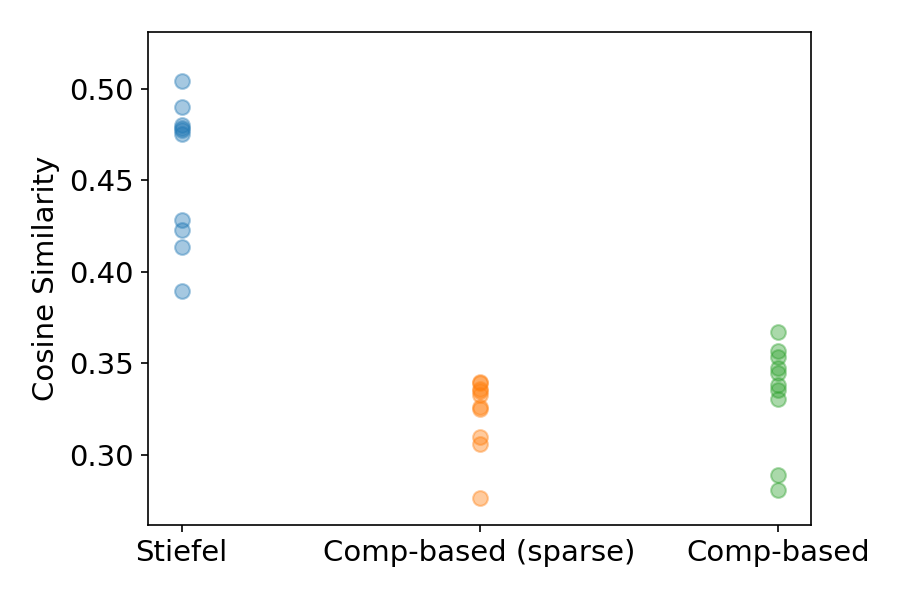}} 
    \subfloat[$ x = 0, \delta = 0.01, k = 200 $]{\includegraphics[scale = 0.45]{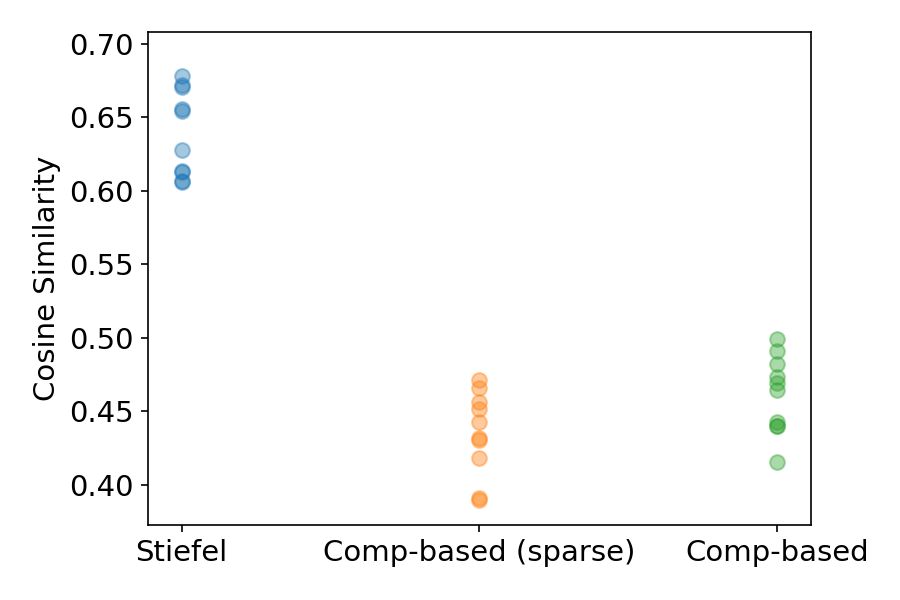}} \\
    \subfloat[$ x = 0, \delta = 0.001, k = 100 $]{\includegraphics[scale = 0.45]{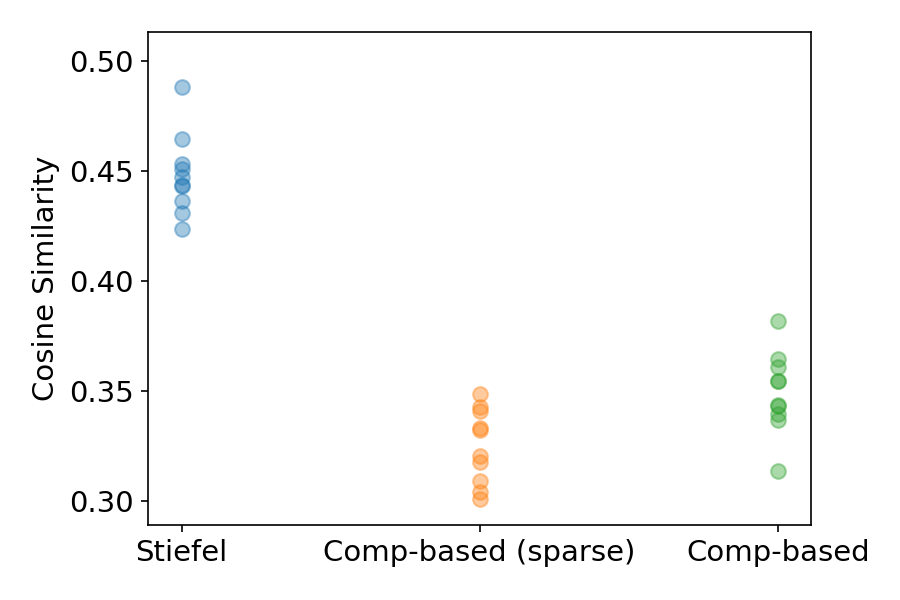}} 
    \subfloat[$ x = 0, \delta = 0.001, k = 200 $]{\includegraphics[scale = 0.45]{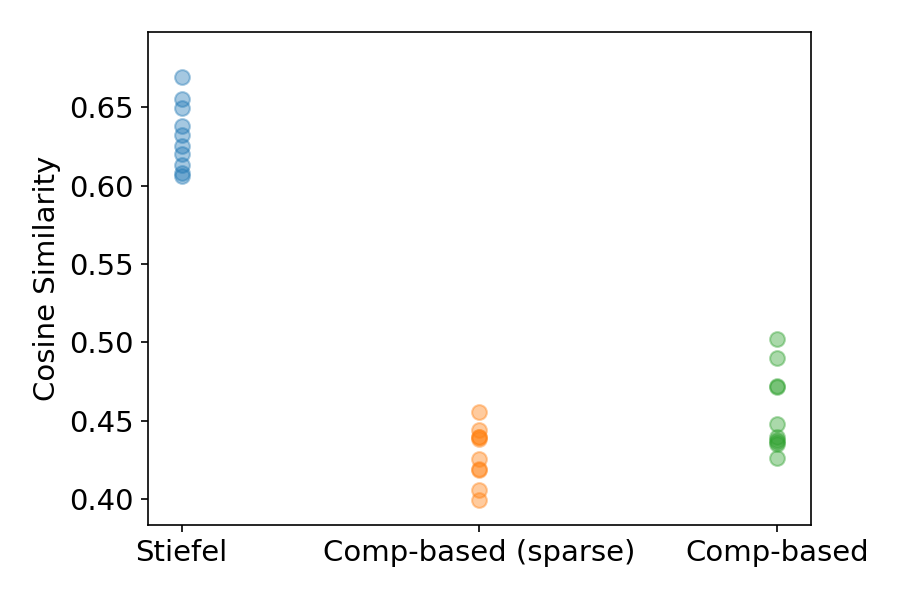}} 
    \caption{Cosine similarity between the gradient estimation and the true gradient, using the test function defined in (Eq. \ref{eq:test}). 
    Each subfigure corresponds to a different combination of the location for estimation $x$, the finite difference granularity $ \delta $, and number of random directions $k$. See the caption of Figure \ref{fig:grad-compare-main2} for detailed illustration.} 
\end{figure}

\begin{figure}
    \centering
    \subfloat[$ x = \frac{\pi}{4} \mathbf{1}, \delta = 0.1, k = 100 $]{\includegraphics[scale = 0.45]{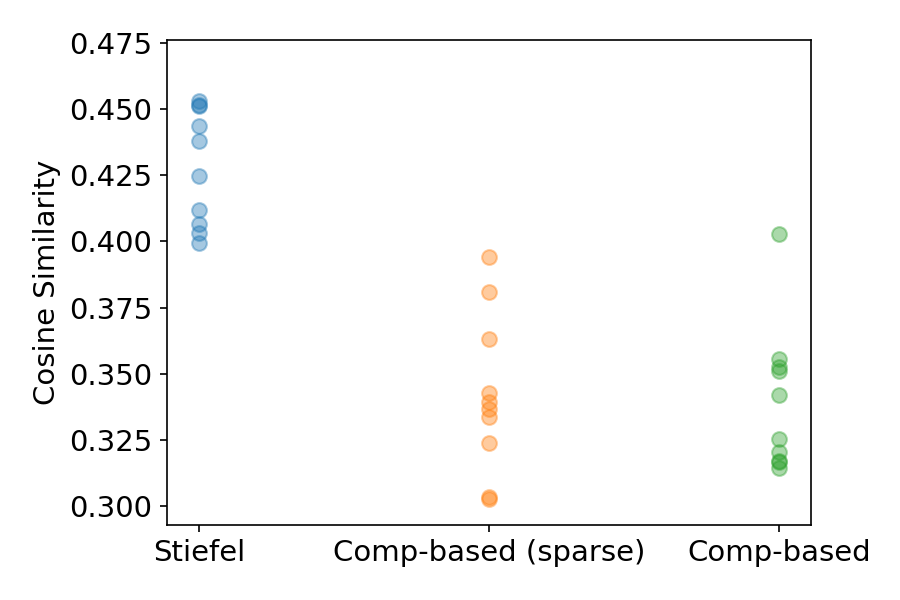}} 
    \subfloat[$ x = \frac{\pi}{4} \mathbf{1}, \delta = 0.1, k = 200 $]{\includegraphics[scale = 0.45]{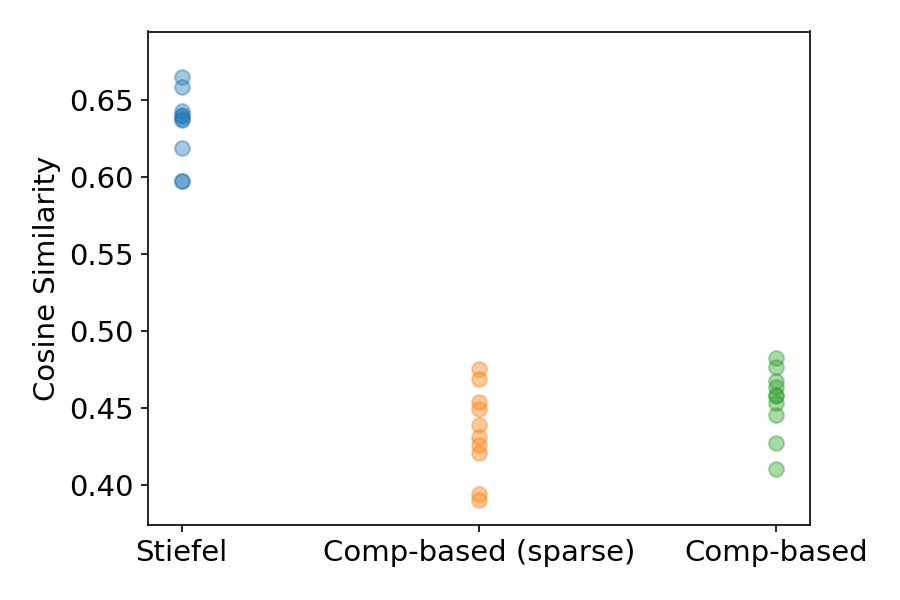}} \\
    \subfloat[$ x = \frac{\pi}{4} \mathbf{1}, \delta = 0.01, k = 100 $]{\includegraphics[scale = 0.45]{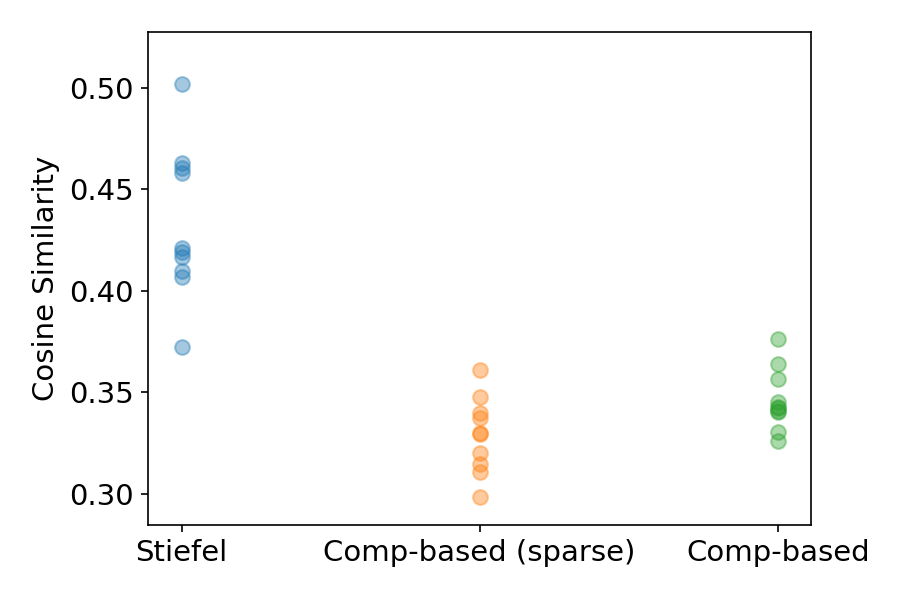}} 
    \subfloat[$ x = \frac{\pi}{4} \mathbf{1}, \delta = 0.01, k = 200 $]{\includegraphics[scale = 0.45]{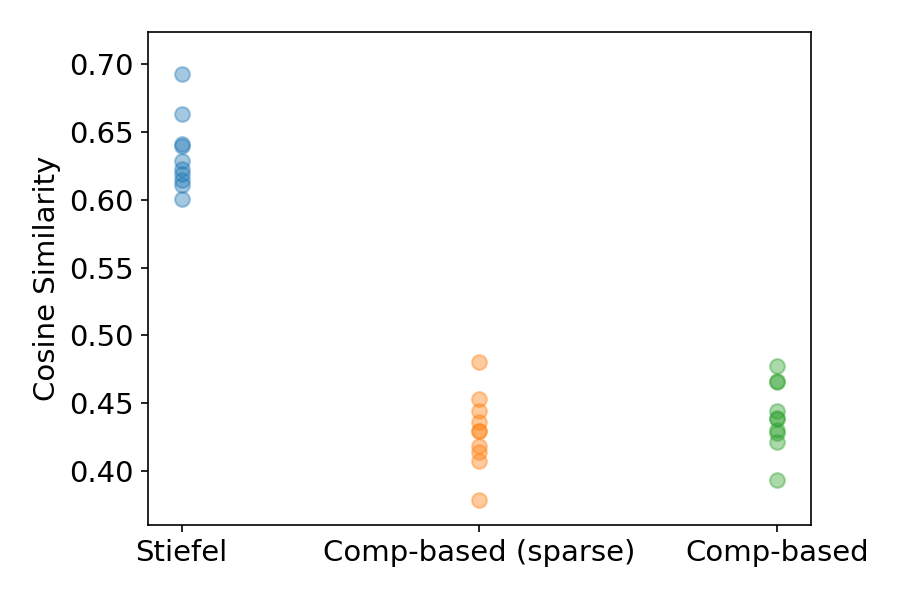}} \\
    \subfloat[$ x = \frac{\pi}{4} \mathbf{1}, \delta = 0.001, k = 100 $]{\includegraphics[scale = 0.45]{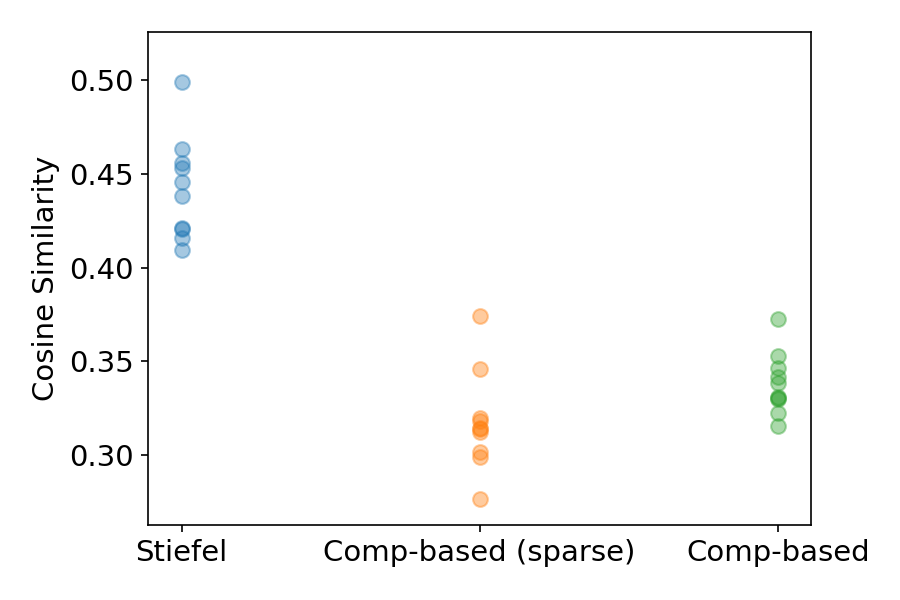}} 
    \subfloat[$ x = \frac{\pi}{4} \mathbf{1}, \delta = 0.001, k = 200 $]{\includegraphics[scale = 0.45]{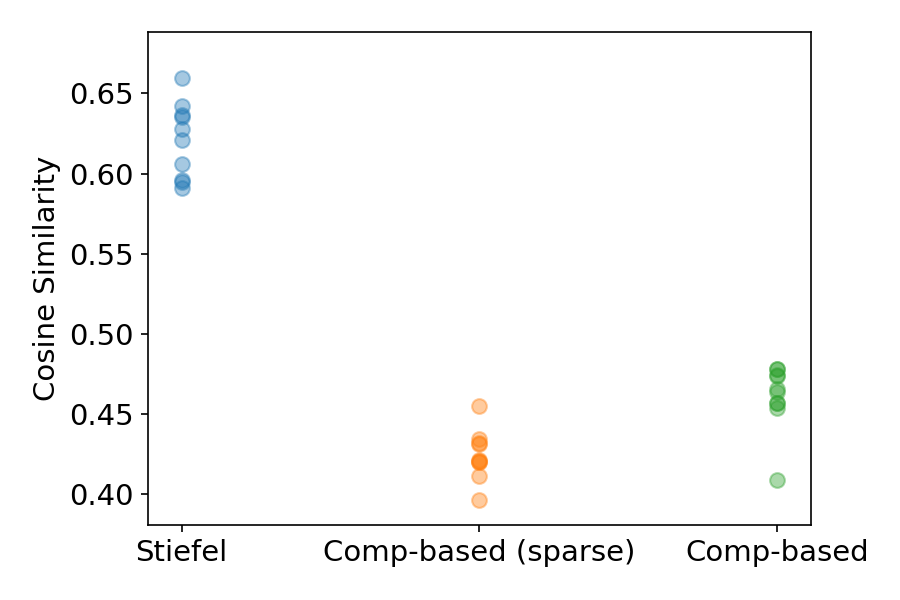}} 
    \caption{Cosine similarity between the gradient estimation and the true gradient, using the test function defined in (Eq. \ref{eq:test}). 
    Each subfigure corresponds to a different combination of the location for estimation $x$, the finite difference granularity $ \delta $, and number of random directions $k$. See the caption of Figure \ref{fig:grad-compare-main2} for detailed illustration.} 
\end{figure}

\begin{figure}
    \centering
    \subfloat[$ x = \frac{\pi}{4} \mathbf{1}, \delta = 0.1, k = 300 $]{\includegraphics[width = 0.48\textwidth]{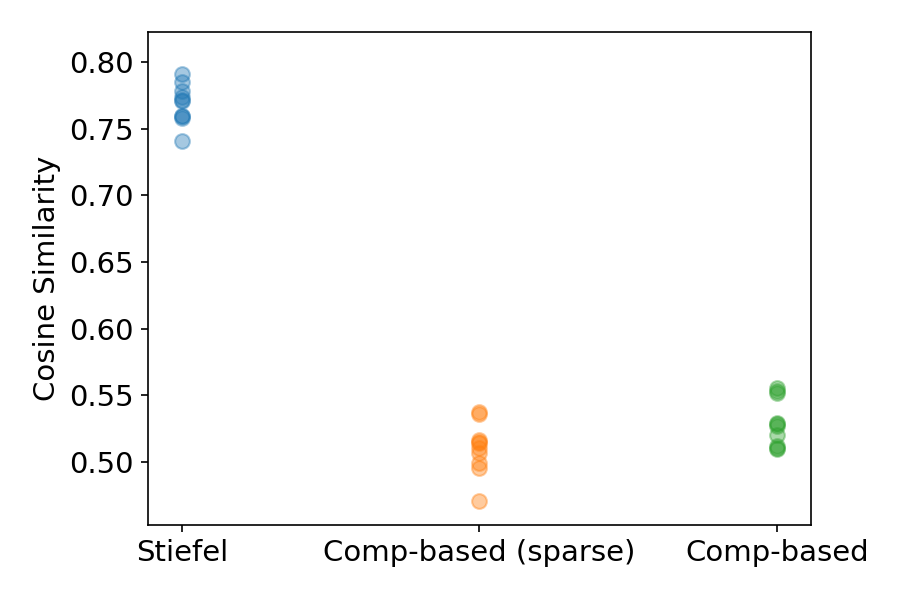}} 
    \subfloat[$ x = \frac{\pi}{4} \mathbf{1}, \delta = 0.1, k = 400 $]{\includegraphics[width = 0.48\textwidth]{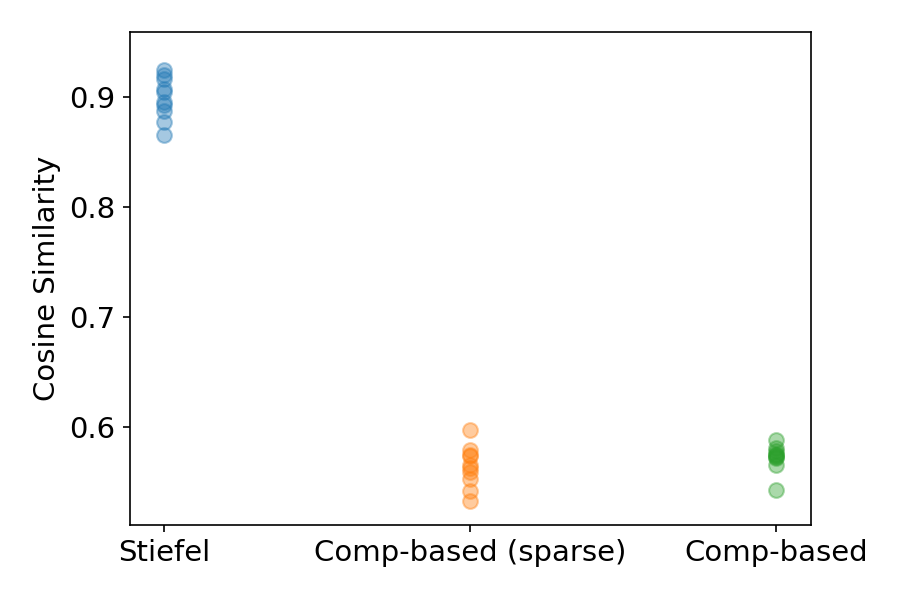}} \\
    \subfloat[$ x = \frac{\pi}{4} \mathbf{1}, \delta = 0.01, k = 300 $]{\includegraphics[width = 0.48\textwidth]{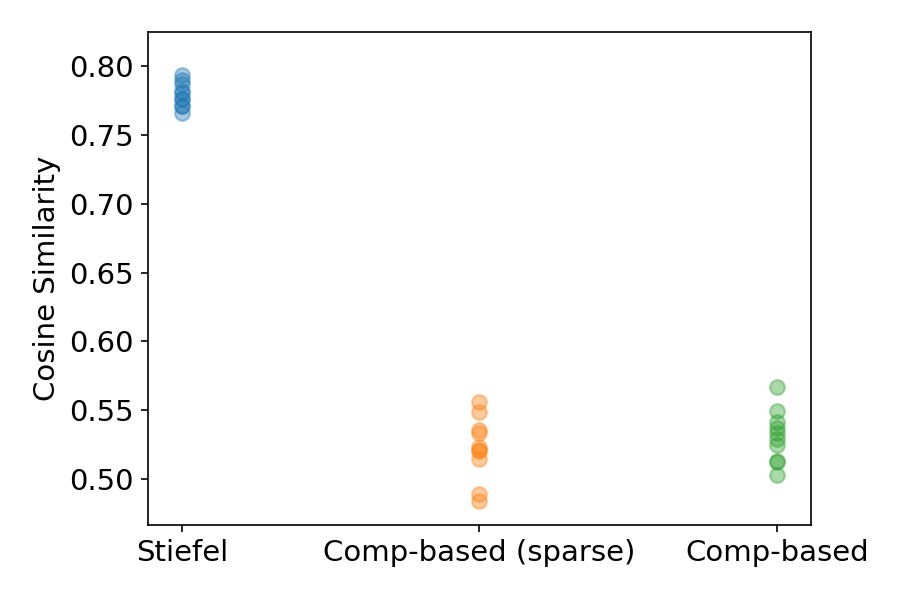}} 
    \subfloat[$ x = \frac{\pi}{4} \mathbf{1}, \delta = 0.01, k = 400 $]{\includegraphics[width = 0.48\textwidth]{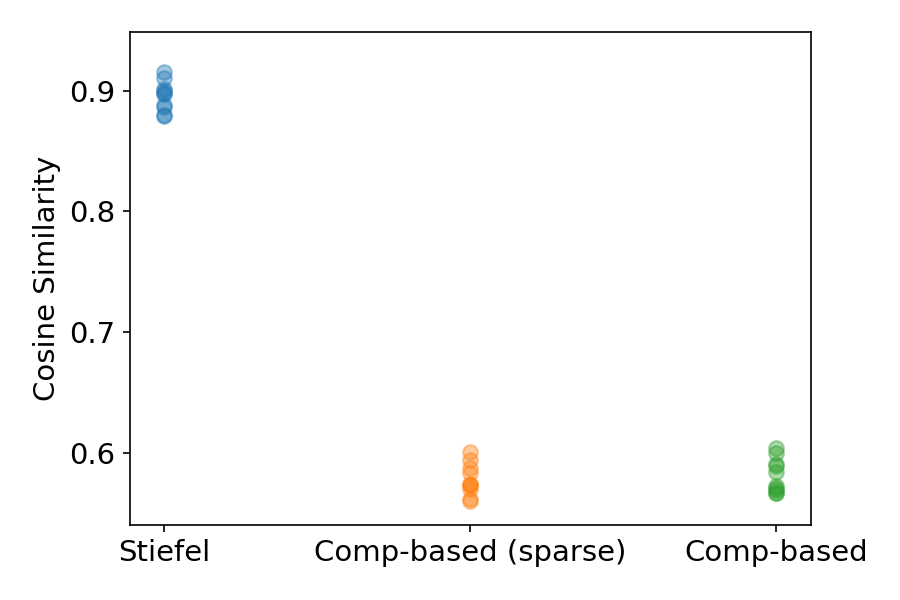}} \\
    \subfloat[$ x = \frac{\pi}{4} \mathbf{1}, \delta = 0.001, k = 300 $]{\includegraphics[width = 0.48\textwidth]{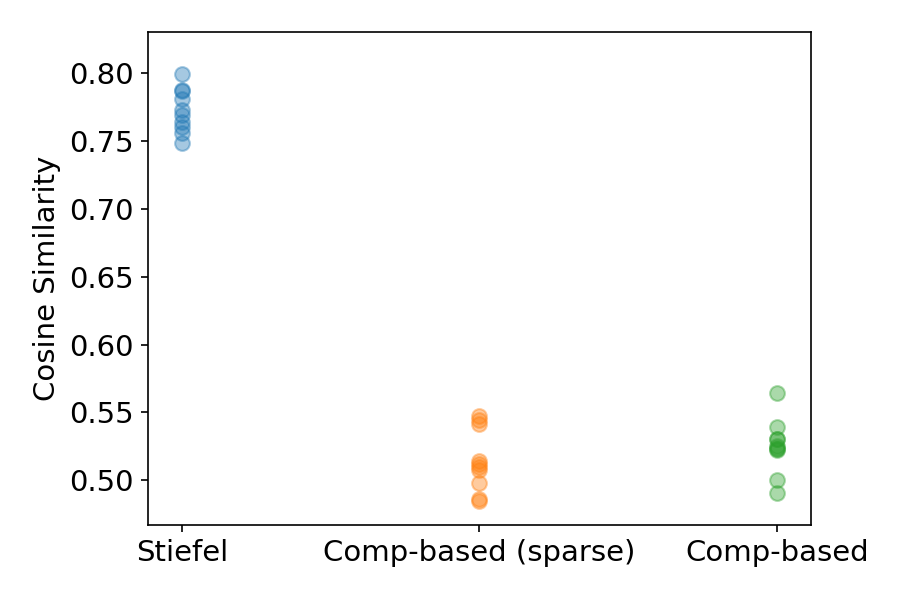}} 
    \subfloat[$ x = \frac{\pi}{4} \mathbf{1}, \delta = 0.001, k = 400 $]{\includegraphics[width = 0.48\textwidth]{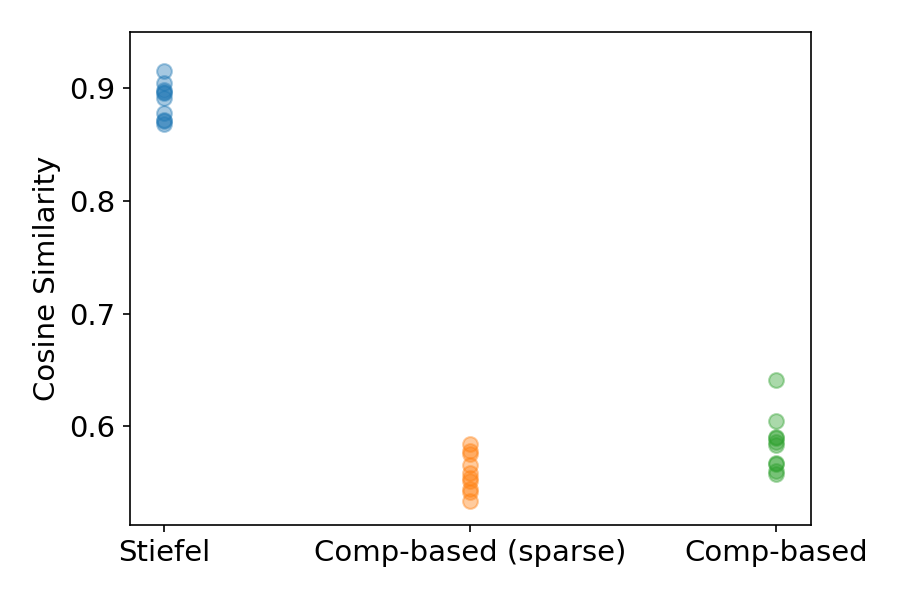}} 
    \caption{Cosine similarity between the gradient estimation and the true gradient, using the test function defined in (Eq. \ref{eq:test}). 
    Each subfigure corresponds to a different combination of the location for estimation $x$, the finite difference granularity $ \delta $, and number of random directions $k$. See the caption of Figure \ref{fig:grad-compare-main2} for detailed illustration.} 
\end{figure}

\begin{figure}[h] 
    \centering
    \subfloat[$ x = 0, \delta = 0.1, k = 20 $]{\includegraphics[width = 0.48\textwidth]{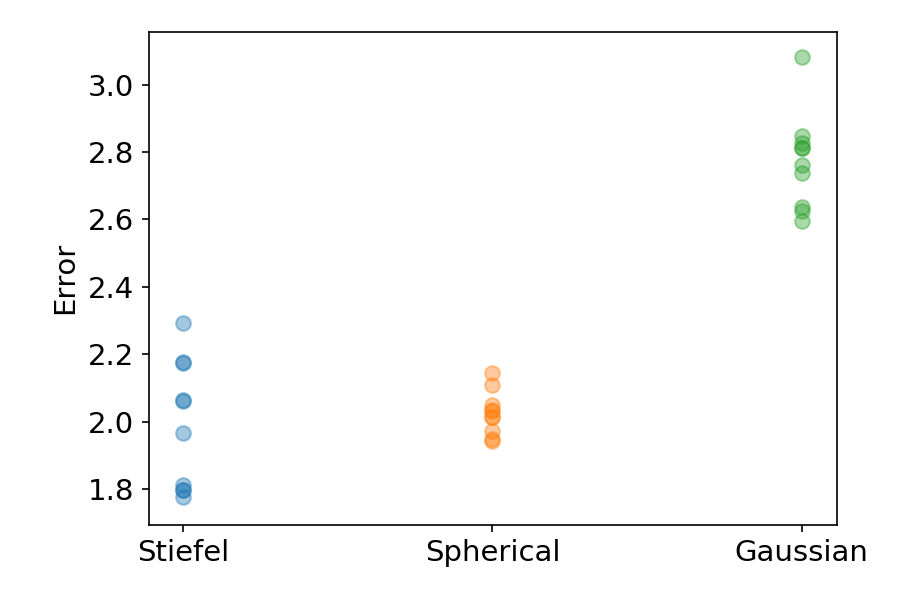}}
    \subfloat[$ x = 0, \delta = 0.1, k = 40 $]{\includegraphics[width = 0.48\textwidth]{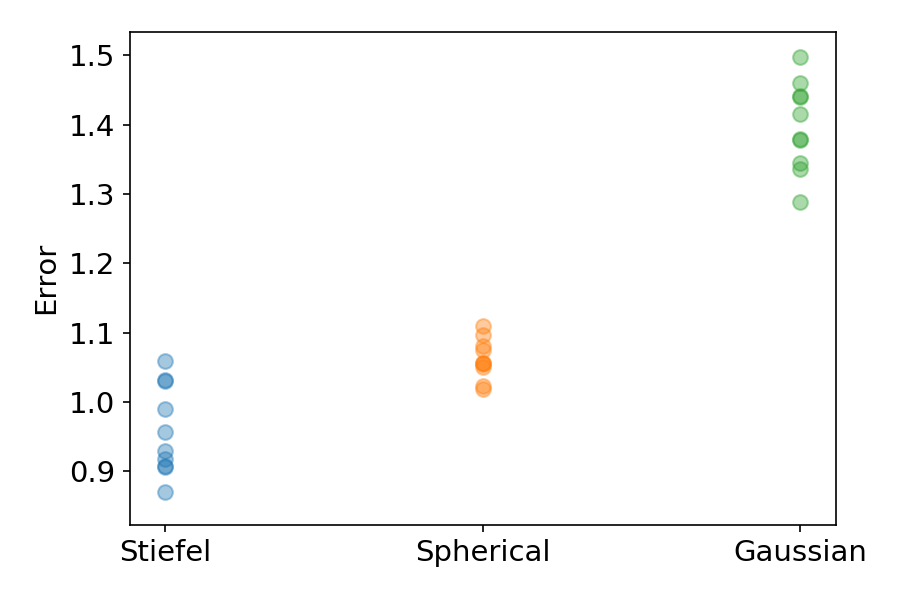}} \\
    \subfloat[$ x = 0, \delta = 0.01, k = 20 $]{\includegraphics[width = 0.48\textwidth]{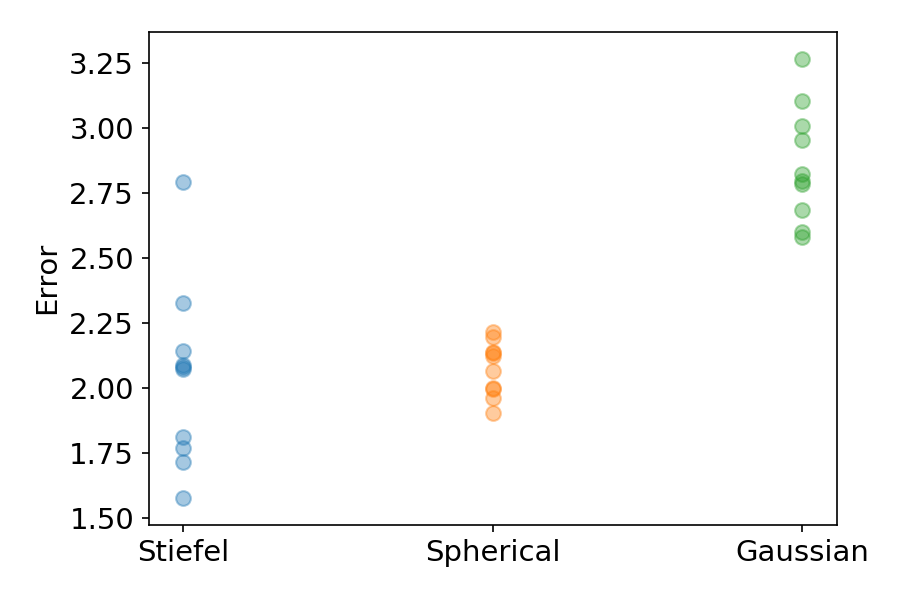}}
    \subfloat[$ x = 0, \delta = 0.01, k = 40 $]{\includegraphics[width =  0.48\textwidth]{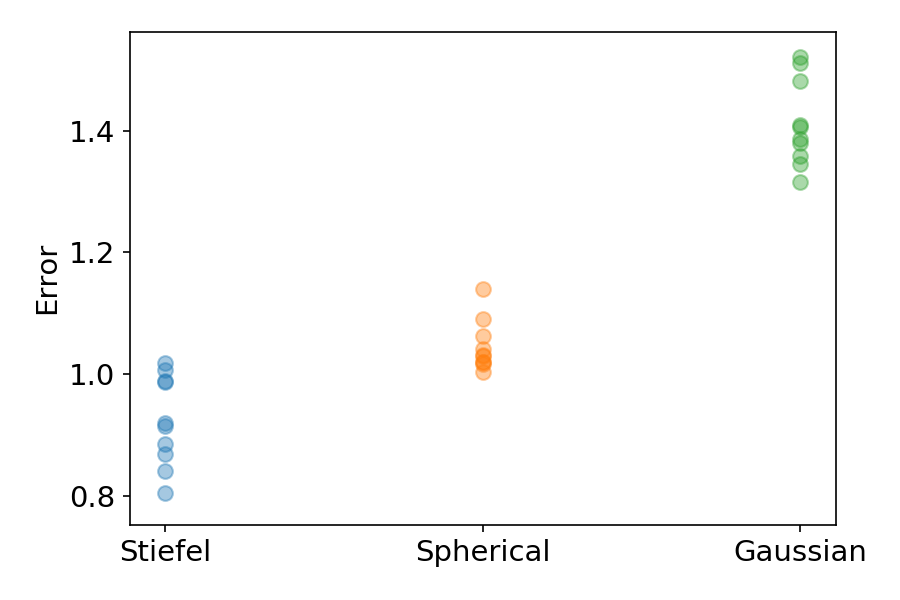}} \\ 
    \subfloat[$ x = 0, \delta = 0.001, k = 20 $]{\includegraphics[width = 0.48\textwidth]{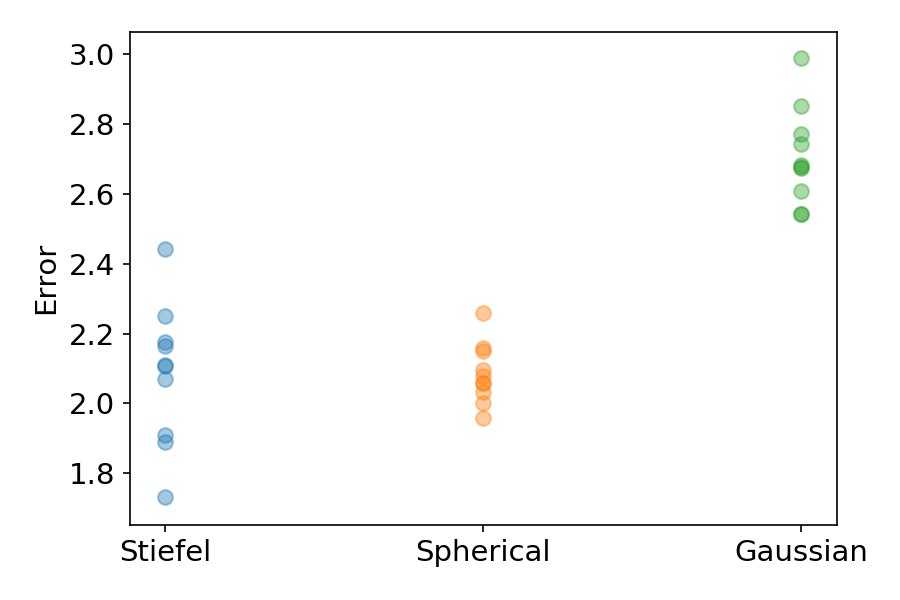}}
    \subfloat[$ x = 0, \delta = 0.001, k = 40 $]{\includegraphics[width = 0.48\textwidth]{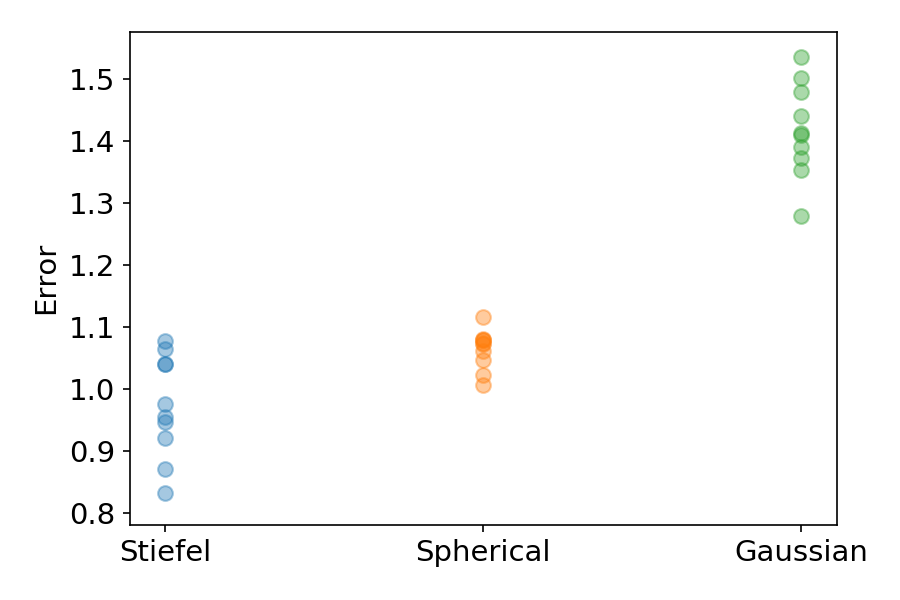}} 
    \caption{Errors of Hessian estimators on the test function defined in (Eq. \ref{eq:test}). 
    Each subfigure corresponds to a different combination of the location for estimation $x$, the finite difference granularity $ \delta $, and number of random directions $k$. See the caption of Figure \ref{fig:hess-compare-main} for detailed illustration. 
    \label{fig:hess-compare-app0}}
\end{figure}

\begin{figure}[h]
    \centering
    \subfloat[$ x = \frac{\pi}{4} \mathbf{1}, \delta = 0.1, k = 20 $]{\includegraphics[width = 0.48\textwidth]{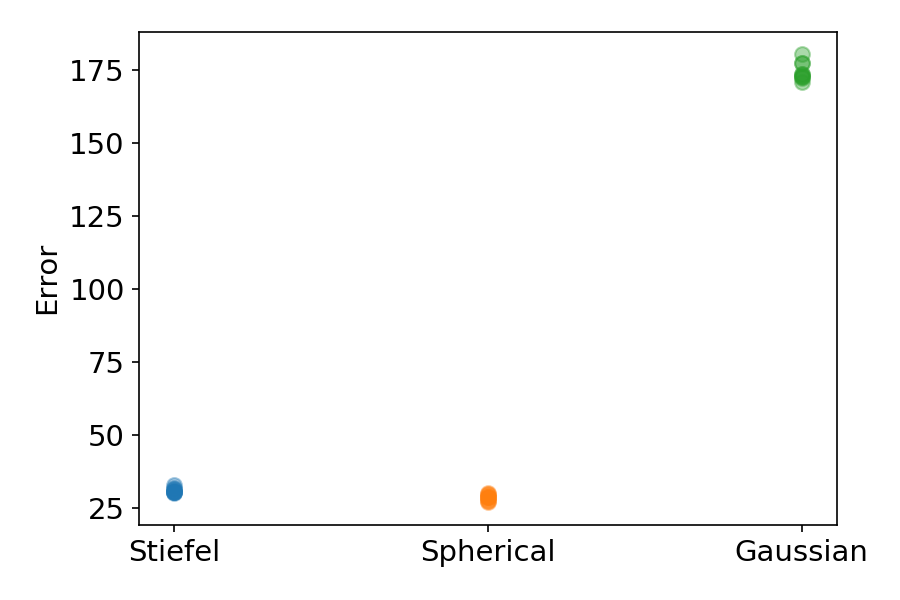}}
    \subfloat[$ x = \frac{\pi}{4} \mathbf{1}, \delta = 0.1, k = 40 $]{\includegraphics[width = 0.48\textwidth]{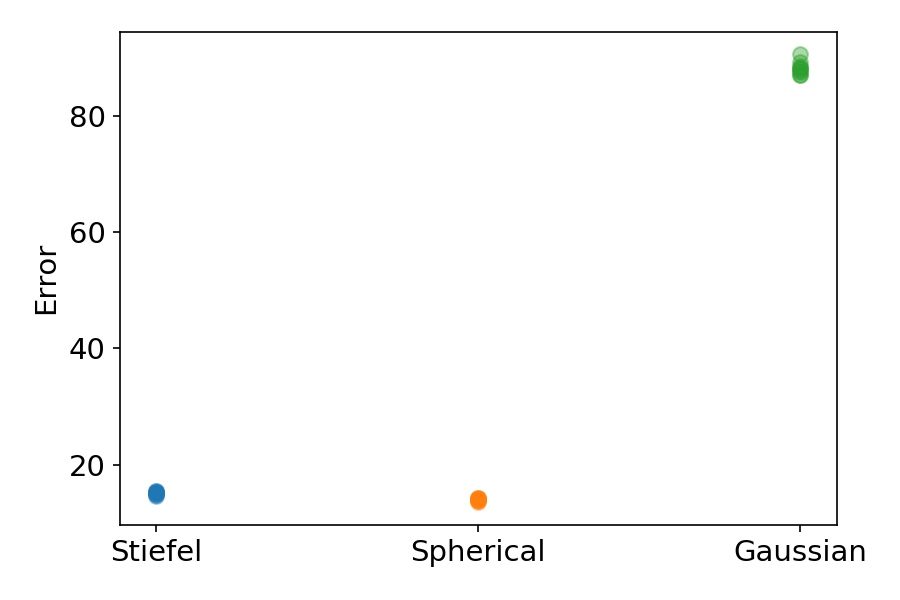}} \\ 
    \subfloat[$ x = \frac{\pi}{4} \mathbf{1}, \delta = 0.01, k = 20 $]{\includegraphics[width = 0.48\textwidth]{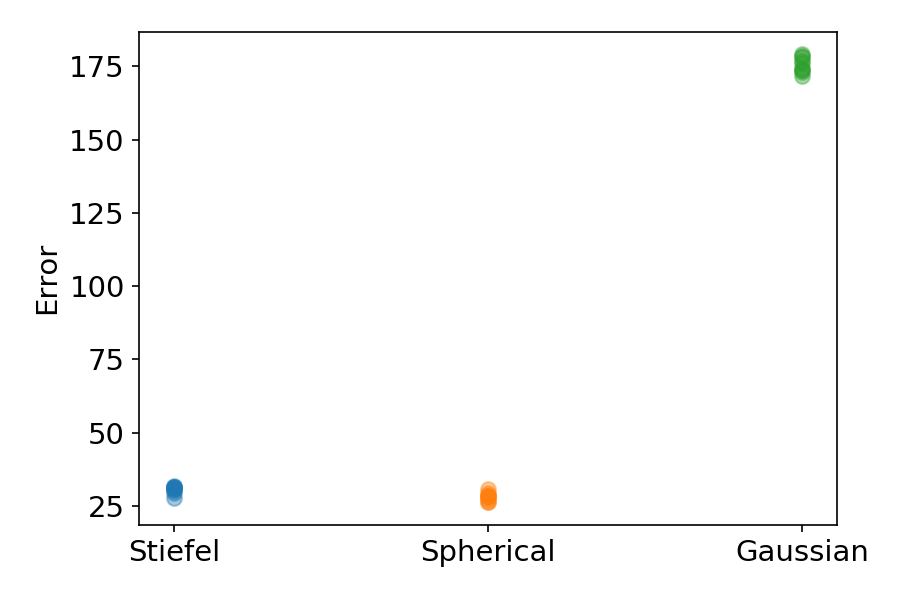}}
    \subfloat[$ x = \frac{\pi}{4} \mathbf{1}, \delta = 0.01, k = 40 $]{\includegraphics[width =  0.48\textwidth]{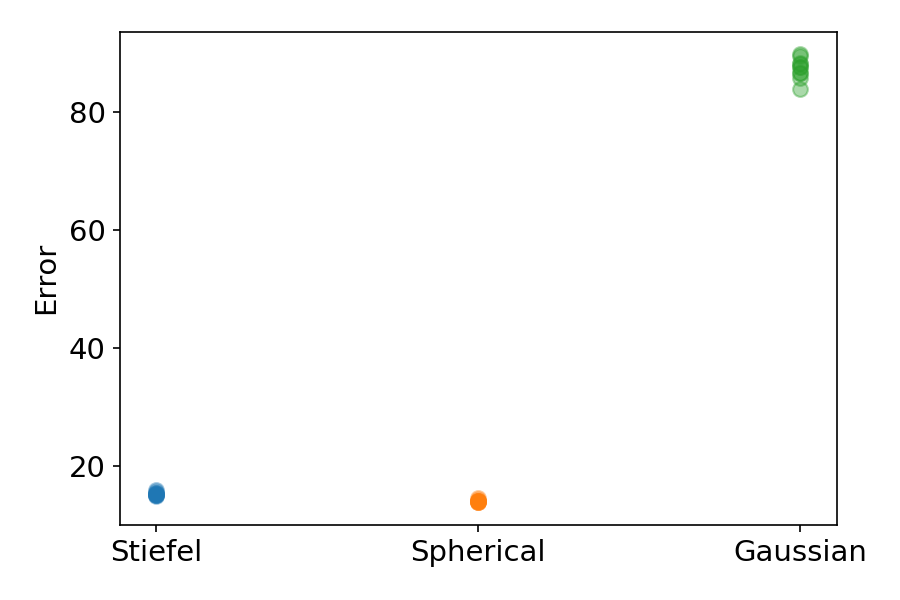}} \\ 
    \subfloat[$ x = \frac{\pi}{4} \mathbf{1}, \delta = 0.001, k = 20 $]{\includegraphics[width = 0.48\textwidth]{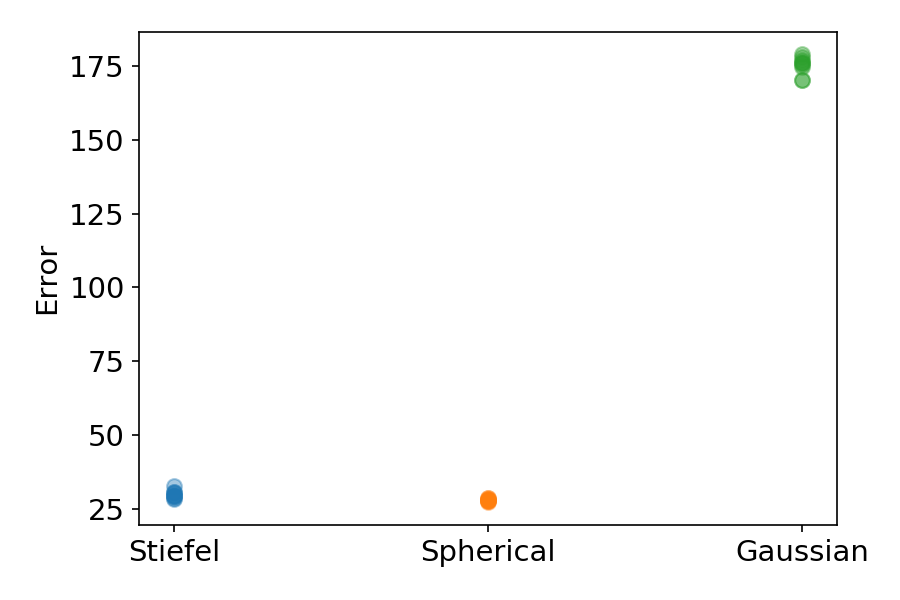}}
    \subfloat[$ x = \frac{\pi}{4} \mathbf{1}, \delta = 0.001, k = 40 $]{\includegraphics[width = 0.48\textwidth]{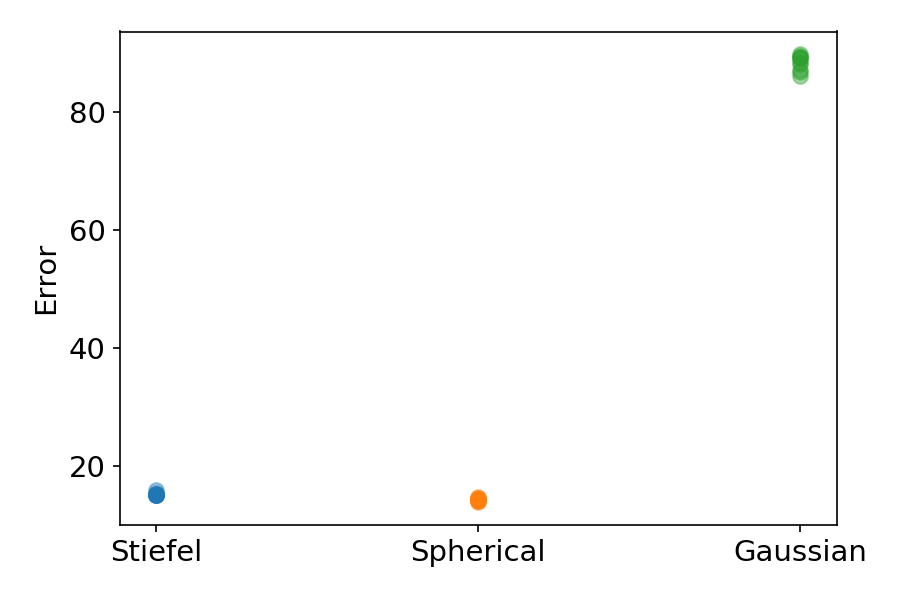}} 
    \caption{Errors of Hessian estimators on the test function defined in (Eq. \ref{eq:test}). 
    Each subfigure corresponds to a different combination of the location for estimation $x$, the finite difference granularity $ \delta $, and number of random directions $k$. This figure shows that when $k$ is much smaller than $n$, $\wh{\H} f_{k,S}^\delta (x)$ \citep{wang2022hess} can achieve same level of accuracy as $ \wh{\H} f_k^\delta (x) $. See the caption of Figure \ref{fig:hess-compare-main} for detailed illustration. 
    \label{fig:hess-compare-app1}}
\end{figure} 

\begin{figure}[h]
    \centering
    \subfloat[$ x = \frac{\pi}{4} \mathbf{1}, \delta = 0.1, k = 60 $]{\includegraphics[width = 0.48\textwidth]{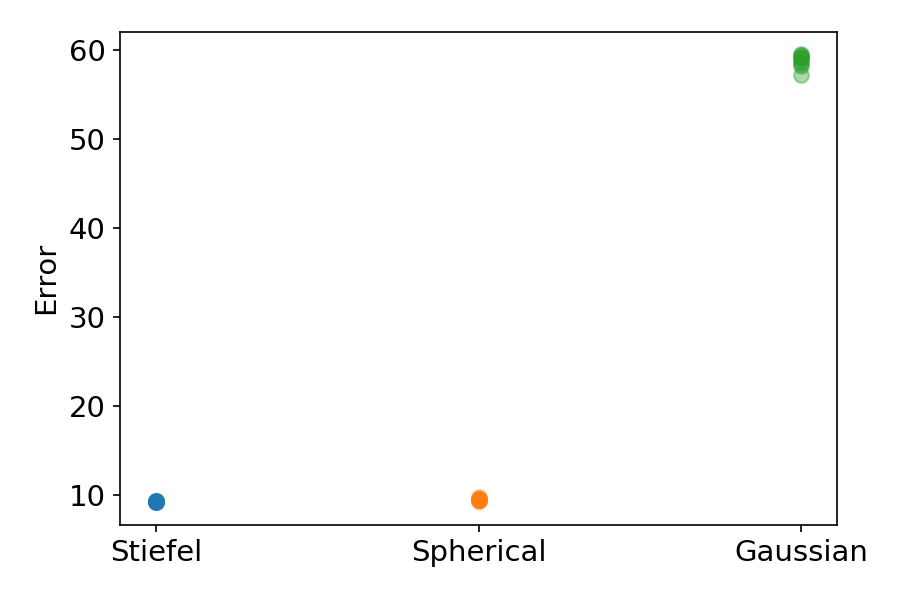}}
    \subfloat[$ x = \frac{\pi}{4} \mathbf{1}, \delta = 0.1, k = 80 $]{\includegraphics[width = 0.48\textwidth]{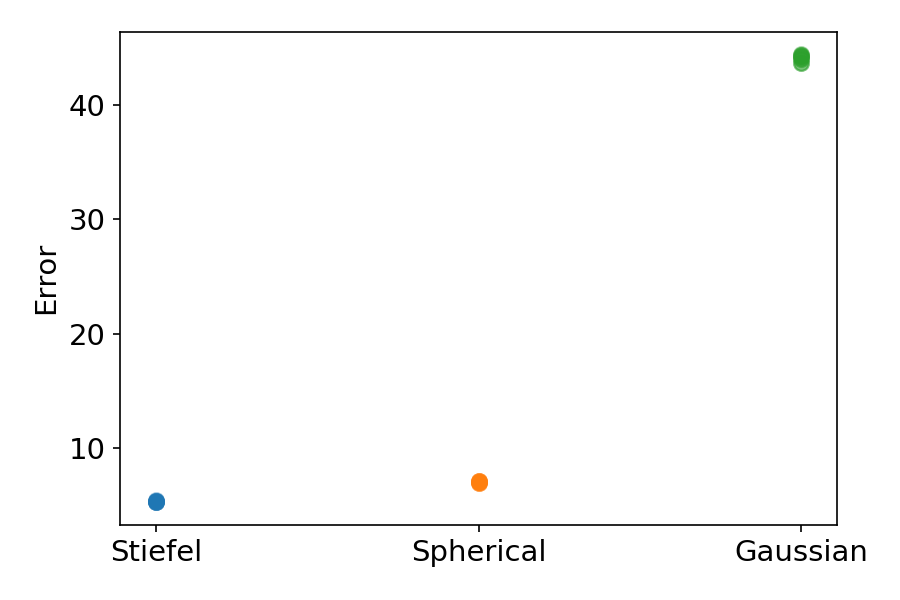}} \\
    \subfloat[$ x = \frac{\pi}{4} \mathbf{1}, \delta = 0.01, k = 60 $]{\includegraphics[width = 0.48\textwidth]{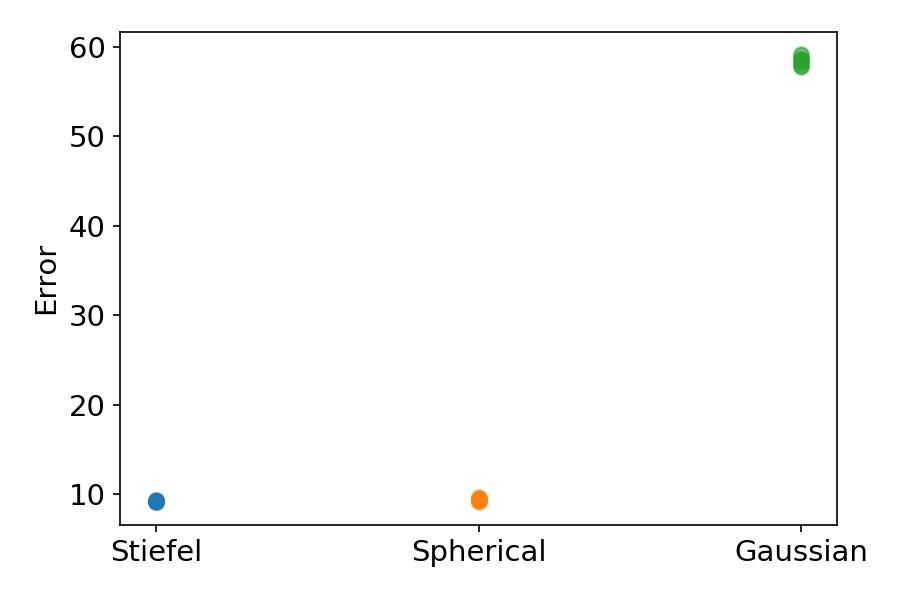}}
    \subfloat[$ x = \frac{\pi}{4} \mathbf{1}, \delta = 0.01, k = 80 $]{\includegraphics[width =  0.48\textwidth]{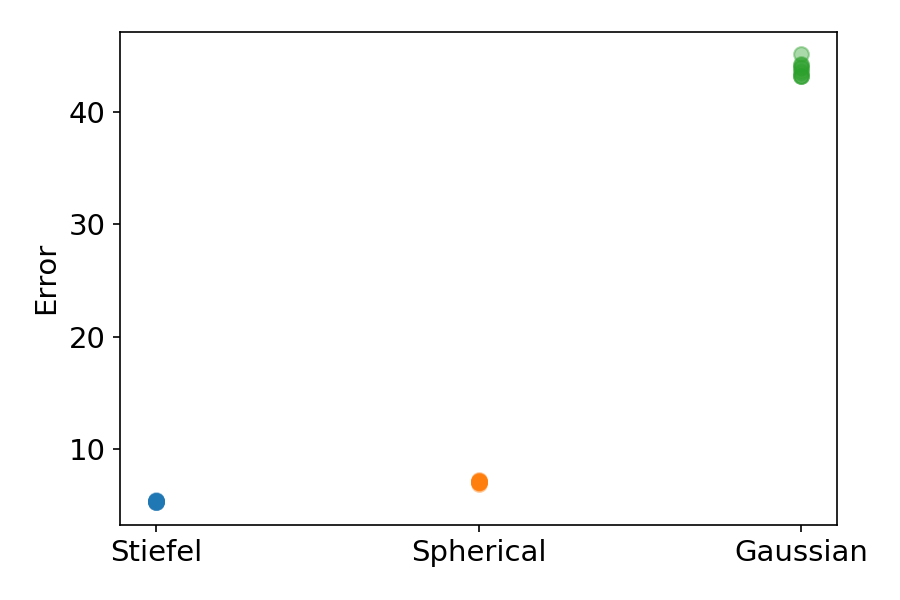}} \\ 
    \subfloat[$ x = \frac{\pi}{4} \mathbf{1}, \delta = 0.001, k = 60 $]{\includegraphics[width = 0.48\textwidth]{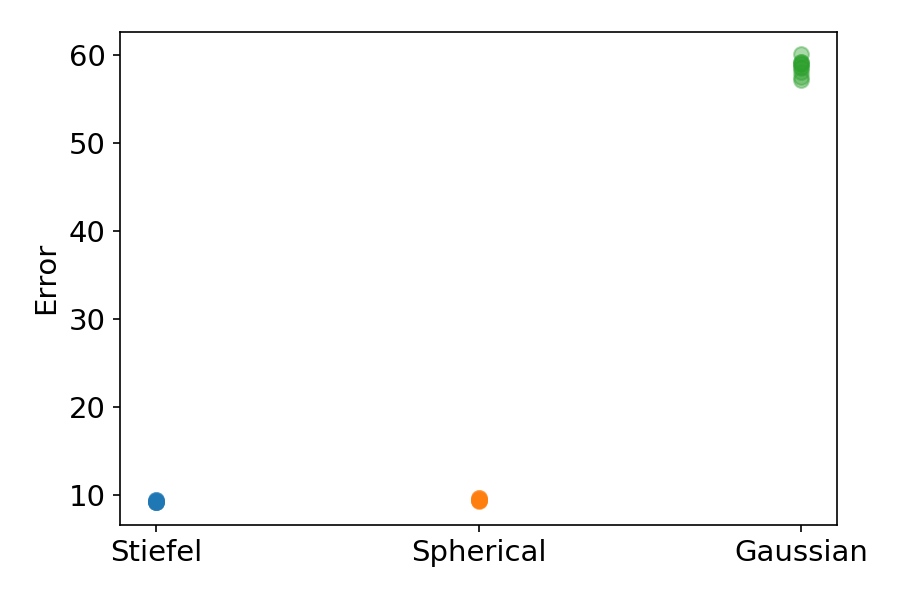}}
    \subfloat[$ x = \frac{\pi}{4} \mathbf{1}, \delta = 0.001, k = 80 $]{\includegraphics[width = 0.48\textwidth]{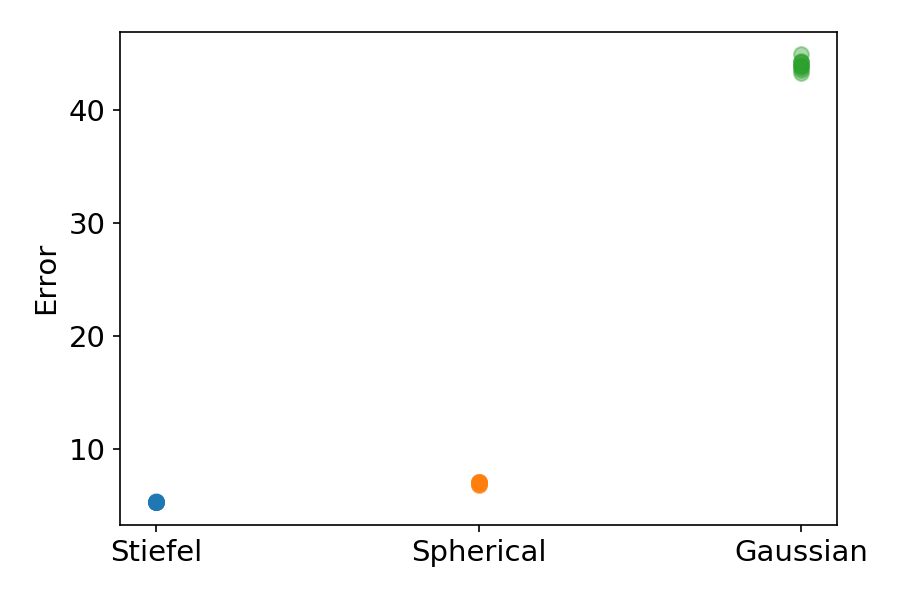}} 
    \caption{Errors of Hessian estimators on the test function defined in (Eq. \ref{eq:test}). 
    Each subfigure corresponds to a different combination of the location for estimation $x$, the finite difference granularity $ \delta $, and number of random directions $k$. See the caption of Figure \ref{fig:hess-compare-main} for detailed illustration. 
    \label{fig:hess-compare-app2}}
\end{figure}

\end{document}